\def\BibTeX{{\rm B\kern-.05em{\sc i\kern-.025em b}\kern-.08em
    T\kern-.1667em\lower.7ex\hbox{E}\kern-.125emX}}
\newtheorem{Thm}{Theorem}
\newtheorem{Lem}{Lemma}
\newtheorem{Asump}{Assumption}
\newtheorem{Prob}{Problem}
\newtheorem{Rmk}{Remark}
\newcommand{\setNbar}{\bar{\mathcal N}}
\begin{document}

%\title{An Optimization Framework for Federated Learning Algorithms in Edge Computing Systems}
\title{An Optimization Framework for Federated  Edge Learning}

\author{\IEEEauthorblockN{Yangchen Li, Ying Cui, and Vincent Lau}\thanks{Yangchen Li and Ying Cui are with Shanghai Jiao Tong University, China.
Vincent Lau is with HKUST, Hong Kong.
This paper will be presented in part at IEEE GLOBECOM 2021 \cite{GenQSGD-GC}.}}
\maketitle

\begin{abstract}
The optimal design of federated learning (FL) algorithms for solving general machine learning (ML) problems in practical edge computing systems with quantized message passing remains an open problem.
This paper considers an edge computing system where the server and workers have possibly different computing and communication capabilities and employ quantization before transmitting messages.
To explore the full potential of FL in such an edge computing system, we first present a general FL algorithm, namely GenQSGD, parameterized by the numbers of global and local iterations, mini-batch size, and step size sequence.
Then, we analyze its convergence for an arbitrary step size sequence and specify the convergence results under three commonly adopted step size rules, namely the constant, exponential, and diminishing step size rules.
Next, we optimize the algorithm parameters to minimize the energy cost under the time constraint and convergence error constraint, with the focus on the overall implementing process of FL.
Specifically, for any given step size sequence under each considered step size rule, we optimize the numbers of global and local iterations and mini-batch size to optimally implement FL for applications with preset step size sequences.
We also optimize the step size sequence along with these algorithm parameters to explore the full potential of FL.
The resulting optimization problems are challenging non-convex problems with non-differentiable constraint functions.
We propose iterative algorithms to obtain KKT points using  general inner approximation (GIA) and tricks for solving complementary geometric programming (CGP).
Finally, we numerically demonstrate the remarkable gains of GenQSGD with optimized algorithm parameters over existing FL algorithms and reveal the significance of optimally designing general FL algorithms.
\end{abstract}

\begin{IEEEkeywords}
Federated learning, stochastic gradient descent, quantization,  convergence analysis, optimization.
\end{IEEEkeywords}

\setcounter{page}{1}
\newpage
\section{Introduction}
With the development of mobile Internet and the Internet of Things (IoT), a massive amount of data is generated at the edge of wireless networks and stored in a distributed manner.
Leveraging on the emerging machine learning (ML) technologies, these distributed databases, usually containing privacy-sensitive data, can be used to train models for intelligent applications,
such as keyboard search suggestions, human activity recognition, human mobility prediction, ranking browser history suggestions, and patient clustering.
Thus, it may be impossible or undesirable to upload distributed databases to a central server due to energy and bandwidth limitations or privacy concerns.
Recent years have witnessed the growing interest in federated learning (FL) in edge computing systems, also referred to as federated edge learning,
where the server periodically updates the global model by aggregating and averaging the local models trained and uploaded by the workers \cite{EdgeFL,EdgeIntelligence}.
Therefore, FL can successfully protect data privacy for privacy-sensitive applications and improve communication efficiency.

Typical FL algorithms, such as Parallel Mini-batch SGD (PM-SGD) \cite{PMSGD}, Federated Averaging (FedAvg) \cite{FedAvg}, and Parallel Restarted SGD (PR-SGD) \cite{YuHao}, are designed based on stochastic gradient descent (SGD) algorithms\footnote{Some recent proposed FL algorithms are based on stochastic successive convex approximation (SSCA) \cite{YCC-SSCA}, which are out of the scope of this paper.}
and are usually parameterized by the numbers of global and local iterations, mini-batch size, and step size sequence.
Specifically, in PM-SGD, each worker utilizes one mini-batch in each local iteration and conducts only one local iteration within a global iteration;
in FedAvg, participated workers utilize all their local samples and conduct the same number of local iterations within a global iteration;
and in PR-SGD, each worker
%utilizes only one local sample in each local iteration and
conducts multiple local iterations within a global iteration.
In addition, \cite{PMSGD} and \cite{FedAvg} adopt a constant step size when implementing global iterations,
and \cite{PMSGD} analyzes the convergence of PM-SGD for convex ML problems;
\cite{YuHao} adopts both constant and time-varying step size sequences when implementing global iterations and analyzes the convergence of PR-SGD for non-convex ML problems.

The main limitations of the above FL algorithms lie in the following two aspects.
Firstly, \cite{PMSGD,FedAvg,YuHao} assume that the server and workers send accurate model updates, which may contain a large number of information bits.
Due to limited communication resources, accurate model updates may not be delivered timely.
To address this issue, \cite{FedPAQ} and \cite{UVeQFed} propose to quantize the local model updates before transmitting them to the server.
In particular, in \cite{FedPAQ}, the authors propose an FL algorithm, namely FedPAQ, where the participated workers employ general random quantization on the local model updates;
in \cite{UVeQFed}, the authors propose a quantization scheme, namely UVeQFed, for the local model updates generated by FedAvg.
Note that \cite{FedPAQ,UVeQFed} assume that the server sends the exact model updates,
which may consume a lot of transmission time and energy.
%which may still waste communication resources.
Secondly, in \cite{PMSGD,FedAvg,YuHao,FedPAQ,UVeQFed}, some algorithm parameters (such as the step size sequences in \cite{FedAvg,YuHao,FedPAQ,UVeQFed}) are usually chosen empirically and experimentally, and some (such as the number of local iterations for all workers in \cite{PMSGD}) are fixed.
Choosing parameters via numerous experiments may be time-consuming in practice and does not have any theoretical guarantee.
To deal with this issue, the authors in \cite{AlgDes1-1,ResAlls2-1,ResAlls3-2,ResAlls4-1} consider the optimization of the algorithm parameters to minimize the convergence error\cite{AlgDes1-1}, energy consumption \cite{ResAlls2-1}, completion time \cite{ResAlls3-2}, and {the} weighted sum of the energy consumption and completion time \cite{ResAlls4-1}, respectively.
Unfortunately, the results in \cite{AlgDes1-1,ResAlls2-1,ResAlls3-2,ResAlls4-1} hold only for strongly-convex ML problems and accurate model updates, as the formulations rely on the convergence errors derived under these restrictions.
However, in many ML applications, the training problems are non-convex.

In summary, there are two issues unresolved in designing FL algorithms for general (not necessarily convex) ML problems in the presence of quantization errors for global and local model updates.
First, the convergence analysis of an FL algorithm with arbitrary step size rules and algorithm parameters is unknown.
Second, the optimal choice of FL algorithm parameters for reducing the convergence error as well as computing and communication costs when implementing FL in practical edge computing systems is open.
This paper aims to shed some light on the two fundamental problems.
Specifically, we consider a practical edge computing system where the server and workers may have different computing and communication capabilities and employ quantization on their model updates.
We concentrate on the overall implementing process of FL and optimize the algorithm parameters of a general FL algorithm for general ML problems.
Note that short time-scale optimal computing and communication resource allocation for one global iteration \cite{ResAlls1-1,ResAlls2-2,ResAlls2-3,ResAlls3-1,MIMO} is out of the scope of this paper.
The main contributions of this paper are summarized below.

\begin{itemize}
\item[$\bullet$]\textbf{General FL Algorithm with Quantized Message Passing:} We present a general quantized parallel mini-batch SGD algorithm, namely GenQSGD,
    where the server and all workers send quantized model updates to effectively adapt to the communication capabilities of the server and all workers. GenQSGD is parameterized by the numbers of global and local iterations, mini-batch size, and step size sequence,
    which can be flexibly chosen to adequately adapt to the computing capabilities of the server and all workers and effectively improve the convergence speed.
    On the contrary, the server \cite{PMSGD,FedAvg,YuHao,FedPAQ} and workers \cite{PMSGD,FedAvg,YuHao} send accurate model updates.
    Besides, some algorithm parameters are fixed in \cite{PMSGD,FedAvg,YuHao,FedPAQ}, and the quantization parameters for all workers are identical in \cite{FedPAQ}.
\item[$\bullet$]{\textbf{General Convergence Analysis:}} We analyze the convergence of GenQSGD with an arbitrary step size sequence and specify the results under three commonly adopted step size rules, i.e., the constant step size rule \cite{SubgradientMethod,NonlinearProgramming}, exponential step size rule \cite{ExpStepSize}, and diminishing step size rule \cite{SubgradientMethod,NonlinearProgramming}.
    {In contrast, the convergences of PM-SGD and PR-SGD {are characterized only} for accurate model updates in \cite{PMSGD} and \cite{YuHao} respectively, and the convergence of FedPAQ {is analyzed solely for specific step size rules} in \cite{FedPAQ}.}
\item[$\bullet$]\textbf{Optimization of Algorithm Parameters for Fixed Step Size Sequences:} Considering applications with preset step size sequences, we optimize the numbers of global and local iterations and mini-batch size for a fixed step size sequence under each of the three step size rules to minimize the energy cost under the time constraint and convergence error constraint.
    The three optimization problems are challenging non-convex problems with non-differentiable constraint functions.
    We propose iterative algorithms to solve these problems using general inner approximation (GIA) \cite{GIA} and tricks for solving complementary geometric programming (CGP) \cite{CGP}.
    We also characterize their convergences to KKT points.
\item[$\bullet$]\textbf{Optimization of All Algorithm Parameters:} Aiming to explore the full potential of FL, we optimize the numbers of global and local iterations, mini-batch size, and step size sequence to minimize the energy cost under the time constraint, convergence error constraint, and step size constraint.
    Due to the presence of a dimension-varying vector variable, the corresponding non-convex optimization problem with non-differentiable constraint functions is even more complicated than those for fixed step size sequences.
    To tackle this extra challenge,
    we first show that the constant step size rule achieves the minimum convergence error among all step size rules.
    Then, based on this property, we equivalently transform the original problem to a more tractable one and propose an iterative algorithm to obtain a KKT point using GIA and CGP.
    Note that in \cite{PMSGD,FedAvg,YuHao}, some algorithm parameters are chosen via time-consuming experiments, which do not have any theoretical guarantees.
\item[$\bullet$]\textbf{Numerical Results:} We numerically demonstrate the convergence of GenQSGD
    and illustrate that the optimization-based GenQSGD can achieve a trade-off among the time cost, energy cost, and convergence error.
    We also numerically show remarkable gains of the proposed GenQSGD with optimized algorithm parameters over existing FL algorithms.
\end{itemize}

{\bf{Notation}}:
We use boldface letters (e.g., $\mathbf x$), non-boldface letters (e.g., $x$ or $X$), and calligraphic letters (e.g., $\mathcal X$) to represent vectors, scalar constants, and sets, respectively.
$\|\cdot\|_2$, $\mathbb E[\cdot]$, and $\mathbb I[\cdot]$ represent the $l_2$-norm, expectation, and indicator function, respectively.
The set of real numbers, positive real numbers, and positive integers are denoted by $\mathbb R$, $\mathbb R_+$, and $\mathbb Z_+$, respectively.
All-ones vector is denoted by $\mathbf 1$.

\section{System Model}
We consider an edge computing system consisting of one server and $N$ workers, which are connected via wireless links.
Let $0$ and $\mathcal N\triangleq\{1,2,\cdots,N\}$ denote the server index and the set of worker indices, respectively.
For ease of exposition, we also denote $\setNbar\triangleq\{0\}\cup\mathcal N$.
We assume that each worker $n\in\mathcal N$ holds $I_n$ samples,
denoted by $\xi_i,i\in\mathcal I_n$ with $\left|\mathcal I_n\right|=I_n$.
Note that $\xi_i,i\in\mathcal I_n$ can be viewed as the realizations of a random variable, denoted by $\zeta_n$.
The server and $N$ workers aim to collaboratively train a global model by solving an ML problem based on the local data stored on the $N$ workers.
The global model is parameterized by a $D$-dimensional vector ${\bf x}\in \mathbb R^{D}$.
Specifically, for a given $\mathbf x\in\mathbb R^D$, define the loss incurred by $\zeta_n$ as $F({\mathbf x};\zeta_n)$ and define the expected loss as $f_n({\mathbf x})\triangleq\mathbb E\left[F({\mathbf x};\zeta_n)\right]$, with the expectation taken with respect to the distribution of $\zeta_n$, for all $n\in\mathcal N$.
Then, the expected risk function $f:\mathbb R^{D}\rightarrow\mathbb R$ of the model parameters ${\mathbf x}\in\mathbb R^{D}$ is defined as:
\begin{align}\label{eq:expected_loss}
f({\mathbf x})\triangleq\frac{1}{N}\sum_{n\in\mathcal N}f_n({\mathbf x}).
\end{align}
To be general, we do not assume $f({\bf x})$ to be convex.
Our goal is to minimize the expected risk function with respect to the model parameters ${\bf x}$ in the edge computing system.
\begin{Prob}[ML Problem]\label{Prob1}
\begin{align}\label{Obj_ML}
f^*\triangleq\min_{{\mathbf x}}f({\bf x})
\end{align}
where $f({\mathbf x})$ is given by \eqref{eq:expected_loss}.\footnote{A constrained ML problem can be transformed into an unconstrained ML problem by augmenting the objective function of the original problem with a weighted sum of the constraint functions.
The weights can be numerically adjusted to guarantee that the constraints are satisfied.
Thus, the proposed GenQSGD can be applied to solve a constrained ML problem.}
%by augmenting the objective function of the original problem with a weighted sum of the constraint functions and then solved using SGD. The weights can be numerically adjusted to control whether the constraints are satisfied or not.
%On the other hand, constrained stochastic optimization problems can be solved using other stochastic optimization algorithms, such as SSCA, with more theoretical guarantees.
%Our recent work \cite{YCC-SSCA} is the first one that applies SSCA to solve constrained federated optimization. Note that in \cite{YCC-SSCA}, we adopt exact message passing. We will extend \cite{YCC-SSCA} to quantized message passing in our future work.}}
\end{Prob}
Problem~\ref{Prob1} is an unconstrained problem which may be convex or non-convex.
The goal of solving an unconstrained problem is generally to design an iterative algorithm to obtain a stationary point.\footnote{Note that any stationary point of a convex problem is globally optimal, and a stationary point of a non-convex problem may be a locally optimal point, globally optimal point, or saddle point.}
The server and $N$ workers all have computing and communication capabilities.
Let $F_0$ and $F_n$ denote the CPU frequencies (cycles/s) of the server and worker $n\in\mathcal N$, respectively.
Let $p_0$ and $p_n$ denote the transmission powers of the server and worker $n\in\mathcal N$, respectively.
In the process of collaborative training,
the server multicasts messages to the $N$ workers at an average rate $r_0$ (b/s) over the whole frequency band,
and the $N$ workers transmit their messages to the server at average transmission rates $r_n,n\in\mathcal N$ (b/s) using frequency division multiple access (FDMA).
The server and all $N$ workers employ quantization before transmitting messages.
Throughout this paper, for each node, we consider an arbitrary random quantizer $\mathbf Q(\cdot;s):\mathbb R^D\rightarrow\mathbb R^D$, which has a tunable quantization parameter $s\in\mathbb Z_+$ (corresponding to the number of quantization levels)\footnote{Usually, $s$ is the number of quantization levels or its increasing function.}
and satisfies the following assumption \cite[Assumption 1]{FedPAQ}.
%The number of bits to specify the quantized vector ${\mathbf Q}(\mathbf y;s)$ of an input vector $\mathbf y$, i.e., represent the input vector $\mathbf y$, is denoted by $M_s$ (bits).
%Throughout this paper, we consider the following assumptions on $\mathbf Q(\cdot;s)$ \cite[Assumption 1]{FedPAQ}.
\begin{Asump}[Random Quantization]\label{Asump:Quantization}
For all $\mathbf y\in\mathbb R^D$ and $s\in\mathbb Z_+$, $\mathbf Q(\cdot;s)$ satisfies: (i) $\mathbb E\left[{\mathbf Q}(\mathbf y;s)\right]=\mathbf y$ and (ii) $\mathbb E\left[\left\|\mathbf Q(\mathbf y;s)-\mathbf y\right\|_2^2\right]\leq q_s\left\|\mathbf y\right\|_2^2$,
for some constant\footnote{$q_s$ depends on (usually decreases with) $s$.} $q_s>0$.
\end{Asump}
For an input vector $\mathbf y\in\mathbb R^D$, the number of bits to specify the quantized vector ${\mathbf Q}(\mathbf y;s)$, i.e., represent $\mathbf y$, is denoted by $M_s$ (bits).
In this paper, we use $s_0\in\mathbb Z_+$ and $s_n\in\mathbb Z_+$ to denote the random quantization parameters for the server and worker $n\in\mathcal N$, respectively.
%We use $s_0\in\mathbb Z_+$ and $s_n\in\mathbb Z_+$ to denote the random quantization parameters for the server and worker $n\in\mathcal N$, respectively.
\begin{Rmk}[General Edge Computing System]
The edge computing system considered here is general in the sense that the system parameters, i.e., $F_n,p_n,r_n,s_n,n\in\bar{\mathcal N}$, can be different.
\end{Rmk}

\section{Algorithm Description and Convergence Analysis for GenQSGD}\label{Sec:Alg}
In this section, we first present a general quantized parallel mini-batch SGD algorithm for solving Problem~\ref{Prob1} in the edge computing system.
Then, we analyze its convergence.

\subsection{Algorithm Description}
The proposed GenQSGD algorithm is parameterized by $\mathbf K\triangleq\left(K_n\right)_{n\in\setNbar}\in\mathbb Z_+^{N+1}$, $B\in\mathbb Z_+$,
and $\mathbf\Gamma\triangleq\left(\gamma^{(k_0)}\right)_{k_0\in\mathcal K_0}\in\mathbb R_+^{K_0}$, where $\mathcal K_0\triangleq\{1,2,\cdots,K_0\}$.
Specifically, $K_0$ represents the number of global iterations,
$K_n$ represents the number of local iterations executed by worker $n\in\mathcal N$ within one global iteration,
$B$ represents the local mini-batch size used for local iterations at each worker,
and $\gamma^{(k_0)}$ represents the (constant) step size\footnote{For ease of exposition, we consider a constant step size for the local iterations within one global iteration, as in \cite{YuHao}, \cite{FedPAQ}.
The convergence analysis and optimization results in this paper can be readily extended to the case with any step size rule.} used in the local iterations within the $k_0$-th global iteration.
Besides, let $\mathcal K_n\triangleq\{1,2,\cdots,K_n\}$ denote the local iteration index set for worker $n\in\mathcal N$.

For all $k_0\in\mathcal K_0$, $\hat{\mathbf x}^{(k_0)}\in\mathbb R^D$ denotes the global model recovered by all $N$ workers
(and the server, if it needs to obtain the final global model) at the beginning of the $k_0$-th global iteration,
${\mathbf x}_n^{(k_0,0)}\in\mathbb R^D$ denotes the initial local model of worker $n\in\mathcal N$ at the beginning of the $k_0$-th global iteration,
and $\Delta\hat{\mathbf x}^{(k_0)}\in\mathbb R^D$  denotes the average of the quantized overall local model updates (termed the global model update) at the $k_0$-th global iteration.
For all $k_0\in\mathcal K_0,k_n\in\mathcal K_n,n\in\mathcal N$, $\mathbf x_n^{(k_0,k_n)}\in\mathbb R^D$
and $\mathcal B_n^{(k_0,k_n)}\subseteq\left\{\xi_i:i\in\mathcal I_n\right\}$ denote the local model of worker $n$
and the mini-batch used by worker $n$, respectively, at the $k_n$-th local iteration within the $k_0$-th global iteration.
\begin{algorithm}[t]
\caption{GenQSGD}\vspace{-10pt}
\label{Alg:GenQSGD}
\begin{multicols}{2}
{\small
{\bf{Input:}} $\mathbf K\in\mathbb Z_+^{N+1}$, $B\in\mathbb Z_+$, and $\mathbf \Gamma\in\mathbb R_+^{K_0}$.\\
{\bf{Output:}} $\mathbf x^*\left(\mathbf K,B,\mathbf\Gamma\right)$.
}
\begin{algorithmic}[1]
{\small
\STATE {\bf{Initialize:}} The server generates $\mathbf x_0^{(0)}$, sets $\Delta\hat{\mathbf x}^{(0)}=\mathbf x_0^{(0)}$, and sends $\mathbf Q(\Delta\hat{\mathbf x}^{(0)};s_0)$ to all $N$ workers.
The $N$ workers set $\hat{\mathbf x}^{(0)}=0$.
\FOR {$k_0=1,2,\cdots,K_0$}
    \FOR {worker $n\in{\mathcal N}$}\label{Step:begin_global_iteration}
        \STATE Compute $\hat{\mathbf x}^{(k_0)}$ according to:
        \begin{align}\label{eq:RecoveredLocalModel}
        \!\!\!\!\hat{\mathbf x}^{(k_0)}\!\!:=\!\hat{\mathbf x}^{(k_0-1)}\!\!+\!\gamma^{(k_0-1)}\mathbf Q(\!\Delta\hat{\mathbf x}^{(k_0-1)}\!;\!s_0),\!\!
        \end{align}
        and set ${\mathbf x}_n^{(k_0,0)}=\hat{\mathbf x}^{(k_0)}$.\label{Step:initialize_local_model}
        \FOR {$k_n=1,2,\cdots,K_n$}\label{Step:begin_local_iteration}
            \STATE Randomly select a mini-batch $\mathcal B_n^{(k_0,k_n)}$ and update ${\bf x}_n^{(k_0,k_n)}$ according to:
            \begin{align}\label{eq:local_update}
            &{\mathbf x}_n^{(k_0,k_n)}:={\bf x}_n^{(k_0,k_n-1)}\nonumber\\
            &-\frac{\gamma^{(k_0)}}{B}\!\!\!\!\!\!\sum_{\xi\in\mathcal B_n^{(k_0,k_n)}}\!\!\!\!\!\!\nabla{F\left({\bf x}_n^{(k_0,k_n-1)};\xi\right)}.
            \end{align}
        \ENDFOR\label{Step:end_local_iteration}
        \STATE Compute $\frac{{\mathbf x}_n^{(k_0,K_n)}-\hat{\mathbf x}^{(k_0)}}{\gamma^{(k_0)}}$, and send $\mathbf Q\left(\frac{{\mathbf x}_n^{(k_0,K_n)}-\hat{\mathbf x}^{(k_0)}}{\gamma^{(k_0)}};s_n\right)$ to the server.\label{Step:upload}
    \ENDFOR\label{Step:end_global_iteration}
    \STATE The server computes $\Delta\hat{\mathbf x}^{(k_0)}$ according to:
    \begin{align}\label{eq:GlobalUpdate}
    \!\!\Delta\hat{\mathbf x}^{(k_0)}\!\!:=\!\frac{1}{N}\!\!\sum_{n\in\mathcal N}\!\!\mathbf Q\!\left(\frac{{\mathbf x}_n^{(k_0,K_n)}-\hat{\mathbf x}^{(k_0)}}{\gamma^{(k_0)}};s_n\right)\!,\!\!\!
    \end{align}
    and sends $\mathbf Q(\Delta\hat{\mathbf x}^{(k_0)};s_0)$ to all $N$ workers.\label{Step:broadcast}
\ENDFOR\label{Step:end_global_iteration}
\STATE The server\footnote{The server needs to compute $\hat{\mathbf x}^{(k_0)}$ and $\hat{\mathbf x}^{(K_0+1)}$ according to \eqref{eq:RecoveredLocalModel} in Step 10 and Step 12, respectively, if it needs to obtain the final global model.} and all $N$ workers compute $\hat{\mathbf x}^{(K_0+1)}$ according to \eqref{eq:RecoveredLocalModel}, and set $\mathbf x^*\left(\mathbf K,B,\mathbf\Gamma\right)=\hat{\mathbf x}^{(K_0+1)}$.\label{step:global_average}
}
\end{algorithmic}
\end{multicols}
\end{algorithm}

The proposed GenQSGD is presented in Algorithm~\ref{Alg:GenQSGD}.
During the $k_0$-th global iteration (Step~\ref{Step:begin_global_iteration}-Step~\ref{Step:end_global_iteration}), each worker $n\in\mathcal N$ initializes its local model based on the global model at the $(k_0-1)$-th global iteration and the quantized global model update (Step~\ref{Step:initialize_local_model}),
executes $K_n$ local iterations of the mini-batch SGD algorithm with mini-batch size $B$ and constant step size $\gamma^{(k_0)}$ (Step~\ref{Step:begin_local_iteration}-Step~\ref{Step:end_local_iteration}),
and sends the quantized overall local model update to the server (Step~\ref{Step:upload});
the server aggregates and computes the average of the quantized overall local model updates,
i.e., global model update, and sends the quantized result to all $N$ workers (Step~\ref{Step:broadcast}).

\begin{Rmk}[Generality of GenQSGD]
GenQSGD is general in the sense that $\mathbf K$, $B$, and $\mathbf\Gamma$ can be flexibly chosen,
and it includes some existing algorithms as special cases.
For the sake of discussion,
we let $s=\infty$ present the case without quantization.
In particular, GenQSGD with $K_n=1,n\in\mathcal N$ for $s_n=\infty,n\in\setNbar$ reduces to PM-SGD \cite{PMSGD};
%which does not consider quantization at the server \hltx{or} workers and requires each worker to conduct only one local iteration within each global iteration.
and GenQSGD with $K_n=l\frac{I_n}{B},l\in\mathbb Z_+,n\in\mathcal N$ for $s_n=\infty,n\in\setNbar$ reduces to FedAvg \cite{FedAvg}.
%and GenQSGD with $B=1$ for $s_n=\infty,n\in\setNbar$ reduces to PR-SGD \cite{YuHao}.
\end{Rmk}

\subsection{Convergence Analysis}\label{SubSec:Conv}
In the rest of this paper, we assume that the following typical assumptions are satisfied \cite{YuHao}.
\begin{Asump}[I.I.D. Samples]\label{Asump:IID}
$\zeta_n,n\in\mathcal N$ are I.I.D..
\end{Asump}
\begin{Asump}[Smoothness]\label{Asump:Smoothness}
For all $n\in\mathcal N$, $f_n({\bf x})$ is continuously differentiable, and its gradient is Lipschitz continuous, i.e., there exists a constant $L>0$ such that $\left\|\nabla{f_n({\mathbf x})}-\nabla{f_n({\mathbf y})}\right\|_2 \leq L\left\|{\mathbf x}-{\mathbf y}\right\|_2$, for all $\mathbf x,\mathbf y\in\mathbb R^D$.
\end{Asump}
\begin{Asump}[Bounded Variances]\label{Asump:BoundedVariances}
For all $n\in\mathcal N$, there exists a constant $\sigma>0$ such that $\mathbb E\left[\left\|\nabla{F\left({\mathbf x};\zeta_n\right)}-\nabla{f_n({\mathbf x})}\right\|_2^2\right]\leq{\sigma^2}$, for all $\mathbf x\in\mathbb R^D$.
\end{Asump}
\begin{Asump}[Bounded Second Moments]\label{Asump:BoundedSecondMoments}
For all $n\in\mathcal N$, there exists a constant $G>0$ such that $\mathbb E\left[\left\|\nabla{F\left({\mathbf x};\zeta_n\right)}\right\|_2^2\right] \leq {G^2}$, for all $ \mathbf x\in\mathbb R^D$.
\end{Asump}
For notational simplicity, we
denote $\mathcal K_{\max}\triangleq\left\{1,2,\cdots,\max_{n\in\mathcal N}K_n\right\}$ and define:
%\begin{small}
\begin{align}
\tilde{\mathbf x}_n^{(k_0,k)}&\triangleq\left\{
    \begin{array}{ll}
    {\bf x}_n^{(k_0,k_n)},&k\in\mathcal K_n\\
    {\bf x}_n^{(k_0,K_n)},&k\in\mathcal K_{\max}\setminus\mathcal K_n
    \end{array},
\right.\ k_0\in\mathcal K_0,\ k\in\mathcal K_{\max},\ n\in\mathcal N,\label{eq:local_update_redefine}\\
\bar{\mathbf x}^{(k_0,k)}&\triangleq\frac{1}{N}\sum_{n\in\mathcal N}\tilde{\mathbf x}_n^{(k_0,k)},\ k_0\in\mathcal K_0,\ k\in\mathcal K_{\max},\label{eq:x_avg}\\
N_k&\triangleq\sum_{n\in\mathcal N}\mathbb I\left[k\leq K_n\right],\ k\in\mathcal K_{\max},\label{eq:N_k}
\end{align}
%\end{small}
The goal is to synchronize the local iterations by letting each worker $n\in\mathcal N$ with $K_n<\max_{n\in\mathcal N}K_n$ run extra $\max_{n\in\mathcal N}K_n-K_n$ virtual local updates without changing its local model \cite{YuHao}.
%\modr{Thus, at the $k$-th synchronized local iteration within the $k_0$-th global iteration, $\bar{\mathbf x}^{(k_0,k)}$ in \eqref{eq:x_avg} and $N_k$ in \eqref{eq:N_k} can be interpreted as the average of the local models and the number of workers conducting true local updates, respectively.}
Thus, $\bar{\mathbf x}^{(k_0,k)}$ in \eqref{eq:x_avg} can be interpreted as the average of the local models at the $k$-th synchronized local iteration within the $k_0$-th global iteration,
and $N_k$ in \eqref{eq:N_k} can be viewed as the number of workers conducting true local updates at the $k$-th synchronized local iteration within each global iteration.
The convergence of GenQSGD is summarized below.
\begin{Thm}[Convergence]\label{Thm:Convergence}
Suppose that Assumptions~\ref{Asump:Quantization},\ref{Asump:IID},\ref{Asump:Smoothness},\ref{Asump:BoundedVariances},\ref{Asump:BoundedSecondMoments} are satisfied and the step size $\gamma^{(k_0)}\in\left(0,\frac{1}{L}\right]$ for all $k_0\in\mathcal K_0$.
Then, for all $\mathbf K\in\mathbb Z_+^{N+1}$ and $B\in\mathbb Z_+$,
$\left\{\bar{\mathbf x}^{(k_0,k)}:k_0\in\mathcal K_0,k\in\mathcal K_{\max}\right\}$
generated by GenQSGD satisfies:\footnote{Following the convention in literature \cite{YuHao}, we use the expected squared gradient norm to characterize the convergence performance for a general ML problem.}
%\begin{small}
\begin{align}%\label{eq:UpBound_Q_n}
&\frac{\sum_{k_0\in\mathcal K_0}\gamma^{(k_0)}\sum_{k\in\mathcal K_{\max}}\frac{N_k}{N}\mathbb E\left[\left\|\nabla f\left(\bar{\mathbf x}^{(k_0, k-1)}\right)\right\|^2\right]}{\sum_{k_0\in\mathcal K_0}\gamma^{(k_0)}\sum_{k\in\mathcal K_{\max}}\frac{N_k}{N}}\leq C_A(\mathbf K,B,\mathbf\Gamma),\nonumber
\end{align}
%\end{small}
where
%\begin{small}
\begin{align}\label{eq:Convergence_RHS}
C_A(\mathbf K,B,\mathbf\Gamma)
\triangleq&\frac{c_1}{\sum_{n\in\mathcal N}K_n\sum_{k_0\in\mathcal K_0}\gamma^{(k_0)}}
+\frac{c_2\max_{n\in\mathcal N}K_n^2\sum_{k_0\in\mathcal K_0}\left(\gamma^{(k_0)}\right)^3}{\sum_{k_0\in\mathcal K_0}\gamma^{(k_0)}}\nonumber\\[-5pt]
&+\frac{c_3\sum_{k_0\in\mathcal K_0}\left(\gamma^{(k_0)}\right)^2}{B\sum_{k_0\in\mathcal K_0}\gamma^{(k_0)}}
+\frac{c_4\sum_{n\in\mathcal N}q_{s_0,s_n}K_n^2\sum_{k_0\in\mathcal K_0}\left(\gamma^{(k_0)}\right)^2}{\sum_{n\in\mathcal N}K_n\sum_{k_0\in\mathcal K_0}\gamma^{(k_0)}}.
\end{align}
%\end{small}
Here, $q_{s_0,s_n}\triangleq q_{s_0}+q_{s_n}+q_{s_0}q_{s_n}$,
$c_1\triangleq2N\left(f\left(\hat{\mathbf x}^{(1)}\right)-f^*\right)$,
$c_2\triangleq4G^2L^2$,
$c_3\triangleq\frac{L\sigma^2}{N}$,
$c_4=2LG^2$,
and $f^*$ is the optimal value of Problem~\ref{Prob1}.
\end{Thm}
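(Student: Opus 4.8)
The plan is to follow the standard non-convex SGD descent argument on the \emph{synchronized averaged} iterate $\bar{\mathbf x}^{(k_0,k)}$ defined in \eqref{eq:x_avg}, while carefully isolating the three distinct perturbations that corrupt the ideal gradient step: the consensus drift among the workers' local models, the mini-batch stochastic-gradient variance, and the two-stage quantization applied at global-iteration boundaries. The first step is to apply the $L$-smoothness inequality (Assumption~\ref{Asump:Smoothness}) to the consecutive averaged iterates $\bar{\mathbf x}^{(k_0,k-1)}$ and $\bar{\mathbf x}^{(k_0,k)}$. Since no quantization occurs \emph{within} a global iteration, the average evolves as $\bar{\mathbf x}^{(k_0,k)}-\bar{\mathbf x}^{(k_0,k-1)}=-\frac{\gamma^{(k_0)}}{N}\sum_{n:k\le K_n}\mathbf g_n^{(k_0,k-1)}$, where $\mathbf g_n^{(k_0,k-1)}$ is the worker-$n$ mini-batch gradient. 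Taking conditional expectation and invoking Assumption~\ref{Asump:IID} (so that $f_n=f$ for every $n$), the inner-product term yields the leading descent piece $-\gamma^{(k_0)}\frac{N_k}{N}\mathbb E[\|\nabla f(\bar{\mathbf x}^{(k_0,k-1)})\|^2]$ plus a cross term that Lipschitzness bounds by the consensus error $\mathbb E[\|\mathbf x_n^{(k_0,k-1)}-\bar{\mathbf x}^{(k_0,k-1)}\|^2]$, while the quadratic term contributes the mini-batch variance $\sigma^2/B$ (Assumption~\ref{Asump:BoundedVariances}) together with a stray $\frac{L}{2}(\gamma^{(k_0)})^2\|\nabla f\|^2$ piece.

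The heart of the argument is bounding the consensus drift. I would set up a recursion for $\mathbb E[\|\mathbf x_n^{(k_0,k)}-\bar{\mathbf x}^{(k_0,k)}\|^2]$ inside a global iteration, exploiting the fact that all workers share the common starting point $\hat{\mathbf x}^{(k_0)}$ at $k=0$. Controlling each stochastic gradient by $G^2$ via the bounded-second-moment assumption (Assumption~\ref{Asump:BoundedSecondMoments}), the drift grows at most quadratically in the local-iteration count, giving a bound of order $(\gamma^{(k_0)})^2\big(\max_{n}K_n\big)^2 G^2$; summing over $k$ and applying the outer $\gamma^{(k_0)}$ weighting produces the $c_2\max_{n}K_n^2\sum_{k_0}(\gamma^{(k_0)})^3$ term.

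Next I would handle the boundary quantization. The discrepancy $\hat{\mathbf x}^{(k_0+1)}-\bar{\mathbf x}^{(k_0,\,\max_{n}K_n)}$ between the recovered global model in \eqref{eq:RecoveredLocalModel} and the ideal averaged update is exactly the error of the composite quantizer in \eqref{eq:GlobalUpdate}: the worker-side quantizer $s_n$ followed by the server-side quantizer $s_0$. Using unbiasedness (Assumption~\ref{Asump:Quantization}(i)) to annihilate cross terms and the relative-variance bound (Assumption~\ref{Asump:Quantization}(ii)) twice, the two stages compound into the factor $q_{s_0,s_n}=q_{s_0}+q_{s_n}+q_{s_0}q_{s_n}$ stated in the theorem; bounding the local-update magnitude once more by $(\gamma^{(k_0)})^2 K_n^2 G^2$ yields the $c_4$ term. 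Finally, I would telescope the per-iteration descent inequality across all $k\in\mathcal K_{\max}$ and $k_0\in\mathcal K_0$, inserting the quantization jump at each global boundary, so that the left side collapses to $f(\hat{\mathbf x}^{(1)})-f(\hat{\mathbf x}^{(K_0+1)})\le f(\hat{\mathbf x}^{(1)})-f^*$ (the origin of $c_1$); rearranging to isolate the weighted gradient-norm sum and dividing by $\sum_{k_0}\gamma^{(k_0)}\sum_{k}\frac{N_k}{N}$ then gives $C_A(\mathbf K,B,\mathbf\Gamma)$.

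The main obstacle I expect is the coupling between the consensus-drift recursion and the boundary quantization: one must simultaneously track the divergence of individual local models and the quantization-induced perturbation of the shared starting point $\hat{\mathbf x}^{(k_0)}$ across successive global iterations, then verify these fit together in a single telescoping chain. A delicate but essential bookkeeping point is that the step-size restriction $\gamma^{(k_0)}\le\frac1L$ is precisely what lets the stray $\frac{L}{2}(\gamma^{(k_0)})^2\|\nabla f\|^2$ term be absorbed into the leading descent term (leaving a coefficient $\gamma^{(k_0)}(1-\tfrac{L\gamma^{(k_0)}}{2})\ge\tfrac{\gamma^{(k_0)}}{2}$); the resulting factor of $\tfrac12$ on the gradient sum is what introduces the overall doubling reflected in the constants $c_1,\dots,c_4$.
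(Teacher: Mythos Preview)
Your proposal is correct and follows essentially the same route as the paper. The paper factors the argument into three lemmas---a smoothness step linking $\bar{\mathbf x}^{(k_0,K_{\max})}$ to $\hat{\mathbf x}^{(k_0+1)}$ (your ``inserting the quantization jump''), a per-local-iteration descent bound citing \cite{YuHao} that yields the $c_2$ and $c_3$ terms (your first two paragraphs), and a composite-quantization bound yielding the $c_4$ term (your third paragraph)---and then telescopes exactly as you describe. The only minor technical difference is that the paper bounds the composite quantization error via the crude inequality $\|\mathbf a+\mathbf b\|_2^2\le 2\|\mathbf a\|_2^2+2\|\mathbf b\|_2^2$ rather than exploiting unbiasedness to annihilate cross terms as you propose; your route would give a constant no larger than the stated $c_4$, so the theorem follows either way.
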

\begin{IEEEproof}
See Appendix~A.
\end{IEEEproof}
\begin{Rmk}[{\spaceskip=3.4pt\relax Generality of Convergence of GenQSGD}]
{\spaceskip=3.4pt\relax Theorem~\ref{Thm:Convergence} for the convergence of GenQSGD with $s_n\!=\!\infty,n\!\in\!\setNbar\!$ and $B\!=\!1$ reduces to the convergence of PR-SGD in \cite[Theorem 3]{YuHao}.}\footnote{The convergence error of PR-SGD \cite{{YuHao}} depends on the bound on the variance of the stochastic gradient and does not directly rely on the mini-batch size.}
%Note that \cite{YuHao} does not consider quantization at the server or
%workers and \hltx{requires the worker to utilize} only one local sample in each local iteration.
\end{Rmk}

Theorem~\ref{Thm:Convergence} indicates that the convergence of GenQSGD is influenced by the algorithm parameters $\mathbf K$, $B$, $\mathbf\Gamma$ and the quantization parameters $s_n,n\in\bar{\mathcal N}$.
In the following, we illustrate how the four terms on the R.H.S of \eqref{eq:Convergence_RHS} change with $\mathbf K$, $B$, $\mathbf\Gamma$, and $s_n,n\in\bar{\mathcal N}$.
The first term decreases with $K_n$ for all $n\in\mathcal N$ and decreases with $\gamma^{(k_0)}$ for all $k_0\in\mathcal K_0$;
the second term increases with $\max_{n\in\mathcal N}K_n$;
the third term decreases with $B$ due to the decrease of the variance of a stochastic gradient;
and the last term increases with $q_{s_n}$ (decreases with $s_n$) and vanishes as $s_n\rightarrow\infty$ for all $n\in\setNbar$.
%For all $n\in\mathcal N$, the first term decreases with $K_n$,
%\modb{and the second term increases with $\max_{n\in\mathcal N}K_n$}.
%The third term decreases with $B$ due to the decrease of the variance of the stochastic gradients.
%For any $k_0\in\mathcal K_0$, the first term decreases with $\gamma^{(k_0)}$.
%For all $n\in\setNbar$, the last term increases with $q_{s_n}$ (decreases with $s_n$) and vanishes as $s_n\rightarrow\infty$.

Next, we introduce three commonly adopted step size rules, supported by Tensorflow \cite{TensorFlow} and PyTorch \cite{PyTorch}, and specify
the convergence results under the three step size rules from Theorem~\ref{Thm:Convergence}.
\begin{itemize}
\item   Constant Step Size Rule:
For any $\gamma_C\in\left(0,\frac{1}{L}\right]$,
$\mathbf\Gamma$ satisfies:
\begin{align}\label{eq:constant_rule}
\mathbf\Gamma=\gamma_C\mathbf1.
\end{align}
\begin{Lem}[Convergence under Constant Step Size Rule]\label{Lem:Convergence_cons}
If $\mathbf\Gamma$ satisfies \eqref{eq:constant_rule}, then
%\begin{small}
\begin{align}\label{eq:Convergence_cons}
C_A(\mathbf K,B,\mathbf\Gamma)
=&\frac{c_1}{\gamma_C K_0\sum_{n\in\mathcal N}K_n}
\!+\!c_2\gamma_C^2\max_{n\in\mathcal N}K_n^2
\!+\!\frac{c_3\gamma_C}{B}
\!+\!\frac{c_4\gamma_C\sum_{n\in\mathcal N}q_{s_0,s_n}K_n^2}{\sum_{n\in\mathcal N}K_n}\nonumber\\
\triangleq&C_C(\mathbf K,B,\mathbf\Gamma),
\end{align}
%\end{small}
and $C_C(\mathbf K,B,\mathbf\Gamma)
\rightarrow c_2\gamma_C^2\max_{n\in\mathcal N}K_n^2
+\frac{c_3\gamma_C}{B}+\frac{c_4\gamma_C\sum_{n\in\mathcal N}q_{s_0,s_n}K_n^2}{\sum_{n\in\mathcal N}K_n}$ as $K_0\rightarrow\infty$.
Furthermore, if $\mathbf\Gamma$ satisfies \eqref{eq:constant_rule} with $\gamma_C=\frac{\sqrt{N}}{L\sqrt{K_0\bar K}}$ and $K_n=\bar K$, $q_{s_0,s_n}=\frac{1}{N\bar K}$, $n\in\mathcal N$ with $\bar K\leq\frac{\left(K_0\bar K\right)^{1/4}}{N^{3/4}}$, then $C_A(\mathbf K,B,\mathbf\Gamma)=\mathcal O\left(K_0^{-\frac{1}{2}}\right)$.
\end{Lem}
\begin{IEEEproof}
We readily show \eqref{eq:Convergence_cons} by substituting \eqref{eq:constant_rule} into \eqref{eq:Convergence_RHS}.
Then, the limit of $C_C(\mathbf K,B,\mathbf\Gamma)$ can be easily derived.
By substituting $\mathbf\Gamma$ given by \eqref{eq:constant_rule} with $\gamma_C=\frac{\sqrt{N}}{L\sqrt{K_0\bar K}}$ and $K_n=\bar K$, $q_{s_0,s_n}=\frac{1}{N\bar K}$, $n\in\mathcal N$ with $\bar K\leq\frac{\left(K_0\bar K\right)^{1/4}}{N^{3/4}}$ into \eqref{eq:Convergence_RHS},
we have
$C_A(\mathbf K,B,\mathbf\Gamma)
=\frac{2L\left(f\left(\hat{\mathbf x}^{(1)}\right)-f^*\right)}{\sqrt{NK_0\bar K}}
+\frac{c_2N\bar K}{L^2K_0}
+\frac{\sigma^2}{B\sqrt{NK_0\bar K}}
+\frac{c_4}{L\sqrt{NK_0\bar K}}
{\overset{(a)}{\leq}}\frac{2L\left(f\left(\hat{\mathbf x}^{(1)}\right)-f^*\right)}{\sqrt{NK_0\bar K}}
+\frac{c_2}{L^2\sqrt{NK_0\bar K}}
+\frac{\sigma^2}{B\sqrt{NK_0\bar K}}
+\frac{c_4}{L\sqrt{NK_0\bar K}}$, where (a) is due to $\bar K\leq\frac{\left(K_0\bar K\right)^{1/4}}{N^{3/4}}$.
Thus, we have $C_A(\mathbf K,B,\mathbf\Gamma)=\mathcal O\left(K_0^{-\frac{1}{2}}\right)$.
Therefore, we can show Lemma~\ref{Lem:Convergence_cons}.\end{IEEEproof}
\item   Exponential Step Size Rule:
For any $\gamma_E\in\left(0,\frac{1}{L}\right]$ and $\rho_E\in(0,1)$, $\mathbf\Gamma$ satisfies:
%\begin{small}
\begin{align}\label{eq:exp_rule}
\gamma^{(k_0)}=\rho_E^{k_0-1}\gamma_E,\ k_0\in\mathcal K_0.
\end{align}
%\end{small}
\begin{Lem}[Convergence under Exponential Step Size Rule]\label{Lem:Convergence_exp}
If $\mathbf\Gamma$ satisfies \eqref{eq:exp_rule}, then
%\begin{small}
\begin{align}\label{eq:Convergence_exp}
C_A(\mathbf K,B,\mathbf\Gamma)
=&\frac{a_1c_1}{\left(1-\rho_E^{K_0}\right)\sum_{n\in\mathcal N}K_n}
+\frac{a_2c_2\left(1-\rho_E^{3K_0}\right)\max_{n\in\mathcal N}K_n^2}
{\left(1-\rho_E^{K_0}\right)}\nonumber\\[-5pt]
&+\frac{a_3\left(1-\rho_E^{2K_0}\right)}{\left(1-\rho_E^{K_0}\right)}
\left(\frac{c_3}{B}+\frac{c_4\sum_{n\in\mathcal N}q_{s_0,s_n}K_n^2}{\sum_{n\in\mathcal N}K_n}\right)
\triangleq C_E(\mathbf K,B,\mathbf\Gamma),
\end{align}
%\end{small}
and $C_E(\mathbf K,B,\mathbf\Gamma)
\rightarrow\frac{a_1c_1}{\sum_{n\in\mathcal N}K_n}
+a_2c_2\max_{n\in\mathcal N}K_n^2\!+\!\frac{a_3c_3}{B}
+\frac{a_3c_4\sum_{n\in\mathcal N}q_{s_0,s_n}K_n^2}{\sum_{n\in\mathcal N}K_n}$
as $K_0\!\rightarrow\!\infty$, where $a_1\!\triangleq\!\frac{1-\rho_E}{\gamma_E}$,
$a_2\!\triangleq\!\frac{\gamma_E^2}{1+\rho_E+\rho_E^2}$,
and $a_3\!\triangleq\!\frac{\gamma_E}{1+\rho_E}$.
Furthermore, if $\mathbf\Gamma$ satisfies \eqref{eq:exp_rule} with $\gamma_E=\frac{\sqrt{N}}{L\sqrt{K_0\bar K}}$ and $K_n\!=\!\bar K$, $q_{s_0,s_n}=\frac{1}{N\bar K}$, $n\!\in\!\mathcal N$ with $\bar K\!\leq\!\frac{\left(K_0\bar K\right)^{1/4}}{N^{3/4}}$,
then $C_A(\mathbf K,B,\mathbf\Gamma)=\mathcal O\left(K_0^{-\frac{1}{2}}\right)$.
%For $\mathbf\Gamma$ given by \eqref{eq:exp_rule} and $\bar K\in\mathbb Z_+$ satisfying $\bar K\leq\frac{\left(K_0\bar K\right)^{1/4}}{N^{3/4}}$,
%if we choose $\gamma_E=\frac{\sqrt{N}}{L\sqrt{K_0\bar K}}$, $\left(K_n\right)_{n\in\mathcal N}=\bar K\mathbf 1\in\mathbb Z_+^N$,
%and $\left(q_{s_0,s_n}\right)_{n\in\mathcal N}=\frac{1}{N\bar K}\mathbf 1\in\mathbb Z_+^N$,
%then we have $C_A(\mathbf K,B,\mathbf\Gamma)=\mathcal O\left(\frac{1}{\sqrt{NK_0\bar K}}\right)$.}
\end{Lem}
\begin{IEEEproof}
See Appendix B.
\end{IEEEproof}
\item   Diminishing Step Size Rule:
The step size sequence $\mathbf\Gamma$ for diminishing step size rule satisfies:
%\begin{small}
\begin{align}\label{eq:dim_rule}
\sum_{k_0\in\mathcal K_0}\gamma^{(k_0)}\rightarrow\infty,\sum_{k_0\in\mathcal K_0}\left(\gamma^{(k_0)}\right)^2\rightarrow0,
\end{align}
%\end{small}
as $K_0\rightarrow\infty$.
We further consider the following widely used step size sequence $\mathbf\Gamma$ satisfying the diminishing step size rule in \eqref{eq:dim_rule}:
For any $\gamma_D\in\left(0,\frac{1}{L}\right]$ and $\rho_D\in\mathbb R_+$,
%\begin{small}
\begin{align}\label{eq:dim_rule_sp}
\gamma^{(k_0)}=\frac{\gamma_D}{1+\frac{k_0}{\rho_D}}
=\frac{\rho_D\gamma_D}{k_0+\rho_D},\ k_0\in\mathcal K_0.
\end{align}
%\end{small}
\begin{Lem}[Convergence under Diminishing Step Size Rule]\label{Lem:Convergence_dim}
If $\mathbf\Gamma$ satisfies \eqref{eq:dim_rule_sp}, then\footnote{A tighter bound on $C_A(\mathbf K,B,\mathbf\Gamma)$ can be found in Appendix C.
We use the upper bound $C_D(\mathbf K,B,\mathbf\Gamma)$ in \begin{tiny}\eqref{eq:Convergence_dim}\end{tiny} in the subsequent optimization for tractability.}
%\begin{small}
\begin{align}\label{eq:Convergence_dim}
C_A(\mathbf K,B,\mathbf\Gamma)
<&\frac{b_1c_1}{\ln\!\left(\frac{K_0+\rho_D+1}{\rho_D+1}\right)\!\!\sum_{n\in\mathcal N}K_n}
\!+\!\frac{b_2c_2\max_{n\in\mathcal N}K_n^2}{\ln\!\left(\frac{K_0+\rho_D+1}{\rho_D+1}\right)}
\!+\!\frac{b_3c_3}{B\ln\!\left(\frac{K_0+\rho_D+1}{\rho_D+1}\right)}\nonumber\\
&\!+\!\frac{b_3c_4\sum_{n\in\mathcal N}q_{s_0,s_n}K_n^2}{\ln\!\left(\frac{K_0+\rho_D+1}{\rho_D+1}\right)\!\!\sum_{n\in\mathcal N}K_n}%\nonumber\\%[-6pt]
\triangleq C_D(\mathbf K,B,\mathbf\Gamma),
\end{align}
%\end{small}
and for all $\mathbf\Gamma$ satisfying \eqref{eq:dim_rule},
$C_D(\mathbf K,B,\mathbf\Gamma)
\rightarrow0$
as $K_0\rightarrow\infty$, where $b_1\triangleq\frac{1}{\rho_D\gamma_D}$,
$b_2\!\triangleq\!\frac{\rho_D^2\gamma_D^2}{\left(\rho_D+1\right)^3}
\!+\!\frac{\rho_D^2\gamma_D^2}{2\left(\rho_D+1\right)^2}$,
and $b_3\!\triangleq\!\frac{\rho_D\gamma_D}{\left(\rho_D+1\right)^2}
\!+\!\frac{\rho_D\gamma_D}{\rho_D+1}$.
Furthermore, if $\mathbf\Gamma$ satisfies \eqref{eq:dim_rule_sp} with $\gamma_D\!=\!\frac{\sqrt{N}}{L\sqrt{K_0\bar K}}$ and $K_n\!=\!\bar K$, $q_{s_0,s_n}\!=\!\frac{1}{N\bar K}$, $n\!\in\!\mathcal N$ with $\bar K\!\leq\!\frac{\left(K_0\bar K\right)^{1/4}}{N^{3/4}}$,
then $C_A(\mathbf K,B,\mathbf\Gamma)=\mathcal O\left(K_0^{-\frac{1}{2}}\right)$.
%For $\mathbf\Gamma$ given by \eqref{eq:dim_rule_sp} and $\bar K\in\mathbb Z_+$ satisfying $\bar K\leq\frac{\left(K_0\bar K\right)^{1/4}}{N^{3/4}}$,
%if we choose $\gamma_D=\frac{\sqrt{N}}{L\sqrt{K_0\bar K}}$, $\left(K_n\right)_{n\in\mathcal N}=\bar K\mathbf 1\in\mathbb Z_+^N$,
%and $\left(q_{s_0,s_n}\right)_{n\in\mathcal N}=\frac{1}{N\bar K}\mathbf 1\in\mathbb Z_+^N$,
%then we have $C_A(\mathbf K,B,\mathbf\Gamma)=\mathcal O\left(\frac{1}{\sqrt{NK_0\bar K}}\right)$.}
\end{Lem}
\begin{IEEEproof}
See Appendix C.
\end{IEEEproof}
\end{itemize}

For any given $\gamma_E,\gamma_C\in\left(0,\frac{1}{L}\right]$ with $\gamma_E=\gamma_C$, as $\rho_E\rightarrow1$, the exponential step size rule given by \eqref{eq:exp_rule} approaches the constant step size rule given by \eqref{eq:constant_rule}, and the result in Lemma~\ref{Lem:Convergence_exp} approaches that in Lemma~\ref{Lem:Convergence_cons} correspondingly.
Lemma~\ref{Lem:Convergence_cons} and Lemma~\ref{Lem:Convergence_exp} indicate that for the constant and exponential step size rules, $\left\{\bar{\mathbf x}^{(k_0,k)}:k_0\in\mathcal K_0,k\in\mathcal K_{\max}\right\}$
generated by GenQSGD is guaranteed to converge to within some range of a stationary point of Problem~\ref{Prob1}, as $K_0\rightarrow\infty$.
In contrast, Lemma~\ref{Lem:Convergence_dim} indicates that for the diminishing step size rule, $\left\{\bar{\mathbf x}^{(k_0,k)}:k_0\in\mathcal K_0,k\in\mathcal K_{\max}\right\}$ generated by GenQSGD is guaranteed to converge to a stationary point of Problem~\ref{Prob1}, as $K_0\rightarrow\infty$.
Besides, Lemmas~\ref{Lem:Convergence_cons}--\ref{Lem:Convergence_dim} indicate that GenQSGD under the constant, exponential, and diminishing step size rules yield the same convergence rate in order.\footnote{The constant and exponential step size rules usually yield faster convergence than the diminishing step size rule when the step size parameters are properly chosen.
Besides, the exponential and diminishing step size rules generally lead to more robust convergence speeds against the choice of step size parameters.}

\section{Performance Metrics and Typical Formulations}\label{Sec:PerformanceMetrics}
In this section, we first introduce performance metrics for implementing GenQSGD in the edge computing system.
Then, we briefly discuss typical formulations for optimizing the overall implementing process of GenQSGD.\footnote{In practice, one model update usually contains a large number of information bits due to the high model dimension, and hence its transmission lasts several time slots (with different channel states).
Therefore, we consider the average rates of the server and workers in optimizing the overall implementing process of GenQSGD.
Note that most existing work on short time-scale optimal computing and communication resource allocation relies on the assumption that the transmission of one model update can be completed within one time slot, which may not be reasonable for a high dimensional ML model \cite{ResAlls1-1,ResAlls2-2,ResAlls2-3,ResAlls3-1,MIMO}.
}

\subsection{Performance Metrics}
%The time for implementing GenQSGD in the edge computing system consists of the computation time and communication time.
Let $C_n$ denote the number of CPU-cycles required for worker $n\in\mathcal N$ to compute $\nabla F({\mathbf x};\xi_i)$ for all $\mathbf x\in\mathbb R^{D}$ and $i\in\mathcal I_n$,
and let $C_0$ denote the number of CPU-cycles required for the server to compute one global model update.
Then, within each global iteration,
the computation time for local model training at the $N$ workers is $B\max_{n\in\mathcal N}\frac{K_nC_n}{F_n}$, as the local iterations are executed in a parallel manner;
the computation time for global averaging at the server is $\frac{C_0}{F_0}$;
the communication time for all $N$ workers to send their quantized local model updates to the server is $\max_{n\in\mathcal N}\frac{M_{s_n}}{r_n}$, as the quantized messages are transmitted at given transmission rates using FDMA;
and the communication time for the server to multicast the quantized global model update to all $N$ workers is $\frac{M_{s_0}}{r_0}$.
Thus, the overall time for implementing GenQSGD, referred to as the time cost, is given by:
%\begin{small}
%\setlength{\belowdisplayskip}{-5pt}
\begin{align}\label{eq:Time}
&T(\mathbf K,B)=K_0\left(B\max_{n\in\mathcal N}\frac{C_n}{F_n}K_n+\frac{C_0}{F_0}+\max_{n\in\mathcal N}\frac{M_{s_n}}{r_n}+\frac{M_{s_0}}{r_0}\right).\setlength{\belowdisplayskip}{-5pt}
\end{align}
Let $\alpha_0$ and $\alpha_n$ denote the constant factors determined by the switched capacitances of the server and worker $n\in\mathcal N$, respectively \cite{capacitance}.
Then, within each global iteration,
the computation energy for local model training at the $N$ workers is $B\sum_{n\in\mathcal N}\alpha_nK_nC_n{F_n}^2$;
the computation energy for global averaging at the server is $\alpha_0 C_0{F_0}^2$;
the communication energy for all $N$ workers to send their quantized local model updates to the server is $\sum_{n\in\mathcal N}\frac{p_nM_{s_n}}{r_n}$;
and the communication energy for the server to multicast the quantized global model update to all $N$ workers is $\frac{p_0M_{s_0}}{r_0}$.
Thus, the overall energy for implementing GenQSGD, referred to as the energy cost, is given by:
%\begin{small}
%\setlength{\belowdisplayskip}{-5pt}
\begin{align}\label{eq:Energy}
&E(\mathbf K,B)=K_0\left(B\sum_{n\in\mathcal N}\alpha_nC_nF_n^2K_n+\alpha_0C_0F_0^2+\sum_{n\in\setNbar}\frac{p_nM_{s_n}}{r_n}\right).
\end{align}
%\end{small}
%\hltx{Besides}, the computation energy for global averaging at the server within each global iteration is given by $\alpha_0 C_0{F_0}^2$,
%where $\alpha_0$ is a constant factor determined by the switched capacitance of the server \cite{capacitance}.
%The communication energy for worker $n\in\mathcal N$ to transmit \hltx{its quantized local model update} to the server at \hltx{each} global iteration is given by
%$\frac{p_n M_{s_n}}{r_n}$.
%The energy cost for implementing GenQSGD in the edge computing system, denoted by $E(\mathbf K,B)$, is defined as the sum of the overall computation energy and communication energy.
%As GenQSGD has $K_0$ global iterations, we have:

In this paper, we assume $L$, $\sigma$, $G$, and a lower bound\footnote{In practice, $f^*$ is usually unknown before solving Problem~\ref{Prob1}.
However, in some cases, we can bound $f^*$ from below.
In the following, we also use $c_1$ to represent the corresponding upper bound on $2N\left(f\left(\hat{\mathbf x}^{(1)}\right)-f^*\right)$, with slight abuse of notation.} on $f^*$ (equivalently, the values of $c_1$, $c_2$, $c_3$, and $c_4$) can be obtained by the server from pre-training based on the data samples stored on the server.
We use $C_A(\mathbf K,B,\mathbf\Gamma)$ given by \eqref{eq:Convergence_RHS}, $C_C(\mathbf K,B,\mathbf\Gamma)$ given by \eqref{eq:Convergence_cons}, $C_E(\mathbf K,B,\mathbf\Gamma)$ given by \eqref{eq:Convergence_exp}, and $C_D(\mathbf K,B,\mathbf\Gamma)$ given by \eqref{eq:Convergence_dim} as the measures of convergence errors of GenQSGD for arbitrary, constant, exponential, and diminishing step size rules, respectively.

\subsection{Typical Formulations}
In this subsection, we discuss formulations for optimally balancing among the three performance metrics, i.e., time cost, energy cost, and convergence error.
%In this subsection, we discuss formulations for optimizing the overall implementing process of GenQSGD.
Specifically, we impose constraints on the performance metrics with firm requirements (if they exist) and minimize the rest\footnote{
If there are more than one performance metrics to be minimized, we minimize the weighted sum of them.
If all three performance metrics have to satisfy firm requirements, we formulate a feasibility problem with the goal of finding any variables that satisfy the corresponding constraints.}
by optimizing the algorithm parameters $\mathbf K\in\mathbb Z_+^{N+1}$, $B\in\mathbb Z_+$, and $\mathbf\Gamma$ satisfying:
\begin{align}\label{eq:Cons_gamma}
\mathbf0\prec\mathbf\Gamma\preceq\frac{1}{L}\mathbf1.
\end{align}
%\hltx{We optimize the algorithm parameters $\mathbf K\in\mathbb Z_+^{N+1}$, $B\in\mathbb Z_+$, and $\mathbf\Gamma$ satisfying:}
%\begin{align}\label{eq:Cons_gamma}
%\mathbf0\prec\mathbf\Gamma\preceq\frac{1}{L}\mathbf1
%\end{align}
%For given $\mathbf Q(\cdot;s_n),n\in\setNbar$, the three performance metrics, i.e., time cost, energy cost, and convergence error, depend on algorithm parameters $\mathbf K$, $B$, and $\mathbf\Gamma$, which can be treated as optimization variables.
%Note that $\mathbf K\in\mathbb Z_+^{N+1}$, $B\in\mathbb Z_+$, and $\mathbf\Gamma$ satisfy:
%\begin{align}\label{eq:Cons_gamma}
%\mathbf0\prec\mathbf\Gamma\preceq\frac{1}{L}\mathbf1.
%\end{align}
For tractability, we relax the integer constraints, $\mathbf K\in\mathbb Z_+^{N+1}$ and $B\in\mathbb Z_+$, to their continuous counterparts, $\mathbf K\succ\mathbf 0$ and $B>0$, respectively.
Note that a nearly optimal point satisfying the original integer constraints can be easily constructed based on an optimal point of a relaxed problem.
%When there are firm requirements on some of the three performance metrics, we impose the corresponding constraints on them.
%If all three performance metrics have to satisfy firm requirements, we formulate a feasibility problem with the goal of finding any variables that satisfy the corresponding constraints.
%Usually, we minimize one performance metric under constraints on the other two performance metrics.
%If there are more than one performance metrics to be minimized, we minimize the vector of the performance metrics (corresponding to a multicriterion optimization problem) or the weighted sum of the performance metrics (corresponding to a scalarized version of the multicriterion optimization problem) under constraints on the remaining performance metrics (if exist).
In this paper, as an example, we focus on minimizing the energy cost $E(\mathbf K,B)$ subject to the following time constraint and convergence error constraint for any given $m\in\{A,C,E,D\}$:
\begin{align}
&T(\mathbf K,B)\leq T_{\max},\label{eq:Cons_time}\\%[-5pt]
&C_m(\mathbf K,B,\mathbf\Gamma)\leq C_{\max},\label{eq:Cons_conv}
\end{align}
where $T_{\max}$ and $C_{\max}$ denote the limits on the time cost and convergence error, respectively.
Specifically, in Sec.~\ref{sec:Opt_fix_step_size}, we optimize $\mathbf K$ and $B$ for any fixed $\mathbf\Gamma$ satisfying \eqref{eq:constant_rule}, \eqref{eq:exp_rule}, and \eqref{eq:dim_rule_sp}, respectively;
and in Sec.~\ref{sec:Opt_adj_step_size}, we jointly optimize $\mathbf K$, $B$, and $\mathbf\Gamma$.
The optimization problems can be solved by the server (in an offline manner) before the implementation of GenQSGD.\footnote{Recall that the parameters of the expected risk function $f$, i.e., $L$, $\sigma$, $G$, and a lower bound of $f^*$ can be obtained by the server from pre-training based on the data samples stored on the server;
the system parameters, i.e., $F_n,p_n,r_n,s_n,n\in\setNbar$, are known to the server;
and in applications with preset step size sequences, $\mathbf\Gamma$ has been chosen by the server.}
We shall see that the proposed optimization framework is applicable for the other problems mentioned above and applies to the parameter optimization of existing FL algorithms, such as PM-SGD, FedAvg, and PR-SGD, by fixing some algorithm parameters and optimizing the others.

\section{Optimization of Algorithm Parameters for Fixed Step Size Sequences}\label{sec:Opt_fix_step_size}
In many current applications, the step size sequences are treated as hyperparameters and chosen by the server in advance to reduce the convergence error and to improve the convergence speed of centralized learning.
Thus, in this section, we optimize the global and local iteration numbers and mini-batch size for a fixed step size sequence under each of the three step size rules to minimize the energy cost under the time constraint and convergence error constraint,
as illustrated in Fig.~\ref{Fig:Structure_1}.
\begin{figure}[t]
\setlength{\belowcaptionskip}{-10pt}
\begin{center}
{\includegraphics[width=300pt]{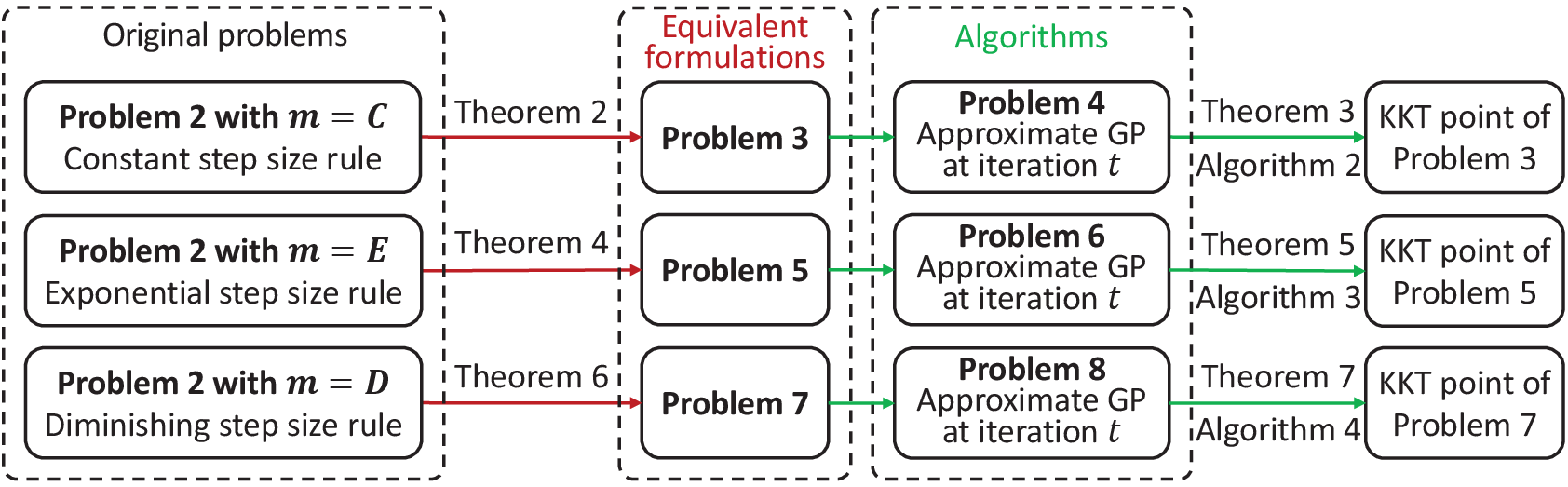}}
\caption{\small{Proposed solutions to optimization of algorithm parameters for fixed step size sequences.}}\label{Fig:Structure_1}
\end{center}
\end{figure}

\subsection{Problem Formulation}\label{SubSec:Formulation_Fixed_step_size}
For any fixed $\mathbf\Gamma$ satisfying \eqref{eq:constant_rule}, \eqref{eq:exp_rule}, and \eqref{eq:dim_rule_sp},
i.e., for any $m\in\{C,E,D\}$,
we optimize $\mathbf K$ and $B$ to minimize $E(\mathbf K,B)$ under the constraints in \eqref{eq:Cons_time} and \eqref{eq:Cons_conv}.
\begin{Prob}[Optimization of Global and Local Iteration Numbers and Mini-batch Size for Fixed Step Size Sequences]\label{Prob:fixed_step_size}
For any given $m\in\{C,E,D\}$,
\begin{align}
\min_{\mathbf K\succ\mathbf 0, B>0}&{\quad}E(\mathbf K,B)\nonumber\\%[-8pt]
\mathrm{s.t.}&{\quad}\eqref{eq:Cons_time},\ \eqref{eq:Cons_conv}.\nonumber
\end{align}
\end{Prob}
%\vspace{-5pt}
Apparently, the minimum energy cost (i.e., the optimal value of Problem~\ref{Prob:fixed_step_size}) does not increase (usually decreases) with the limits on the time cost and convergence error, i.e., $T_{\max}$ and $C_{\max}$, indicating an optimal trade-off among the energy cost, time cost, and convergence error.
The constraints in \eqref{eq:Cons_time} and \eqref{eq:Cons_conv} are non-convex and contain non-differentiable functions, and the objective function is non-convex.
Thus, Problem~\ref{Prob:fixed_step_size} with $m=C,E,D$ are challenging non-convex problems with non-differentiable constraint functions.\footnote{The goal of solving a constrained non-convex problem is usually to obtain a KKT point.}
In Sec.~\ref{SubSec:Solution_Fixed_constant}, Sec.~\ref{SubSec:Solution_Fixed_exp}, and Sec.~\ref{SubSec:Solution_Fixed_dim}, we will develop algorithms for solving Problem~\ref{Prob:fixed_step_size} with $m=C$, $m=E$, and $m=D$, respectively.

\subsection{Solution to Problem~\ref{Prob:fixed_step_size} with $m=C$}\label{SubSec:Solution_Fixed_constant}
%In this part, we solve Problem~\ref{Prob:fixed_step_size} with $m=C$.
To address the challenge caused by the non-differentiable constraint functions in \eqref{eq:Cons_time} and \eqref{eq:Cons_conv} with $m=C$,
we equivalently transform Problem~\ref{Prob:fixed_step_size} with $m=C$ into the following problem with differentiable constraint functions.
\begin{Prob}[Equivalent Problem of Problem~\ref{Prob:fixed_step_size} with $m=C$]\label{Prob:constant_eq}
For any given $\gamma_C\in\left(0,\frac{1}{L}\right]$,
%\begin{small}
\begin{align}
\min_{\substack{\mathbf K\succ\mathbf0,B,T_1,T_2>0}}
&{\quad}E(\mathbf K,B)\nonumber\\%[-5pt]
\mathrm{s.t.}
&{\quad}\frac{C_n}{F_n}K_n T_1^{-1}\leq1,\ n\in\mathcal N,\label{eq:fix_epi_cons1}\\%[-5pt]
&{\quad}K_nT_2^{-1}\leq1,\ n\in\mathcal N,\label{eq:fix_epi_cons2}\\%[-5pt]
&{\quad}\left(\left(\frac{C_0}{F_0}+\max_{n\in\mathcal N}\frac{M_{s_n}}{r_n}+\frac{M_{s_0}}{r_0}\right)+BT_1\right)\frac{K_0}{T_{\max}}
\leq1,\label{eq:fix_epi_cons_T}\\%[-5pt]
&{\quad}\frac{c_1}{\gamma_C K_0\sum_{n\in\mathcal N}K_n}+c_2\gamma_C^2T_2^2
+\frac{c_3\gamma_C}{B}
+\frac{c_4\gamma_C\sum_{n\in\mathcal N}q_{s_0,s_n}K_n^2}{\sum_{n\in\mathcal N}K_n}
\leq C_{\max}.\label{eq:Cons_conv_cons}
\end{align}
%\end{small}
\end{Prob}
\begin{Thm}[Equivalence between Problem~\ref{Prob:fixed_step_size} with $m=C$ and Problem~\ref{Prob:constant_eq}]\label{Thm:Equivalent_Prob_cons}
If $\left(\mathbf K^*,B^*,T_1^*,T_2^*\right)$ is an optimal point of Problem~\ref{Prob:constant_eq},
then $\left(\mathbf K^*, B^*\right)$ is an optimal point of Problem~\ref{Prob:fixed_step_size} with $m=C$.
\end{Thm}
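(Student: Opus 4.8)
The plan is to treat this as a standard epigraph-style reformulation in which $T_1$ and $T_2$ are auxiliary variables introduced to smooth the two non-differentiable $\max$ terms in the time constraint \eqref{eq:Cons_time} and the convergence constraint \eqref{eq:Cons_conv} with $m=C$. Concretely, $T_1$ is meant to capture $\max_{n\in\mathcal N}\frac{C_n}{F_n}K_n$ (the per-global-iteration local-training time) and $T_2$ is meant to capture $\max_{n\in\mathcal N}K_n$. Since the objective $E(\mathbf K,B)$ is identical in both problems and is blind to $T_1,T_2$, it suffices to show that the feasible regions coincide after projecting out $T_1,T_2$; the claimed correspondence of optimizers then follows at once.

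The key structural fact I would exploit is a matched monotonicity. Constraints \eqref{eq:fix_epi_cons1} and \eqref{eq:fix_epi_cons2} force $T_1\ge\max_{n\in\mathcal N}\frac{C_n}{F_n}K_n$ and $T_2\ge\max_{n\in\mathcal N}K_n$ at every feasible point, acting as lower bounds on the auxiliary variables. In the other direction, \eqref{eq:fix_epi_cons_T} is increasing in $T_1$ (as $B,K_0>0$) and \eqref{eq:Cons_conv_cons} is increasing in $T_2$ (as $T_2>0$). Because $K_n>0$ for all $n$, the map $x\mapsto x^2$ is increasing on the positive reals, so $T_2\ge\max_{n\in\mathcal N}K_n$ gives $T_2^2\ge\max_{n\in\mathcal N}K_n^2$. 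Substituting these lower bounds into \eqref{eq:fix_epi_cons_T} and \eqref{eq:Cons_conv_cons} recovers exactly \eqref{eq:Cons_time} and \eqref{eq:Cons_conv} with $m=C$, with $T$ and $C_C$ as in \eqref{eq:Time} and \eqref{eq:Convergence_cons}.

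With this in hand I would establish both inclusions. For feasibility: given an optimal $(\mathbf K^*,B^*,T_1^*,T_2^*)$ of Problem~\ref{Prob:constant_eq}, the lower bounds $T_1^*\ge\max_{n\in\mathcal N}\frac{C_n}{F_n}K_n^*$ and $(T_2^*)^2\ge\max_{n\in\mathcal N}(K_n^*)^2$ together with the monotone constraints \eqref{eq:fix_epi_cons_T} and \eqref{eq:Cons_conv_cons} yield $T(\mathbf K^*,B^*)\le T_{\max}$ and $C_C(\mathbf K^*,B^*,\mathbf\Gamma)\le C_{\max}$, so $(\mathbf K^*,B^*)$ is feasible for Problem~\ref{Prob:fixed_step_size} with $m=C$. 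For optimality I would argue by contradiction: if some $(\mathbf K',B')$ feasible for Problem~\ref{Prob:fixed_step_size} had $E(\mathbf K',B')<E(\mathbf K^*,B^*)$, then setting $T_1'\triangleq\max_{n\in\mathcal N}\frac{C_n}{F_n}K_n'$ and $T_2'\triangleq\max_{n\in\mathcal N}K_n'$ makes $(\mathbf K',B',T_1',T_2')$ feasible for Problem~\ref{Prob:constant_eq} (these choices satisfy \eqref{eq:fix_epi_cons1} and \eqref{eq:fix_epi_cons2}, and reduce \eqref{eq:fix_epi_cons_T} and \eqref{eq:Cons_conv_cons} to the already-satisfied original constraints) while achieving the strictly smaller objective $E(\mathbf K',B')$, contradicting the optimality of $(\mathbf K^*,B^*,T_1^*,T_2^*)$.

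I do not anticipate a genuine difficulty; this is the routine ``lift each $\max$ into a family of inequality constraints'' device. The only points requiring care are checking that the monotonicity directions align, so that the auxiliary-variable lower bounds can be pushed through the remaining constraints without reversing an inequality, and noting that positivity of $\mathbf K$ is precisely what legitimizes replacing $\max_{n\in\mathcal N}K_n^2$ by $T_2^2$. It is worth flagging that optimality of Problem~\ref{Prob:constant_eq} need not pin $T_1^*,T_2^*$ to the respective maxima, since the objective ignores them; but this is harmless, as the argument uses only the inequality directions.
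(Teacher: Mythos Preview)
Your proposal is correct and follows essentially the same approach as the paper: introduce $T_1,T_2$ as epigraph variables for the two $\max$ terms, then argue optimality by contradiction by lifting a hypothetically better $(\mathbf K',B')$ back to Problem~\ref{Prob:constant_eq} via $T_1'=\max_{n\in\mathcal N}\frac{C_n}{F_n}K_n'$ and $T_2'=\max_{n\in\mathcal N}K_n'$. If anything, you are more explicit than the paper in spelling out the feasibility direction and the monotonicity checks that justify pushing $T_1^*,T_2^*$ down to the maxima.
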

\begin{IEEEproof}
%See Appendix D.
By introducing auxiliary variables $T_1,T_2>0$, replacing $\max_{n\in\mathcal N}\frac{C_n}{F_n}K_n$ in \eqref{eq:Cons_time} and $\max_{n\in\mathcal N}K_n^2$ in \eqref{eq:Cons_conv} with $m=C$ with $T_1$ and $T_2^2$, respectively, and adding the inequality constraints in \eqref{eq:fix_epi_cons1} and \eqref{eq:fix_epi_cons2},
we transform Problem~\ref{Prob:fixed_step_size} with $m=C$ into Problem~\ref{Prob:constant_eq}.
Suppose that there exists $(\mathbf K^\dag,B^\dag)\neq(\mathbf K^*,B^*)$, where $\mathbf K^\dag\triangleq(K_n^\dag)_{n\in\setNbar}$, satisfying all constraints of Problem~\ref{Prob:fixed_step_size} with $m=C$ and $E(\mathbf K^\dag,B^\dag)<E(\mathbf K^*,B^*)$.
Construct $\hat{T_1}=\max_{n\in\mathcal N}\frac{C_n}{F_n}K_n^\dag$ and $\hat T_2=\max_{n\in\mathcal N}K_n^\dag$.
Obviously, $(\mathbf K^\dag,B^\dag,\hat{T_1},\hat{T_2})$ satisfies all constraints of Problem~\ref{Prob:constant_eq} and $E(\mathbf K^\dag,B^\dag)<E(\mathbf K^*,B^*)$,
contradicting with the optimality of $(\mathbf K^*,B^*,T_1^*,T_2^*)$ for Problem~\ref{Prob:constant_eq}.
Thus, by contradiction, we can show that $\left(\mathbf K^*, B^*\right)$ is an optimal point of Problem~\ref{Prob:fixed_step_size} with $m=C$.
Therefore, we can show Theorem~\ref{Thm:Equivalent_Prob_cons}.
\end{IEEEproof}
By Theorem~\ref{Thm:Equivalent_Prob_cons}, we can solve Problem~\ref{Prob:constant_eq} instead of Problem~\ref{Prob:fixed_step_size} with $m=C$.
Note that $E(\mathbf K,B)$ and the constraint function in \eqref{eq:fix_epi_cons_T} are posynomials,
the constraint functions in \eqref{eq:fix_epi_cons1} and \eqref{eq:fix_epi_cons2} are monomials,
%\modr{the first and last terms of the constraint function in \eqref{eq:Cons_conv_cons} are the ratio of a constant to a posynomial and the ratio of two posynomials.}
the first term of the constraint function in \eqref{eq:Cons_conv_cons} is a ratio between a constant and a posynomial,
and the last term of the constraint function in \eqref{eq:Cons_conv_cons} is a ratio between two posynomials.
Thus, Problem~\ref{Prob:constant_eq} is a non-convex problem that is more complicated than a CGP.

In the following, using GIA \cite{GIA} and tricks for solving CGP \cite{CGP}, we propose an iterative algorithm to obtain a KKT point of Problem~\ref{Prob:constant_eq}.
The idea is to construct and solve a sequence of successively refined approximate geometric programs (GPs).
Specifically, at iteration $t$,
%the approximate GP of Problem~\ref{Prob:constant_eq}, i.e., Problem~\ref{Prob:constant_GP},
%is obtained by lower bounding the denominators of the first and last terms of the constraint function in \eqref{eq:Cons_conv_cons} using the arithmetic-geometric mean inequality;
update $\left(\mathbf K^{(t)}, B^{(t)}\right)$ by solving Problem~\ref{Prob:constant_GP},
which is parameterized by $\mathbf K^{(t-1)}$ obtained at iteration $t-1$.
%Each approximate GP of Problem~\ref{Prob:constant_eq} is obtained by lower bounding the denominators of the first and last terms of the constraint function in \eqref{eq:Cons_conv_cons} using the arithmetic-geometric mean inequality.
%Specifically, at iteration $t$, update $\left(\mathbf K^{(t)}, B^{(t)}\right)$ by solving the following approximate GP of Problem~\ref{Prob:constant_eq},
%which is parameterized by $\mathbf K^{(t-1)}$ obtained at iteration $t-1$.
\begin{algorithm}[t]
\caption{Algorithm for Obtaining a KKT Point of Problem~\ref{Prob:constant_eq}}
\label{Alg:constant}
\begin{algorithmic}[1]
\small{
\STATE {\bf{Initialize:}}
Choose any feasible solution $\left(\mathbf K^{(0)}, B^{(0)}, T_1^{(0)}, T_2^{(0)}\right)$ of Problem~\ref{Prob:constant_eq},
and set $t=1$.
\REPEAT
    \STATE
    Compute $\left(\mathbf K^{(t)}, B^{(t)}, T_1^{(t)}, T_2^{(t)}\right)$ by transforming Problem~\ref{Prob:constant_GP} into a GP in convex form
    and solving it with standard convex optimization techniques.
    \STATE Set $t:=t+1$.
\UNTIL{Some convergence criteria is met.}
}\normalsize
\end{algorithmic}
\end{algorithm}
%\setlength{\textfloatsep}{10pt}

%\vspace{-15pt}
\begin{Prob}[Approximate GP of Problem~\ref{Prob:constant_eq} at Iteration $t$]\label{Prob:constant_GP}
For any given $\gamma_C\in\left(0,\frac{1}{L}\right]$,
%\begin{small}
\begin{align}
&\min_{\substack{\mathbf K\succ\mathbf0,B,T_1,T_2>0}}{\quad}E(\mathbf K,B)\nonumber\\%[-5pt]
&\mathrm{s.t.}{\quad}\eqref{eq:fix_epi_cons1},\ \eqref{eq:fix_epi_cons2},\
\eqref{eq:fix_epi_cons_T},\nonumber\\%[-8pt]
&{\quad}\frac{c_1}{C_{\max}\gamma_C K_0\prod_{n\in\mathcal N}\left(\frac{K_n}{\beta_n^{(t-1)}}\right)^{\beta_n^{(t-1)}}}
+\frac{c_2\gamma_C^2T_2^2}{C_{\max}}
+\frac{c_3\gamma_C}{C_{\max}B}+\frac{c_4\gamma_C\sum_{n\in\mathcal N}q_{s_0,s_n}K_n^2}{C_{\max}\prod_{n\in\mathcal N}\left(\frac{K_n}{\beta_n^{(t-1)}}\right)^{\beta_n^{(t-1)}}}
\leq1,\label{eq:constant_cons_C_approx}
\end{align}
%\end{small}
where $\beta_n^{(t-1)}\triangleq\frac{K_n^{(t-1)}}{\sum_{n\in\mathcal N}K_n^{(t-1)}}$
and $\left(\mathbf K^{(t)}, B^{(t)}, T_1^{(t)}, T_2^{(t)}\right)$ denotes an optimal solution of Problem~\ref{Prob:constant_GP}.
\end{Prob}
The constraint function in \eqref{eq:constant_cons_C_approx}, constructed by adopting a commonly used trick in CGP \cite[Lemma1]{CGP} that is based on the arithmetic-geometric mean inequality, is an approximation of the constraint function in \eqref{eq:Cons_conv_cons} at $\mathbf K^{(t-1)}$ and is a posynomial.
%The constraint function in \eqref{eq:constant_cons_C_approx}, which is an approximation of the constraint function in \eqref{eq:Cons_conv_cons} at $\mathbf K^{(t-1)}$, is a posynomial.
As a result, Problem~\ref{Prob:constant_GP} is a standard GP and can be readily transformed into a convex problem and solved by using standard convex optimization techniques such as interior-point methods. In particular, if an interior-point method is applied, the computational complexity for solving Problem~\ref{Prob:constant_GP} is $\mathcal O(N^{3.5})$ \cite{CVX}.
The details are summarized in Algorithm~\ref{Alg:constant}.
Following \cite[Proposition 3]{CGP}, we have the following result.
\begin{Thm}[Convergence of Algorithm~\ref{Alg:constant}]\label{Thm:constant_convergence}
$\left(\mathbf K^{(t)},B^{(t)},T_1^{(t)},T_2^{(t)}\right)$ obtained by Algorithm~\ref{Alg:constant} converges to a KKT point of Problem~\ref{Prob:constant_eq}, as $t\rightarrow\infty$.
\end{Thm}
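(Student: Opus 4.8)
The plan is to recognize Algorithm~\ref{Alg:constant} as an instance of the general inner approximation (GIA) procedure applied to the single non-GP feature of Problem~\ref{Prob:constant_eq}, and then invoke the convergence guarantee of \cite[Proposition 3]{CGP}. First I would isolate which parts of Problem~\ref{Prob:constant_eq} are already geometric-programming compatible: the objective $E(\mathbf K,B)$ and the constraint function in \eqref{eq:fix_epi_cons_T} are posynomials, and the constraint functions in \eqref{eq:fix_epi_cons1} and \eqref{eq:fix_epi_cons2} are monomials, so only \eqref{eq:Cons_conv_cons} prevents a direct GP formulation. Within \eqref{eq:Cons_conv_cons}, the obstruction is confined to the first and last terms, both of which carry the posynomial $\sum_{n\in\mathcal N}K_n$ in the denominator, turning them into ratios that are not posynomials. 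The approximate constraint \eqref{eq:constant_cons_C_approx} replaces this denominator by the monomial $\prod_{n\in\mathcal N}(K_n/\beta_n^{(t-1)})^{\beta_n^{(t-1)}}$, i.e., it applies the single-condensation trick based on the arithmetic--geometric-mean (AGM) inequality with the weights $\beta_n^{(t-1)}=K_n^{(t-1)}/\sum_{n\in\mathcal N}K_n^{(t-1)}$ fixed from the previous iterate.

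Next I would verify the three properties that make this condensation a valid inner approximation at $\mathbf K^{(t-1)}$. By the weighted AGM inequality, $\sum_{n\in\mathcal N}K_n\geq\prod_{n\in\mathcal N}(K_n/\beta_n^{(t-1)})^{\beta_n^{(t-1)}}$ holds for all $\mathbf K\succ\mathbf 0$, so replacing the denominator by the smaller monomial only enlarges the two offending terms; consequently the left-hand side of \eqref{eq:constant_cons_C_approx} (after undoing the harmless rescaling by $C_{\max}$) dominates that of \eqref{eq:Cons_conv_cons} everywhere, which is the inner-approximation property and guarantees that every feasible point of Problem~\ref{Prob:constant_GP} is feasible for Problem~\ref{Prob:constant_eq}. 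The specific choice of $\beta_n^{(t-1)}$ forces the AGM inequality to hold with equality at $\mathbf K=\mathbf K^{(t-1)}$, giving value consistency there, and because the condensing monomial is tangent to the posynomial $\sum_{n\in\mathcal N}K_n$ at that point, its gradient matches as well; propagating this first-order match through the (smooth) reciprocal and quotient by the chain rule, and noting that the numerators $c_1/(\gamma_C K_0)$ and $c_4\gamma_C\sum_{n\in\mathcal N}q_{s_0,s_n}K_n^2$ are left untouched, yields gradient consistency of the full constraint function at $\mathbf K^{(t-1)}$.

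With these three properties in hand, Algorithm~\ref{Alg:constant} exactly realizes the successive-GP scheme of GIA/CGP: at each iteration it minimizes the unchanged objective over an inner-approximated feasible region whose approximating constraint touches the true constraint to first order at the current iterate, and the resulting GP (Problem~\ref{Prob:constant_GP}) is convex after the standard logarithmic change of variables and hence solvable to global optimality. I would therefore close the argument by checking that the construction meets the hypotheses of \cite[Proposition 3]{CGP}, from which convergence of $(\mathbf K^{(t)},B^{(t)},T_1^{(t)},T_2^{(t)})$ to a KKT point of Problem~\ref{Prob:constant_eq} follows. The main obstacle I anticipate is the gradient-consistency check for the two ratio terms: although value matching is immediate from the equality case of AGM, one must confirm that taking reciprocals and quotients preserves the tangency, and that the stationarity conditions of the approximate GPs translate---through the equality of gradients and the rescaling by $C_{\max}$---into the genuine KKT conditions of the original Problem~\ref{Prob:constant_eq} rather than those of some perturbed problem.
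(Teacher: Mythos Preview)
Your proposal is correct and follows essentially the same approach as the paper: isolate the two ratio terms in \eqref{eq:Cons_conv_cons} whose posynomial denominator $\sum_{n\in\mathcal N}K_n$ is condensed to the AGM monomial, verify the three GIA properties (upper-bounding, value match, gradient match) at $\mathbf K^{(t-1)}$, and then invoke the CGP/GIA convergence result. The paper's proof is terser---it cites \cite[Proposition 3]{CGP} for the three properties and additionally notes that the convex form of Problem~\ref{Prob:constant_GP} satisfies Slater's condition (hence zero duality gap) before invoking \cite[Theorem 1]{GIA}; you may wish to make the Slater/strong-duality step explicit since it is a hypothesis of the GIA convergence theorem, but otherwise your argument matches the paper's.
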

\begin{IEEEproof}
By the arithmetic-geometric mean inequality, we know that $\sum_{n}\eta_nv_n\geq\prod_{n}v_n^{\eta_n}$, where $\mathbf v\succ0$ and $\boldsymbol\eta\succeq0$, $\mathbf 1^T\boldsymbol\eta=1$. Letting $K_n=\eta_nv_n$ and $\beta_n^{(t-1)}=\eta_n$ for all $n\in\mathcal N$, we have $\sum_{n\in\mathcal N}K_n\geq\prod_{n\in\mathcal N}\left(\frac{K_n}{\beta_n^{(t-1)}}\right)^{\beta_n^{(t-1)}}$, which indicates that \eqref{eq:constant_cons_C_approx} satisfies Property (i) in \cite{GIA}. By letting $\beta_n^{(t-1)}=\frac{K_n^{(t-1)}}{\sum_{n\in\mathcal N}K_n^{(t-1)}}$, we have $\sum_{n\in\mathcal N}K_n^{(t-1)}=\prod_{n\in\mathcal N}\left(\frac{K_n^{(t-1)}}{\beta_n^{(t-1)}}\right)^{\beta_n^{(t-1)}}$, which indicates that \eqref{eq:constant_cons_C_approx} satisfies Property (ii) in \cite{GIA}. Furthermore, we readily show that \eqref{eq:constant_cons_C_approx} satisfies Property (iii) in \cite{GIA} by taking the derivatives of $\sum_{n\in\mathcal N}K_n$ and $\prod_{n\in\mathcal N}\left(\frac{K_n}{\beta_n^{(t-1)}}\right)^{\beta_n^{(t-1)}}$ with respect to $K_n,n\in\mathcal N$.
%Following the proof of \cite[Proposition 3]{CGP}, we can show that the approximations of $\frac{c_1}{\gamma_C K_0\sum_{n\in\mathcal N}K_n}$ and $\frac{c_4\gamma_C\sum_{n\in\mathcal N}q_{s_0,s_n}K_n^2}{\sum_{n\in\mathcal N}K_n}$ in \eqref{eq:Cons_conv_cons}, i.e., $\frac{c_1}{\gamma_C K_0\prod_{n\in\mathcal N}\left(\frac{K_n}{\beta_n^{(t-1)}}\right)^{\beta_n^{(t-1)}}}$ and $\frac{c_4\gamma_C\sum_{n\in\mathcal N}q_{s_0,s_n}K_n^2}{\prod_{n\in\mathcal N}\left(\frac{K_n}{\beta_n^{(t-1)}}\right)^{\beta_n^{(t-1)}}}$ in \eqref{eq:constant_cons_C_approx},
%satisfy Properties (i), (ii), and (iii) in \cite{GIA} for the convergence of the GIA method.
Besides, as the convex form of Problem~\ref{Prob:constant_GP} (which is a GP) satisfies the Slater's condition \cite{CGP}, Problem~\ref{Prob:constant_GP} has a zero duality gap.
Thus, all the conditions for \cite[Theorem 1]{GIA} are satisfied.
By \cite[Theorem 1]{GIA}, Theorem~\ref{Thm:constant_convergence} is readily shown.
\end{IEEEproof}

\subsection{Solution to Problem~\ref{Prob:fixed_step_size} with $m=E$}\label{SubSec:Solution_Fixed_exp}
Compared to Problem~\ref{Prob:fixed_step_size} with $m=C$, Problem~\ref{Prob:fixed_step_size} with $m=E$ has an extra challenge caused by the products of exponential functions and posynomials.
%In this part, we solve Problem~\ref{Prob:fixed_step_size} with $m=E$,
%which has an additional challenge caused by the products of exponential functions and posynomials, compared to Problem~\ref{Prob:fixed_step_size} with $m=C$.
To address the challenge caused by the non-differentiable constraint functions in \eqref{eq:Cons_time} and \eqref{eq:Cons_conv} with $m=E$ as well as the extra challenge,
%To address the challenges caused by the non-differentiable constraint functions in \eqref{eq:Cons_time} and \eqref{eq:Cons_conv} with $m=E$ and the products of exponential functions and posynomials in \eqref{eq:Cons_conv} with $m=E$,
we equivalently transform Problem~\ref{Prob:fixed_step_size} with $m=E$ into the following problem.
\begin{Prob}[Equivalent Problem of Problem~\ref{Prob:fixed_step_size} with $m=E$]\label{Prob:exp_eq}
For any given $\gamma_E\in\left(0,\frac{1}{L}\right]$ and $\rho_E\in(0,1)$,
%\begin{small}
\begin{align}
&\min_{\substack{\mathbf K\succ\mathbf0,B,T_1,T_2,X_0>0}}{\quad}
E(\mathbf K,B)\nonumber\\%[-5pt]
\mathrm{s.t.}&{\quad}
\eqref{eq:fix_epi_cons1},\ \eqref{eq:fix_epi_cons2},\ \eqref{eq:fix_epi_cons_T},\nonumber\\%[-5pt]
&{\quad}\frac{a_1c_1+\left(a_2c_2T_2^2+\frac{a_3c_3}{B}+C_{\max}X_0\right)\sum_{n\in\mathcal N} K_n+a_3c_4\sum_{n\in\mathcal N} q_{s_0,s_n}K_n^2}
{\left(C_{\max}+a_2c_2T_2^2X_0^3+\frac{a_3c_3X_0^2}{B}\right)\sum_{n\in\mathcal N} K_n+a_3c_4X_0^2\sum_{n\in\mathcal N} q_{s_0,s_n}K_n^2}\leq1,\label{eq:exp_cons_C}\\%[-5pt]
&{\quad}X_0\ln\frac{1}{X_0}\leq X_0K_0\ln\frac{1}{\rho_E},\label{eq:exp_cons_X0_1}\\%[-5pt]
&{\quad}K_0\ln\frac{1}{\rho_E}\leq\ln\frac{1}{X_0},\label{eq:exp_cons_X0_2}\\%[-5pt]
&{\quad}X_0<1.\label{eq:cons_X_0}
\end{align}
%\end{small}
\end{Prob}
%\vspace{-20pt}

\begin{Thm}[Equivalence between Problem~\ref{Prob:fixed_step_size} with $m=E$ and Problem~\ref{Prob:exp_eq}]\label{Thm:Equivalent_Prob_exp}
If $\left(\mathbf K^*,B^*,T_1^*,T_2^*,X_0^*\right)$ is an optimal point of Problem~\ref{Prob:exp_eq},
then $\left(\mathbf K^*, B^*\right)$ is an optimal point of Problem~\ref{Prob:fixed_step_size} with $m=E$.
\end{Thm}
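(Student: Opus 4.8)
The plan is to prove the equivalence through a two-sided feasibility correspondence between Problem~\ref{Prob:fixed_step_size} with $m=E$ and Problem~\ref{Prob:exp_eq}, exploiting that the objective $E(\mathbf K,B)$ depends only on $(\mathbf K,B)$ and not on the auxiliary variables $T_1,T_2,X_0$. First I would fix the intended meaning of each auxiliary variable: $T_1$ is an epigraph variable for $\max_{n\in\mathcal N}\frac{C_n}{F_n}K_n$ (appearing in $T(\mathbf K,B)$ via \eqref{eq:Time}), $T_2$ is an epigraph variable for $\max_{n\in\mathcal N}K_n$ (so that $T_2^2$ replaces $\max_{n\in\mathcal N}K_n^2$ in \eqref{eq:Convergence_exp}), and $X_0$ is a surrogate for $\rho_E^{K_0}$. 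The two logarithmic constraints are the crux: dividing \eqref{eq:exp_cons_X0_1} by $X_0>0$ gives $\ln\frac{1}{X_0}\le K_0\ln\frac{1}{\rho_E}$, i.e. $X_0\ge\rho_E^{K_0}$, whereas \eqref{eq:exp_cons_X0_2} gives $X_0\le\rho_E^{K_0}$; together they pin $X_0=\rho_E^{K_0}$, and \eqref{eq:cons_X_0} records $X_0<1$ (automatic since $\rho_E\in(0,1)$ and $K_0>0$).

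Next I would establish the key algebraic identity that, under $X_0=\rho_E^{K_0}$ and $T_2^2=\max_{n\in\mathcal N}K_n^2$, constraint \eqref{eq:exp_cons_C} is exactly $C_E(\mathbf K,B,\mathbf\Gamma)\le C_{\max}$. Concretely, multiplying the inequality $C_E\le C_{\max}$ read off from \eqref{eq:Convergence_exp} by the strictly positive quantity $(1-X_0)\sum_{n\in\mathcal N}K_n$ and using $\frac{1-X_0^3}{1-X_0}=1+X_0+X_0^2$ and $\frac{1-X_0^2}{1-X_0}=1+X_0$ collapses it precisely to the inequality that the numerator of \eqref{eq:exp_cons_C} is at most its denominator. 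I would also record the monotonicity fact that the ratio in \eqref{eq:exp_cons_C} is nondecreasing in $T_2^2$: its derivative has the sign of $\mathrm{Denom}-X_0^3\,\mathrm{Num}$, which is positive at any feasible point because $X_0^3<1$ and $\mathrm{Num}\le\mathrm{Denom}$ there.

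With these facts the equivalence follows by the standard argument, mirroring Theorem~\ref{Thm:Equivalent_Prob_cons}. For the forward lift, given any $(\mathbf K,B)$ feasible for Problem~\ref{Prob:fixed_step_size} with $m=E$, setting $T_1=\max_{n\in\mathcal N}\frac{C_n}{F_n}K_n$, $T_2=\max_{n\in\mathcal N}K_n$, and $X_0=\rho_E^{K_0}$ satisfies \eqref{eq:fix_epi_cons1}, \eqref{eq:fix_epi_cons2}, \eqref{eq:fix_epi_cons_T}, \eqref{eq:exp_cons_C}, \eqref{eq:exp_cons_X0_1}, \eqref{eq:exp_cons_X0_2}, \eqref{eq:cons_X_0} at the same objective, using \eqref{eq:Time} and the identity above. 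For the converse projection, given any $(\mathbf K,B,T_1,T_2,X_0)$ feasible for Problem~\ref{Prob:exp_eq}, the bound $T_1\ge\max_{n\in\mathcal N}\frac{C_n}{F_n}K_n$ turns \eqref{eq:fix_epi_cons_T} into \eqref{eq:Cons_time}, while $X_0=\rho_E^{K_0}$ and $T_2\ge\max_{n\in\mathcal N}K_n$ together with the monotonicity fact turn \eqref{eq:exp_cons_C} into \eqref{eq:Cons_conv} with $m=E$; hence $(\mathbf K,B)$ is feasible for Problem~\ref{Prob:fixed_step_size} with $m=E$ at the same objective. Since the objective is unchanged, the two directions show the problems share their optimal value, so the projection $(\mathbf K^*,B^*)$ of any optimizer of Problem~\ref{Prob:exp_eq} optimizes Problem~\ref{Prob:fixed_step_size} with $m=E$.

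I expect the main obstacle to be the algebraic verification that \eqref{eq:exp_cons_C} collapses to $C_E\le C_{\max}$ under $X_0=\rho_E^{K_0}$, namely getting the grouping of the $X_0$, $X_0^2$, and $X_0^3$ terms in the numerator and denominator to line up with the expansions of $(1-X_0^2)$ and $(1-X_0^3)$, and, relatedly, making the monotonicity-in-$T_2$ step rigorous so that the inequality $T_2\ge\max_{n\in\mathcal N}K_n$ (rather than equality) still implies the original convergence constraint. Everything else is routine bookkeeping on the epigraph reformulation already employed for the $m=C$ case.
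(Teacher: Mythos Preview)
Your proposal is correct and follows essentially the same route as the paper: introduce the epigraph variables $T_1,T_2$ exactly as in the $m=C$ case (Theorem~\ref{Thm:Equivalent_Prob_cons}), and pin $X_0=\rho_E^{K_0}$ via the pair \eqref{eq:exp_cons_X0_1}--\eqref{eq:exp_cons_X0_2}. Your treatment is in fact more explicit than the paper's Appendix~F, which simply asserts the equivalence after the substitution; in particular, your algebraic check that \eqref{eq:exp_cons_C} collapses to $C_E\le C_{\max}$ under $X_0=\rho_E^{K_0}$, and your monotonicity-in-$T_2$ argument to handle the case $T_2>\max_{n\in\mathcal N}K_n$, fill in details the paper leaves to the reader.
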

\begin{IEEEproof}
See Appendix D.
\end{IEEEproof}

By Theorem~\ref{Thm:Equivalent_Prob_exp}, we can solve Problem~\ref{Prob:exp_eq} instead of Problem~\ref{Prob:fixed_step_size} with $m=E$.
As the form of Problem~\ref{Prob:exp_eq} is similar to that of Problem~\ref{Prob:constant_eq},
we propose an iterative algorithm to obtain a KKT point of Problem~\ref{Prob:exp_eq} using the methods proposed in Sec.~\ref{SubSec:Solution_Fixed_constant}.
%we can solve it using the same method as in Sec.~\ref{SubSec:Solution_Fixed_constant}.
%We propose an iterative algorithm, i.e. Algorithm~\ref{Alg:exp}, to obtain a KKT point of Problem~\ref{Prob:exp_eq}.
Specifically, at iteration $t$,
%the approximate GP of Problem~\ref{Prob:exp_eq}, i.e., Problem~\ref{Prob:exp_GP}, \hltx{is obtained by upper bounding the constraint functions in \eqref{eq:exp_cons_C}, \eqref{eq:exp_cons_X0_1}, and \eqref{eq:exp_cons_X0_2} using the arithmetic-geometric mean inequality and first order Taylor expansion;}
update $\left(\mathbf K^{(t)},B^{(t)},T_2^{(t)},X_0^{(t)}\right)$ by solving Problem~\ref{Prob:exp_GP}, which is parameterized by $\left(\mathbf K^{(t-1)},B^{(t-1)},T_2^{(t-1)},X_0^{(t-1)}\right)$ obtained at iteration $t-1$.
%Specifically, at iteration $t$, we update $\left(\mathbf K^{(t)},B^{(t)},T_2^{(t)},X_0^{(t)}\right)$ by solving the following approximate GP of Problem~\ref{Prob:exp_eq}, which is parameterized by $\left(\mathbf K^{(t-1)},B^{(t-1)},T_2^{(t-1)},X_0^{(t-1)}\right)$ obtained at iteration $t-1$.
\begin{algorithm}[t]
\caption{Algorithm for Obtaining a KKT Point of Problem~\ref{Prob:exp_eq}}
\label{Alg:exp}
\begin{algorithmic}[1]
{\small
\STATE {\bf{Initialize:}}
Choose any feasible solution $\left(\mathbf K^{(0)}, B^{(0)}, T_1^{(0)}, T_2^{(0)},X_0^{(0)}\right)$ of Problem~\ref{Prob:exp_eq},
and set $t=1$.
\REPEAT
    \STATE
    Compute $\left(\mathbf K^{(t)}, B^{(t)}, T_1^{(t)}, T_2^{(t)},X_0^{(t)}\right)$ by transforming Problem~\ref{Prob:exp_GP} into a GP in convex form
    and solving it with standard convex optimization techniques.
    \STATE Set $t:=t+1$.
\UNTIL{Some convergence criteria is met.}
}
\end{algorithmic}
\end{algorithm}

\begin{Prob}[Approximate GP of Problem~\ref{Prob:exp_eq} at Iteration $t$]\label{Prob:exp_GP}
For any given $\gamma_E\in\left(0,\frac{1}{L}\right]$ and $\rho_E\in(0,1)$,
%\begin{small}
\begin{align}
&\min_{\substack{\mathbf K\succ\mathbf0,B,T_1,T_2,X_0>0}}
{\quad}E(\mathbf K,B)\nonumber\\%[-5pt]
&\mathrm{s.t.}{\quad}
\eqref{eq:fix_epi_cons1},\ \eqref{eq:fix_epi_cons2},\ \eqref{eq:cons_X_0},\nonumber\\%[-5pt]
&{\quad}\frac{a_1c_1+\left(a_2c_2T_2^2+\frac{a_3c_3}{B}+C_{\max}X_0\right)\sum_{n\in\mathcal N} K_n+a_3c_4\sum_{n\in\mathcal N}q_{s_0,s_n}K_n^2}
{\prod\limits_{n\in\mathcal N}\left(\frac{C_{\max}K_n}{\lambda_{1,n}^{(t-1)}}\right)^{\lambda_{1,n}^{(t-1)}}
\left(\frac{a_2c_2T_2^2X_0^3K_n}{\lambda_{2,n}^{(t-1)}}\right)^{\lambda_{2,n}^{(t-1)}}
\left(\frac{a_3c_3X_0^2K_n}{B\lambda_{3,n}^{(t-1)}}\right)^{\lambda_{3,n}^{(t-1)}}
\left(\frac{a_3c_4q_{s_0,s_n}X_0^2K_n^2}{\lambda_{4,n}^{(t-1)}}\right)^{\lambda_{4,n}^{(t-1)}}
}\leq1,\label{eq:exp_cons_C_approx}\\%[-5pt]
&{\quad}\frac{\ln\frac{1}{X_0^{(t-1)}}X_0+X_0^{(t-1)}}
{X_0\left(\frac{\left(K_0^{(t-1)}\ln\frac{1}{\rho_E}+1\right)K_0}{K_0^{(t-1)}}\right)
^{\frac{K_0^{(t-1)}\ln\frac{1}{\rho_E}}{K_0^{(t-1)}\ln\frac{1}{\rho_E}+1}}
\left(K_0^{(t-1)}\ln\frac{1}{\rho_E}+1\right)
^{\frac{1}{{K_0^{(t-1)}\ln\frac{1}{\rho_E}+1}}}}\leq1,\label{eq:exp_cons_X0_1_approx}\\%[-5pt]
&{\quad}\frac{\frac{X_0}{X_0^{(t-1)}}+K_0\ln\frac{1}{\rho_E}}{\ln\frac{1}{X_0^{(t-1)}}+1}\leq1,\label{eq:exp_cons_X0_2_approx}
\end{align}
%\end{small}
where $\beta_n^{(t-1)}\triangleq\frac{K_n^{(t-1)}}{\sum_{n\in\mathcal N}K_n^{(t-1)}}$,
$\lambda_0^{(t-1)}\triangleq\left(C_{\max}
+a_2c_2\left(T_2^{(t-1)}\right)^2\left(X_0^{(t-1)}\right)^3
+\frac{a_3c_3\left(X_0^{(t-1)}\right)^2}{B^{(t-1)}}\right)\\\sum_{n\in\mathcal N}K_n^{(t-1)}
+a_3c_4\left(X_0^{(t-1)}\right)^2\sum_{n\in\mathcal N} q_{s_0,s_n}\left(K_n^{(t-1)}\right)^2$,
$\lambda_{1,n}^{(t-1)}\triangleq
\frac{C_{\max}K_n^{(t-1)}}{\lambda_0^{(t-1)}}$,
$\lambda_{2,n}^{(t-1)}\triangleq\\
\frac{a_2c_2\left(T_2^{(t-1)}\right)^2\left(X_0^{(t-1)}\right)^3K_n^{(t-1)}}
{\lambda_0^{(t-1)}}$,
$\lambda_{3,n}^{(t-1)}\triangleq
\frac{a_3c_3\left(X_0^{(t-1)}\right)^2K_n^{(t-1)}}
{B^{(t-1)}\lambda_0^{(t-1)}}$,
$\lambda_{4,n}^{(t-1)}\triangleq
\frac{a_3c_4q_{s_0,s_n}\left(X_0^{(t-1)}\right)^2\left(K_n^{(t-1)}\right)^2}
{\lambda_0^{(t-1)}}$, and $\left(\mathbf K^{(t)}, B^{(t)}, T_1^{(t)}, T_2^{(t)},X_0^{(t)}\right)$ denotes an optimal solution of Problem~\ref{Prob:exp_GP}.
\end{Prob}
The constraint functions in \eqref{eq:exp_cons_C_approx}, \eqref{eq:exp_cons_X0_1_approx}, and \eqref{eq:exp_cons_X0_2_approx}, constructed based on the arithmetic-geometric mean inequality and first-order Taylor series expansion, are approximations of the constraint functions in \eqref{eq:exp_cons_C}, \eqref{eq:exp_cons_X0_1}, and \eqref{eq:exp_cons_X0_2} at $(\mathbf K^{(t-1)},B^{(t-1)},T_2^{(t-1)},X_0^{(t-1)})$, respectively, and are posynomials.
Consequently, Problem~\ref{Prob:exp_GP} is a standard GP and can be readily transformed into a convex problem and solved by using standard convex optimization techniques. If an interior-point method is applied, the computational complexity for solving Problem~\ref{Prob:exp_GP} is $\mathcal O(N^{3.5})$ \cite{CVX}.
The details are summarized in Algorithm~\ref{Alg:exp}.
Analogously, we have the following result.
\begin{Thm}[Convergence of Algorithm~\ref{Alg:exp}]\label{Thm:exp_convergence}
$\left(\mathbf K^{(t)},B^{(t)},T_1^{(t)},T_2^{(t)},X_0^{(t)}\right)$ obtained by Algorithm~\ref{Alg:exp} converges to a KKT point of Problem~\ref{Prob:exp_eq}, as $t\rightarrow\infty$.
\end{Thm}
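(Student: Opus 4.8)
The plan is to follow the same inner-approximation argument used for Theorem~\ref{Thm:constant_convergence} in Appendix~E and reduce the claim to \cite[Proposition 3]{CGP}. That result guarantees that the optimal solutions of a sequence of approximate problems converge to a KKT point of the original problem, provided each approximate constraint function is a valid inner approximation of its original counterpart at the current expansion point, in the sense of three standard conditions: (i) the approximate constraint function dominates (upper-bounds) the original one over the whole domain, so that feasibility of Problem~\ref{Prob:exp_GP} implies feasibility of Problem~\ref{Prob:exp_eq}; (ii) the two functions coincide at the expansion point $\left(\mathbf K^{(t-1)},B^{(t-1)},T_2^{(t-1)},X_0^{(t-1)}\right)$; and (iii) their gradients coincide there. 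Since the objective $E(\mathbf K,B)$ and the constraints \eqref{eq:fix_epi_cons1}, \eqref{eq:fix_epi_cons2}, \eqref{eq:cons_X_0} are carried over unchanged, it suffices to verify (i)--(iii) for the three approximated constraints \eqref{eq:exp_cons_C_approx}, \eqref{eq:exp_cons_X0_1_approx}, and \eqref{eq:exp_cons_X0_2_approx}.

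First I would treat \eqref{eq:exp_cons_C}, a ratio of two posynomials bounded by one, whose only non-posynomial feature is the posynomial denominator. I would condense this denominator to a monomial via the weighted arithmetic--geometric-mean inequality, assigning each term the weight $\lambda_{1,n}^{(t-1)},\dots,\lambda_{4,n}^{(t-1)}$ equal to its normalized value at the previous iterate, exactly as in \eqref{eq:exp_cons_C_approx}. Because this inequality lower-bounds the denominator by the monomial, the resulting fraction upper-bounds the original constraint function, yielding (i); the bound is tight precisely when the weights equal the normalized terms, which holds at the expansion point, yielding (ii); and differentiating the monomial surrogate in logarithmic coordinates recovers the true gradient, yielding (iii). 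This is the identical single-condensation step used in the constant-step-size case, so its verification is routine.

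The substantive work lies in the two coupling constraints \eqref{eq:exp_cons_X0_1} and \eqref{eq:exp_cons_X0_2}, which encode the auxiliary relation $X_0=\rho_E^{K_0}$ through a pair of squeezing inequalities and contain the transcendental terms $X_0\ln\frac{1}{X_0}$ and $\ln\frac{1}{X_0}$. For these I would exploit the concavity of $-X_0\ln X_0$ and the convexity of $\ln\frac{1}{X_0}$: in \eqref{eq:exp_cons_X0_1} the concave term is replaced by its first-order tangent upper bound and the remaining right-hand monomial product is condensed by the weighted arithmetic--geometric-mean inequality, producing the posynomial surrogate \eqref{eq:exp_cons_X0_1_approx}; in \eqref{eq:exp_cons_X0_2} the convex term on the larger side is replaced by its tangent lower bound, producing \eqref{eq:exp_cons_X0_2_approx}. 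The hard part will be the sign bookkeeping: the two logarithmic terms have opposite curvature, so the tangent must serve as an upper bound in one constraint and a lower bound in the other, and one must check that each choice is conservative for the particular $\le 1$ form so that the approximate feasible region stays inside the original one, while the tangent construction simultaneously delivers the value- and gradient-matching conditions (ii)--(iii). Once (i)--(iii) are established for all three approximated constraints, \cite[Proposition 3]{CGP} applies verbatim and gives convergence of $\left(\mathbf K^{(t)},B^{(t)},T_1^{(t)},T_2^{(t)},X_0^{(t)}\right)$ to a KKT point of Problem~\ref{Prob:exp_eq}, completing the proof.
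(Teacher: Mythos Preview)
Your proposal is correct and follows essentially the same approach as the paper: Appendix~G linearizes $X_0\ln\frac{1}{X_0}$ and $\ln\frac{1}{X_0}$ via first-order Taylor expansion (exactly the tangent bounds you describe), verifies that these surrogates satisfy Properties (i)--(iii) of \cite{GIA}, and then defers to the proof of Theorem~\ref{Thm:constant_convergence}. The only minor slip is the citation: in the paper the convergence conclusion itself comes from \cite[Theorem~1]{GIA}, while \cite[Proposition~3]{CGP} is invoked only to establish the three properties for the AM--GM single-condensation step.
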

\begin{IEEEproof}
By the first-order Taylor series expansion \cite{MM},
we can show that the approximations of $X_0\ln\frac{1}{X_0}$ in \eqref{eq:exp_cons_X0_1} and $\ln\frac{1}{X_0}$ in \eqref{eq:exp_cons_X0_2}, i.e.,
$(\ln\frac{1}{\hat{X_0}}-1)X_0+\hat{X_0}$
and
$-\frac{X_0}{\hat{X_0}}+\ln\frac{1}{\hat{X_0}}+1$,
satisfy Properties (i), (ii), and (iii) in \cite{GIA}.
Following the proof of Theorem~\ref{Thm:constant_convergence},
we can show Theorem~\ref{Thm:exp_convergence}.\end{IEEEproof}

\subsection{Solution to Problem~\ref{Prob:fixed_step_size} with $m=D$}\label{SubSec:Solution_Fixed_dim}
Compared to Problem~\ref{Prob:fixed_step_size} with $m=C$,
Problem~\ref{Prob:fixed_step_size} with $m=D$ has an extra challenge caused by the products of logarithmic functions and posynomials.
%In this part, we solve Problem~\ref{Prob:fixed_step_size} with $m=D$,
%which has an additional challenge caused by the products of logarithmic functions and posynomials, compared to Problem~\ref{Prob:fixed_step_size} with $m=C$.
To address the challenge caused by the non-differentiable constraint functions in \eqref{eq:Cons_time} and \eqref{eq:Cons_conv} with $m=D$ as well as the extra challenge,
we equivalently transform Problem~\ref{Prob:fixed_step_size} with $m=D$ into the following problem.
\begin{Prob}[Equivalent Problem of Problem~\ref{Prob:fixed_step_size} with $m=D$]\label{Prob:dim_eq}
For any given $\gamma_D\in\left(0,\frac{1}{L}\right]$ and $\rho_D>0$,
%\begin{small}
\begin{align}
&\min_{\substack{\mathbf K\succ\mathbf0,B,T_1,T_2>0}}{\quad}E(\mathbf K,B)\nonumber\\%[-5pt]
&\mathrm{s.t.}{\quad}\eqref{eq:fix_epi_cons1},\ \eqref{eq:fix_epi_cons2},\ \eqref{eq:fix_epi_cons_T},\nonumber\\%[-5pt]
&\frac{b_1c_1K_0}{\sum_{n\in\mathcal N}K_n}
\!+\!b_2c_2T_2^2K_0
\!+\!\frac{b_3c_3K_0}{B}
\!+\!\frac{b_3c_4K_0\sum_{n\in\mathcal N}q_{s_0,s_n}K_n^2}{\sum_{n\in\mathcal N}K_n}
\leq C_{\max}K_0\ln\left(\frac{K_0+\rho_D+1}{\rho_D+1}\right)\!.\!\label{eq:dim_cons_C_eq}
\end{align}
%\end{small}
\end{Prob}
\begin{Thm}[Equivalence between Problem~\ref{Prob:fixed_step_size} with $m=D$ and Problem~\ref{Prob:dim_eq}]\label{Thm:Equivalent_Prob_dim}
If $\left(\mathbf K^*,B^*,T_1^*,T_2^*\right)$ is an optimal point of Problem~\ref{Prob:dim_eq},
then $\left(\mathbf K^*,B^*\right)$ is an optimal point of Problem~\ref{Prob:fixed_step_size} with $m=D$.
\end{Thm}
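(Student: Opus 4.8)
The plan is to treat $T_1$ and $T_2$ as epigraph variables that eliminate the two non-differentiable $\max$ operations, and then establish the equivalence in the two standard directions: the projection $(\mathbf K^*,B^*)$ of an optimal point of Problem~\ref{Prob:dim_eq} is feasible for Problem~\ref{Prob:fixed_step_size} with $m=D$, and conversely no feasible point of the latter can beat its energy. First I would pin down the roles of the auxiliary variables. Constraint~\eqref{eq:fix_epi_cons1} forces $T_1\geq\max_{n\in\mathcal N}\frac{C_n}{F_n}K_n$, so that the posynomial constraint~\eqref{eq:fix_epi_cons_T} is a differentiable surrogate for the time constraint~\eqref{eq:Cons_time}. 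Constraint~\eqref{eq:fix_epi_cons2} forces $T_2\geq\max_{n\in\mathcal N}K_n$; since $\mathbf K\succ\mathbf 0$, this gives $T_2^2\geq\max_{n\in\mathcal N}K_n^2$, so $T_2^2$ can stand in for the non-differentiable term $\max_{n\in\mathcal N}K_n^2$ appearing in $C_D$ in~\eqref{eq:Convergence_dim}. I would also record that \eqref{eq:dim_cons_C_eq} is exactly $C_D(\mathbf K,B,\mathbf\Gamma)\leq C_{\max}$ after multiplying both sides by $K_0\ln\big(\frac{K_0+\rho_D+1}{\rho_D+1}\big)$, which is strictly positive on the feasible domain because $K_0>0$ makes $\frac{K_0+\rho_D+1}{\rho_D+1}>1$; below I abbreviate this factor as $K_0\ln(\cdots)$.

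For the feasibility direction, let $(\mathbf K^*,B^*,T_1^*,T_2^*)$ be optimal for Problem~\ref{Prob:dim_eq}. From \eqref{eq:fix_epi_cons1}, $T_1^*\geq\max_{n\in\mathcal N}\frac{C_n}{F_n}K_n^*$, and substituting this bound into \eqref{eq:fix_epi_cons_T} yields $T(\mathbf K^*,B^*)=K_0^*\big(B^*\max_{n\in\mathcal N}\frac{C_n}{F_n}K_n^*+\text{const}\big)\leq K_0^*(B^*T_1^*+\text{const})\leq T_{\max}$, i.e.\ \eqref{eq:Cons_time} holds. From \eqref{eq:fix_epi_cons2}, $(T_2^*)^2\geq\max_{n\in\mathcal N}(K_n^*)^2$; dividing \eqref{eq:dim_cons_C_eq} by the positive factor $K_0^*\ln(\cdots)$ and then replacing $(T_2^*)^2$ with the smaller quantity $\max_{n\in\mathcal N}(K_n^*)^2$ only decreases the left-hand side, so $C_D(\mathbf K^*,B^*,\mathbf\Gamma)\leq C_{\max}$, i.e.\ \eqref{eq:Cons_conv} with $m=D$ holds. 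Hence $(\mathbf K^*,B^*)$ is feasible for Problem~\ref{Prob:fixed_step_size} with $m=D$.

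For the optimality direction, I would argue by contradiction. Suppose some $(\mathbf K',B')$ feasible for Problem~\ref{Prob:fixed_step_size} with $m=D$ satisfies $E(\mathbf K',B')<E(\mathbf K^*,B^*)$. Set the tight values $T_1'\triangleq\max_{n\in\mathcal N}\frac{C_n}{F_n}K_n'$ and $T_2'\triangleq\max_{n\in\mathcal N}K_n'$. Then \eqref{eq:fix_epi_cons1} and \eqref{eq:fix_epi_cons2} hold by construction; \eqref{eq:fix_epi_cons_T} follows from $T(\mathbf K',B')=K_0'(B'T_1'+\text{const})\leq T_{\max}$; and because $T_2'=\max_{n\in\mathcal N}K_n'$ makes $(T_2')^2=\max_{n\in\mathcal N}(K_n')^2$, multiplying $C_D(\mathbf K',B',\mathbf\Gamma)\leq C_{\max}$ through by $K_0'\ln(\cdots)$ reproduces \eqref{eq:dim_cons_C_eq}. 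Thus $(\mathbf K',B',T_1',T_2')$ is feasible for Problem~\ref{Prob:dim_eq} with objective $E(\mathbf K',B')<E(\mathbf K^*,B^*)$, contradicting the optimality of $(\mathbf K^*,B^*,T_1^*,T_2^*)$. Therefore $(\mathbf K^*,B^*)$ is optimal for Problem~\ref{Prob:fixed_step_size} with $m=D$.

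The argument is structurally identical to that for Theorem~\ref{Thm:Equivalent_Prob_cons} (the $m=C$ case), so I expect no conceptual obstacle. The only point needing care—and the only place the diminishing rule differs—is the exactness of the substitution $\max_{n\in\mathcal N}K_n^2\leftrightarrow T_2^2$, which relies on the identity $\max_{n\in\mathcal N}K_n^2=\big(\max_{n\in\mathcal N}K_n\big)^2$ under $\mathbf K\succ\mathbf 0$ together with the availability of the tight choice $T_2=\max_{n\in\mathcal N}K_n$; and the justification that clearing the denominator $K_0\ln(\cdots)$ preserves the inequality, which hinges on its strict positivity on the feasible set. Both are immediate, so the remainder is routine bookkeeping.
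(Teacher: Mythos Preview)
Your proposal is correct and takes essentially the same approach as the paper, which simply states that the result follows by repeating the argument for Theorem~\ref{Thm:Equivalent_Prob_cons}: introduce $T_1,T_2$ as epigraph variables for the two $\max$ terms, then argue by contradiction using the tight choices $T_1=\max_{n\in\mathcal N}\frac{C_n}{F_n}K_n$ and $T_2=\max_{n\in\mathcal N}K_n$. Your write-up is in fact more complete than the paper's, since you explicitly verify the feasibility direction and justify that multiplying through by the strictly positive factor $K_0\ln\big(\frac{K_0+\rho_D+1}{\rho_D+1}\big)$ is what turns $C_D\leq C_{\max}$ into \eqref{eq:dim_cons_C_eq}.
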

\begin{IEEEproof}
Following the proof of Theorem~\ref{Thm:Equivalent_Prob_cons}, we readily show Theorem~\ref{Thm:Equivalent_Prob_dim}.
\end{IEEEproof}
By Theorem~\ref{Thm:Equivalent_Prob_dim}, we can solve Problem~\ref{Prob:dim_eq} instead of Problem~\ref{Prob:fixed_step_size} with $m=D$.
As the form of Problem~\ref{Prob:dim_eq} is similar to that of Problem~\ref{Prob:constant_eq},
we propose an iterative algorithm to obtain a KKT point of Problem~\ref{Prob:dim_eq} using the methods proposed in Sec.~\ref{SubSec:Solution_Fixed_constant}.
%we can solve it using the same method as in Sec.~\ref{SubSec:Solution_Fixed_constant}.
%We propose an iterative algorithm, i.e. Algorithm~\ref{Alg:dim}, to obtain a KKT point of Problem~\ref{Prob:dim_eq}.
Specifically, at iteration $t$,
%the approximate GP of Problem~\ref{Prob:dim_eq}, i.e., Problem~\ref{Prob:dim_GP},
%\hltx{is obtained by upper bounding and lower bounding the L.H.S and R.H.S of the constraint in \eqref{eq:dim_cons_C_eq} by using the arithmetic-geometric mean inequality and first order Taylor expansion, respectively;}
update $\left(\mathbf K^{(t)}, B^{(t)}\right)$ by solving Problem~\ref{Prob:dim_GP}, which is parameterized by $\mathbf K^{(t-1)}$ obtained at iteration $t-1$.
%Specifically, at iteration $t$, we update $\left(\mathbf K^{(t)}, B^{(t)}\right)$ by solving the following approximate GP of Problem~\ref{Prob:dim_eq}, which is parameterized by $\mathbf K^{(t-1)}$ obtained at iteration $t-1$.
\begin{algorithm}[t]
\caption{Algorithm for Obtaining a KKT Point of Problem~\ref{Prob:dim_eq}}
\label{Alg:dim}
\begin{algorithmic}[1]
\small
\STATE {\bf{Initialize:}}
Choose any feasible solution $\left(\mathbf K^{(0)}, B^{(0)}, T_1^{(0)}, T_2^{(0)}\right)$ of Problem~\ref{Prob:dim_eq},
and set $t=1$.
\REPEAT
    \STATE
    Compute $\left(\mathbf K^{(t)}, B^{(t)}, T_1^{(t)}, T_2^{(t)}\right)$ by transforming Problem~\ref{Prob:dim_GP} into a GP in convex form
    and solving it with standard convex optimization techniques.
    \STATE Set $t:=t+1$.
\UNTIL{Some convergence criteria is met.}
\normalsize
\end{algorithmic}
\end{algorithm}
%\setlength{\textfloatsep}{10pt}

%\vspace{-20pt}
\begin{Prob}[Approximate GP of Problem~\ref{Prob:dim_eq} at Iteration $t$]\label{Prob:dim_GP}
For any given $\gamma_D\in\left(0,\frac{1}{L}\right]$ and $\rho_D>0$,
%\begin{small}
\begin{align}
&\min_{\substack{\mathbf K\succ\mathbf0,B,T_1,T_2>0}}{\quad}E(\mathbf K,B)\nonumber\\%[-5pt]
\mathrm{s.t.}&{\quad}\eqref{eq:fix_epi_cons1},\ \eqref{eq:fix_epi_cons2},\ \eqref{eq:fix_epi_cons_T},\nonumber\\%[-5pt]
&{\quad}\frac{\frac{b_1c_1}{\prod\limits_{n\in\mathcal N}\left(\frac{K_n}{\beta_n^{(t-1)}}\right)^{\beta_n^{(t-1)}}}+b_2c_2T_2^2+\frac{b_3c_3}{B}
+\frac{b_3c_4\sum_{n\in\mathcal N}q_{s_0,s_n}K_n^2}{\prod_{n\in\mathcal N}\left(\frac{K_n}{\beta_n^{(t-1)}}\right)^{\beta_n^{(t-1)}}}+\frac{C_{\max}\left(K_0^{(t-1)}\right)^2}{K_0\left(K_0^{(t-1)}+\rho_D+1\right)}}
{C_{\max}\left(\ln\left(\frac{K_0^{(t-1)}+\rho_D+1}{\rho_D+1}\right)
+\frac{K_0^{(t-1)}}{K_0^{(t-1)}+\rho_D+1}\right)}\leq1,\label{eq:dim_cons_C_approx}
\end{align}
%\end{small}
where $\beta_n^{(t-1)}\triangleq\frac{K_n^{(t-1)}}{\sum_{n\in\mathcal N}K_n^{(t-1)}}$,
and $\left(\mathbf K^{(t)}, B^{(t)}, T_1^{(t)}, T_2^{(t)}\right)$ denotes an optimal solution of Problem~\ref{Prob:dim_GP}.
\end{Prob}
The constraint function in \eqref{eq:dim_cons_C_approx}, constructed based on the arithmetic-geometric mean inequality and first-order Taylor series expansion, is an approximation of the constraint function in \eqref{eq:dim_cons_C_eq} at $\mathbf K^{(t-1)}$ and is a posynomial.
Thus, Problem~\ref{Prob:dim_GP} is a standard GP and can be readily transformed into a convex problem and solved by using standard convex optimization techniques. If an interior-point method is applied, the computational complexity for solving Problem~\ref{Prob:dim_GP} is $\mathcal O(N^{3.5})$ \cite{CVX}.
The details are summarized in Algorithm~\ref{Alg:dim}.
Analogously, we have the following result.
\begin{Thm}[Convergence of Algorithm~\ref{Alg:dim}]\label{Thm:dim_convergence}
$\left(\mathbf K^{(t)},B^{(t)},T_1^{(t)},T_2^{(t)}\right)$ obtained by Algorithm~\ref{Alg:dim} converges to a KKT point of Problem~\ref{Prob:dim_eq}, as $t\rightarrow\infty$.
\end{Thm}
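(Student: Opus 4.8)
The plan is to recognize Algorithm~\ref{Alg:dim} as an instance of the general inner approximation (GIA) scheme \cite{GIA} combined with the complementary geometric programming (CGP) trick \cite{CGP}, and to conclude via \cite[Proposition 3]{CGP}, exactly along the lines of the proof of Theorem~\ref{Thm:constant_convergence} in Appendix~E. First I would note that the objective $E(\mathbf K,B)$ is a posynomial and that constraints \eqref{eq:fix_epi_cons1}, \eqref{eq:fix_epi_cons2}, \eqref{eq:fix_epi_cons_T} are already monomials/posynomials; hence they are carried over to Problem~\ref{Prob:dim_GP} unchanged and require no approximation. Thus the entire burden falls on the single non-GP-compatible constraint \eqref{eq:dim_cons_C_eq}, whose approximation by the posynomial constraint \eqref{eq:dim_cons_C_approx} is what must be justified.

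The core of the argument is to verify that \eqref{eq:dim_cons_C_approx} is a valid inner approximation of \eqref{eq:dim_cons_C_eq} at the iterate $\mathbf K^{(t-1)}$, i.e., that the three GIA conditions hold: (i) the approximate constraint function is a global upper bound on the original one, so that feasibility for Problem~\ref{Prob:dim_GP} implies feasibility for Problem~\ref{Prob:dim_eq}; (ii) the two functions coincide at $\mathbf K^{(t-1)}$; and (iii) their gradients coincide at $\mathbf K^{(t-1)}$. These conditions are precisely the hypotheses of \cite[Proposition 3]{CGP}, and once established they guarantee that $\left(\mathbf K^{(t)},B^{(t)},T_1^{(t)},T_2^{(t)}\right)$ converges to a KKT point of Problem~\ref{Prob:dim_eq}.

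Two distinct approximations feed into these conditions, and I would treat them separately. The denominators $\sum_{n\in\mathcal N}K_n$ appearing in the first and last terms of the left-hand side of \eqref{eq:dim_cons_C_eq} are handled by the CGP trick: the arithmetic--geometric mean inequality gives $\sum_{n\in\mathcal N}K_n\geq\prod_{n\in\mathcal N}\left(K_n/\beta_n^{(t-1)}\right)^{\beta_n^{(t-1)}}$ with $\beta_n^{(t-1)}=K_n^{(t-1)}/\sum_{n\in\mathcal N}K_n^{(t-1)}$, so replacing each posynomial denominator by its monomial lower bound upper-bounds the corresponding fraction, and equality holds at $\mathbf K^{(t-1)}$ by the choice of the weights $\beta_n^{(t-1)}$. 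The extra difficulty relative to the constant case is the product of a logarithm and a posynomial, $C_{\max}K_0\ln\!\left(\tfrac{K_0+\rho_D+1}{\rho_D+1}\right)$, on the right-hand side. Writing $h(K_0)\triangleq K_0\ln\!\left(\tfrac{K_0+\rho_D+1}{\rho_D+1}\right)$, one checks $h''(K_0)=\tfrac{1}{K_0+\rho_D+1}+\tfrac{\rho_D+1}{(K_0+\rho_D+1)^2}>0$, so $h$ is convex and its tangent at $K_0^{(t-1)}$ is a global affine lower bound; substituting this lower bound for the right-hand side tightens the constraint (inner approximation), and after dividing through by $K_0$ and rearranging the resulting affine-in-$K_0$ terms one recovers exactly the numerator-plus-extra-term and the constant denominator of \eqref{eq:dim_cons_C_approx}. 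Both the AM--GM substitution and the tangent to $h$ are first-order constructions, so conditions (ii) and (iii) follow from the tightness of AM--GM and of the tangent line at $\mathbf K^{(t-1)}$.

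I expect the main obstacle to be the careful verification of the global upper-bound property (i) and the gradient-matching property (iii) for the combined approximation, and in particular confirming that the convexity-based tangent lower bound on $h(K_0)$ yields a genuinely inner approximation that simultaneously matches value and gradient at $K_0^{(t-1)}$ after the division by $K_0$ and rearrangement into the posynomial-ratio form of \eqref{eq:dim_cons_C_approx}. This is the step that differs from Appendix~E; once it is in place, the remainder of the argument is a routine application of \cite[Proposition 3]{CGP}, mirroring the structure of Theorem~\ref{Thm:constant_convergence} and Theorem~\ref{Thm:exp_convergence}.
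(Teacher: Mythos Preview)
Your proposal is correct and follows essentially the same approach as the paper's proof in Appendix~H: the paper replaces $K_0\ln\!\left(\tfrac{K_0+\rho_D+1}{\rho_D+1}\right)$ by its first-order Taylor expansion (exactly your tangent-line lower bound via convexity) to obtain the affine surrogate $\left(\ln\!\left(\tfrac{\hat K_0+\rho_D+1}{\rho_D+1}\right)+\tfrac{\hat K_0}{\hat K_0+\rho_D+1}\right)K_0-\tfrac{\hat K_0^2}{\hat K_0+\rho_D+1}$, verifies the GIA Properties (i)--(iii), and then defers to the proof of Theorem~\ref{Thm:constant_convergence} for the AM--GM treatment of $\sum_{n\in\mathcal N}K_n$ and the invocation of \cite[Proposition~3]{CGP}. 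Your explicit computation of $h''(K_0)>0$ is more detail than the paper supplies, but the route is identical.
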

\begin{IEEEproof}
%See Appendix H.
By the first-order Taylor series expansion \cite{MM},
we can show that the approximation of $K_0\ln\left({\frac{K_0+\rho+1}{\rho+1}}\right)$ in \eqref{eq:dim_cons_C_eq}, i.e., $\left(\ln\left(\frac{\hat{K_0}+\rho+1}{\rho+1}\right)+\frac{\hat{K_0}}{\hat{K_0}+\rho+1}\right)K_0-\frac{\hat{K_0}^2}{\hat{K_0}+\rho+1}$,
satisfies Properties (i), (ii), and (iii) in \cite{GIA}.
Following the proof of Theorem~\ref{Thm:constant_convergence},
we can show Theorem~\ref{Thm:dim_convergence}.
\end{IEEEproof}

\section{Optimization of All Algorithm Parameters}\label{sec:Opt_adj_step_size}
The step size sequence of an FL algorithm chosen only for improving the convergence performance of centralized learning may not yield satisfactory communication time and communication energy when implementing it in an edge computing system.
To explore the full potential of FL, in this section, we optimize all the algorithm parameters, including the numbers of global and local iterations, mini-batch size, and step size sequence, to minimize the energy cost under the time constraint and convergence error constraint,
as illustrated in Fig.~\ref{Fig:Structure_2}.
\begin{figure}[t]
\setlength{\belowcaptionskip}{-10pt}
\begin{center}
{\includegraphics[width=300pt]{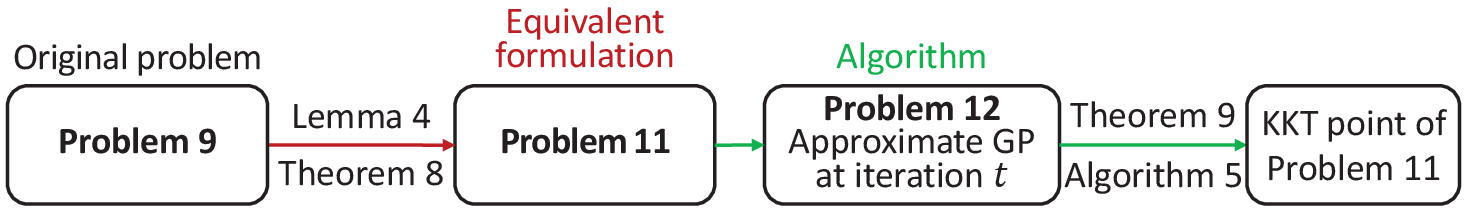}}
\caption{\small{Proposed solution to optimization of all algorithm parameters.}}\label{Fig:Structure_2}
\end{center}
\end{figure}

\subsection{Problem Formulation}\label{SubSec:Formulation_Adj_quantization}
We optimize $\mathbf K$, $B$, and $\mathbf\Gamma$ to minimize $E(\mathbf K,B)$ under the constraints in \eqref{eq:Cons_gamma}, \eqref{eq:Cons_time}, and \eqref{eq:Cons_conv} with $m=A$.
\begin{Prob}[Optimization of All Algorithm Parameters]\label{Prob:adj_step_size}
%\begin{small}
\begin{align}
\min_{\substack{\mathbf K,\mathbf\Gamma\succ\mathbf 0,B>0}}&{\quad}E(\mathbf K,B)\nonumber\\[-8pt]
\mathrm{s.t.}&{\quad}\eqref{eq:Cons_gamma},\ \eqref{eq:Cons_time},\nonumber\\[-5pt]
&{\quad}C_A(\mathbf K,B,\mathbf\Gamma)\leq C_{\max}\label{eq:Cons_conv_arb}.
\end{align}
%\end{small}
\end{Prob}

Similar to Problem~\ref{Prob:fixed_step_size}, Problem~\ref{Prob:adj_step_size} is a non-convex problem with non-differentiable constraint functions.
Different from most optimization problems, the dimension of $\mathbf\Gamma$, i.e., $K_0$, is a variable, making Problem~\ref{Prob:adj_step_size} very challenging.

\subsection{Solution}\label{SubSec:Solution_Adj_step_size}
In this part, we develop an iterative algorithm to obtain a KKT point of an equivalent problem of Problem~\ref{Prob:adj_step_size}.
To address the challenge due to the dimension-varying optimization variable $\mathbf\Gamma$, we first consider the minimization of the convergence error with respect to the step size sequence $\mathbf\Gamma$ satisfying \eqref{eq:Cons_gamma} with fixed length $K_0$ subject to the following constant:
%\begin{small}
\begin{align}\label{eq:sum_constant}
\sum_{k_0\in\mathcal K_0}\gamma^{(k_0)}=S,
\end{align}
%\end{small}
where $S\in\left(0,\frac{K_0}{L}\right]$ represents the sum of the elements in $\mathbf\Gamma$.\footnote{The range of $S$, i.e., $\left(0,\frac{K_0}{L}\right]$, is determined by \eqref{eq:Cons_gamma}.}
\begin{Prob}[Convergence Error Minimization]\label{Prob:step_size_sequence}
For any given $\mathbf K\in\mathbb Z_+^{N+1}$, $B\in\mathbb Z_+$, and $S\in\left(0,\frac{K_0}{L}\right]$,
%\begin{small}
\begin{align}
\min_{\mathbf\Gamma}&{\quad}C_A(\mathbf K,B,\mathbf\Gamma)\nonumber\\%[-8pt]
\mathrm{s.t.}&{\quad}\eqref{eq:Cons_gamma},\ \eqref{eq:sum_constant}.\nonumber
\end{align}
%\end{small}
\end{Prob}

Then, we solve Problem~\ref{Prob:step_size_sequence}.
\begin{Lem}[Optimal Step Size Sequence $\mathbf\Gamma$]\label{Lem:optimal_step_size}
An optimal solution of Problem~\ref{Prob:step_size_sequence} is $\frac{S}{K_0}\mathbf1$.
\end{Lem}
\begin{IEEEproof}
%See Appendix I.
By Cauchy-Schwartz inequality, we have
$\frac{\sum_{k_0\in\mathcal K_0}\left(\gamma^{(k_0)}\right)^3}{\sum_{k_0\in\mathcal K_0}\gamma^{(k_0)}}
=\frac{\sum_{k_0\in\mathcal K_0}\left(\gamma^{(k_0)}\right)^3}{S}
{\overset{(a)}{\geq}}\\\frac{\left(\sum_{k_0\in\mathcal K_0}\gamma^{(k_0)}\right)^3}{SK_0}=\frac{S^2}{K_0}$
and
$\frac{\sum_{k_0\in\mathcal K_0}\left(\gamma^{(k_0)}\right)^2}{\sum_{k_0\in\mathcal K_0}\gamma^{(k_0)}}
=\frac{\sum_{k_0\in\mathcal K_0}\left(\gamma^{(k_0)}\right)^2}{S}
{\overset{(b)}{\geq}}\frac{\left(\sum_{k_0\in\mathcal K_0}\gamma^{(k_0)}\right)^2}{SK_0}
=\frac{S}{K_0}$,
where (a) and (b) take the equality if and only if $\gamma^{(k_0)}=\frac{S}{K_0}$ for all $k_0\in\mathcal K_0$.
Note that
the coefficients of the terms $\frac{\sum_{k_0\in\mathcal K_0}\left(\gamma^{(k_0)}\right)^3}{\sum_{k_0\in\mathcal K_0}\gamma^{(k_0)}}$ and $\frac{\sum_{k_0\in\mathcal K_0}\left(\gamma^{(k_0)}\right)^2}{\sum_{k_0\in\mathcal K_0}\gamma^{(k_0)}}$ are positive,
and the term $\frac{1}{\sum_{k_0\in\mathcal K_0}\gamma^{(k_0)}}=\frac{1}{S}$ is a constant.
Thus, for any given $\mathbf K\in\mathbb Z_+^{N+1}$, $B\in\mathbb Z_+$, and $S\in\left(0,\frac{K_0}{L}\right]$, $C_A(\mathbf K,B,\mathbf\Gamma)$ is minimized at $\frac{S}{K_0}\mathbf1$, which satisfies all constraints of Problem~\ref{Prob:step_size_sequence}.
%$c_2\max_{n\in\mathcal N}K_n^2>0$, and $\frac{c_3}{B}+
%\frac{c_4\sum_{n\in\mathcal N}q_{s_0,s_n}K_n^2}{\sum_{n\in\mathcal N}K_n}>0$.
Therefore, we can show Lemma~\ref{Lem:optimal_step_size}.
\end{IEEEproof}
Lemma~\ref{Lem:optimal_step_size} indicates that the constant step size rule achieves the minimum convergence error with finite iterations.
%This optimality property resembles the one in \textcolor{red}{[xxs]}.
Based on Lemma~\ref{Lem:optimal_step_size}, we replace $\gamma^{(k_0)}$ in \eqref{eq:Cons_gamma} and \eqref{eq:Cons_conv_arb} with $\gamma$ for all $k_0\in\mathcal K_0$ and optimize $\gamma$ instead.
Besides, we adopt the method proposed in Sec.~\ref{SubSec:Solution_Fixed_constant} to address the challenge caused by the non-differentiable constraint functions in \eqref{eq:Cons_time} and \eqref{eq:Cons_conv_arb}.
Consequently, we can equivalently convert Problem~\ref{Prob:adj_step_size} into the following problem.
\begin{Prob}[Equivalent Problem of Problem~\ref{Prob:adj_step_size}]\label{Prob:adj_step_size_epi}
%\begin{small}
\begin{align}
\min_{\substack{\mathbf K\succ\mathbf0,B,\gamma,T_1,T_2>0}}&{\quad}E(\mathbf K,B)\nonumber\\%[-8pt]
\mathrm{s.t.}&{\quad}\eqref{eq:fix_epi_cons1},\ \eqref{eq:fix_epi_cons2},\ \eqref{eq:fix_epi_cons_T},\nonumber\\%[-5pt]
&{\quad}\frac{c_1}{\gamma K_0\sum_{n\in\mathcal N}K_n}+c_2\gamma^2T_2^2
+\frac{c_3\gamma}{B}
+\frac{c_4\gamma\sum_{n\in\mathcal N}q_{s_0,s_n}K_n^2}{\sum_{n\in\mathcal N}K_n}
\leq C_{\max},\label{eq:adj_step_size_epi_cons_C}\\%[-5pt]
&{\quad}0\leq\gamma\leq1/L.\label{eq:adj_step_size_epi_cons_gamma}
\end{align}
%\end{small}
\end{Prob}

\begin{Thm}[Equivalence between Problem~\ref{Prob:adj_step_size} and Problem~\ref{Prob:adj_step_size_epi}]\label{Thm:Equivalent_Prob_adj}
If $\left(\mathbf K^*, B^*, \gamma^*, T_1^*, T_2^*\right)$ is an optimal point of Problem~\ref{Prob:adj_step_size_epi},
then $\left(\mathbf K^*, B^*, \gamma^*\mathbf1\right)$ is an optimal point of Problem~\ref{Prob:adj_step_size}.
\end{Thm}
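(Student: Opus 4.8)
The plan is to establish the equivalence by combining the step-size reduction afforded by Lemma~\ref{Lem:optimal_step_size} with the epigraph reformulation already used to prove Theorem~\ref{Thm:Equivalent_Prob_cons}. First I would observe that the objective $E(\mathbf K,B)$ in both problems depends only on $(\mathbf K,B)$, so it suffices to show that the constructed point $(\mathbf K^*,B^*,\gamma^*\mathbf1)$ is feasible for Problem~\ref{Prob:adj_step_size} with the same objective value, and that no feasible point of Problem~\ref{Prob:adj_step_size} can beat it. The auxiliary variables $T_1$ and $T_2$ in Problem~\ref{Prob:adj_step_size_epi} serve to linearize the two non-differentiable maxima: constraint \eqref{eq:fix_epi_cons1} forces $T_1\ge\max_{n\in\mathcal N}\frac{C_n}{F_n}K_n$, while \eqref{eq:fix_epi_cons2} forces $T_2\ge\max_{n\in\mathcal N}K_n$, and both appear only where a smaller value relaxes the constraint, namely in the term $BT_1$ of \eqref{eq:fix_epi_cons_T} and the term $c_2\gamma^2T_2^2$ of \eqref{eq:adj_step_size_epi_cons_C}.

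For the forward direction, given an optimal $(\mathbf K^*,B^*,\gamma^*,T_1^*,T_2^*)$ of Problem~\ref{Prob:adj_step_size_epi}, I would set $\mathbf\Gamma^*=\gamma^*\mathbf1$ and verify feasibility for Problem~\ref{Prob:adj_step_size}. Constraint \eqref{eq:Cons_gamma} is immediate from \eqref{eq:adj_step_size_epi_cons_gamma} together with the domain restriction $\gamma^*>0$. For the time constraint \eqref{eq:Cons_time}, I would use $T_1^*\ge\max_{n\in\mathcal N}\frac{C_n}{F_n}K_n^*$ from \eqref{eq:fix_epi_cons1} to bound $B^*\max_{n\in\mathcal N}\frac{C_n}{F_n}K_n^*\le B^*T_1^*$, so that \eqref{eq:fix_epi_cons_T} yields $T(\mathbf K^*,B^*)\le T_{\max}$. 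For the convergence constraint \eqref{eq:Cons_conv_arb}, Lemma~\ref{Lem:Convergence_cons} gives $C_A(\mathbf K^*,B^*,\gamma^*\mathbf1)=C_C(\mathbf K^*,B^*,\gamma^*\mathbf1)$, and since $(T_2^*)^2\ge\max_{n\in\mathcal N}(K_n^*)^2$ by \eqref{eq:fix_epi_cons2}, this value is no larger than the left-hand side of \eqref{eq:adj_step_size_epi_cons_C}, hence $\le C_{\max}$. Thus $(\mathbf K^*,B^*,\gamma^*\mathbf1)$ is feasible with objective $E(\mathbf K^*,B^*)$.

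For optimality, I would argue by contradiction: suppose some $(\mathbf K',B',\mathbf\Gamma')$ feasible for Problem~\ref{Prob:adj_step_size} has $E(\mathbf K',B')<E(\mathbf K^*,B^*)$. Setting $S'=\sum_{k_0\in\mathcal K_0}\gamma'^{(k_0)}$ and invoking Lemma~\ref{Lem:optimal_step_size}, the constant sequence $\gamma'\mathbf1$ with $\gamma'=\frac{S'}{K_0'}$ satisfies $C_A(\mathbf K',B',\gamma'\mathbf1)\le C_A(\mathbf K',B',\mathbf\Gamma')\le C_{\max}$, and $\gamma'\in(0,\frac{1}{L}]$ because it is the average of entries of $\mathbf\Gamma'$ each lying in $(0,\frac{1}{L}]$. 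I would then lift $(\mathbf K',B',\gamma')$ to a point of Problem~\ref{Prob:adj_step_size_epi} by choosing the tightest auxiliary values $T_1'=\max_{n\in\mathcal N}\frac{C_n}{F_n}K_n'$ and $T_2'=\max_{n\in\mathcal N}K_n'$, which makes \eqref{eq:fix_epi_cons1} and \eqref{eq:fix_epi_cons2} hold, turns \eqref{eq:fix_epi_cons_T} into $T(\mathbf K',B')/T_{\max}\le1$, and turns the left-hand side of \eqref{eq:adj_step_size_epi_cons_C} into exactly $C_C(\mathbf K',B',\gamma'\mathbf1)\le C_{\max}$. This yields a feasible point of Problem~\ref{Prob:adj_step_size_epi} with strictly smaller objective, contradicting the optimality of $(\mathbf K^*,B^*,\gamma^*,T_1^*,T_2^*)$.

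The main obstacle is the dimension-varying variable $\mathbf\Gamma$, which blocks a direct epigraph argument; the essential work is therefore already discharged by Lemma~\ref{Lem:optimal_step_size}, which collapses the search over all step-size sequences of a given length and sum to a single constant value and thereby justifies replacing $\mathbf\Gamma$ by the scalar $\gamma$. Once that reduction is in place, the remaining manipulations mirror the proof of Theorem~\ref{Thm:Equivalent_Prob_cons} almost verbatim, and the only additional care needed is to confirm that the average $\gamma'=\frac{S'}{K_0'}$ respects the box constraint \eqref{eq:Cons_gamma} inherited from $\mathbf\Gamma'$.
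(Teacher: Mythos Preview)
Your proposal is correct and follows essentially the same approach as the paper: the paper factors the argument through an intermediate problem (Problem~\ref{Prob:adj_step_size_tmp}) obtained by applying the epigraph trick of Theorem~\ref{Thm:Equivalent_Prob_cons} first and then uses Lemma~\ref{Lem:optimal_step_size} to collapse $\mathbf\Gamma$ to a scalar, whereas you combine these two reductions in a single pass, but the content is identical. Your write-up is in fact somewhat more explicit than the paper's, for instance in verifying that the average $\gamma'=S'/K_0'$ inherits the box constraint \eqref{eq:Cons_gamma}.
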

\begin{IEEEproof}
See Appendix E.
\end{IEEEproof}
By Theorem~\ref{Thm:Equivalent_Prob_adj}, we can solve Problem~\ref{Prob:adj_step_size_epi} instead of Problem~\ref{Prob:adj_step_size}.
As the form of Problem~\ref{Prob:adj_step_size_epi} is similar to that of Problem~\ref{Prob:constant_eq},
we propose an iterative algorithm to obtain a KKT point of Problem~\ref{Prob:adj_step_size_epi} using the methods proposed in Sec.~\ref{SubSec:Solution_Fixed_constant}.
%we can solve it using the same method as in Sec.~\ref{SubSec:Solution_Fixed_constant}.
%We propose an iterative algorithm, i.e. Algorithm~\ref{Alg:adj_step_size}, to obtain a KKT point of Problem~\ref{Prob:adj_step_size_epi}.
Specifically, at iteration $t$,
%\hltx{the approximate GP of Problem~\ref{Prob:adj_step_size_epi}, i.e., Problem~\ref{Prob:adj_step_size_GP}, is obtained by lower bounding the denominators of the first and last terms of the constraint function in \eqref{eq:adj_step_size_epi_cons_C} using the arithmetic-geometric mean inequality;}
update $\left(\mathbf K^{(t)},\right.$ $\left.B^{(t)},\gamma^{(t)}\right)$ by solving Problem~\ref{Prob:adj_step_size_GP}, which is parameterized by $\mathbf K^{(t-1)}$ obtained at iteration $t-1$.
%Specifically, at iteration $t$, we update $\left(\mathbf K^{(t)}, B^{(t)},\gamma^{(t)}\right)$ by solving the following approximate GP of Problem~\ref{Prob:adj_step_size_epi}, which is parameterized by $\mathbf K^{(t-1)}$ obtained at iteration $t-1$.
\begin{algorithm}[t]
\caption{Algorithm for Obtaining a KKT Point of Problem~\ref{Prob:adj_step_size_epi}}
\label{Alg:adj_step_size}
\begin{algorithmic}[1]
\begin{small}
\STATE {\bf{Initialize:}} Choose any feasible solution $\left(\mathbf K^{(0)},\gamma^{(0)},B^{(0)},T_1^{(0)},T_2^{(0)}\right)$ of Problem~\ref{Prob:adj_step_size_epi}, and set $t=1$.
\REPEAT
    \STATE Compute $\left(\mathbf K^{(t)},\gamma^{(t)},B^{(t)},T_1^{(t)}, T_2^{(t)}\right)$ by transforming Problem~\ref{Prob:adj_step_size_GP} into a GP in convex form and solving it with standard convex optimization techniques.
    \STATE Set $t:=t+1$.
\UNTIL{Some convergence criteria is met.}
\end{small}
\end{algorithmic}
\end{algorithm}

\begin{Prob}[Approximate GP of Problem~\ref{Prob:adj_step_size_epi} at Iteration $t$]\label{Prob:adj_step_size_GP}
%\begin{small}
\begin{align}
&\min_{\substack{\mathbf K\succ\mathbf0,B,\gamma,T_1,T_2>0}}{\quad}E(\mathbf K,B)\nonumber\\%[-5pt]
&\mathrm{s.t.}{\quad}\eqref{eq:fix_epi_cons1},\ \eqref{eq:fix_epi_cons2},\
\eqref{eq:fix_epi_cons_T},\ \eqref{eq:adj_step_size_epi_cons_gamma},\nonumber\\%[-5pt]
&{\quad}\frac{c_1}{\gamma K_0\prod_{n\in\mathcal N}\left(\frac{K_n}{\beta_n^{(t-1)}}\right)^{\beta_n^{(t-1)}}}+c_2\gamma^2T_2^2
+\frac{c_3\gamma}{B}
+\frac{c_4\gamma\sum_{n\in\mathcal N}q_{s_0,s_n}K_n^2}{\prod_{n\in\mathcal N}\left(\frac{K_n}{\beta_n^{(t-1)}}\right)^{\beta_n^{(t-1)}}}\leq C_{\max},\label{eq:adj_step_size_epi_cons_C_approx}
\end{align}
%\end{small}
where $\beta_n^{(t-1)}\triangleq\frac{K_n^{(t-1)}}{\sum_{n\in\mathcal N}K_n^{(t-1)}}$
and $\left(\mathbf K^{(t)},B^{(t)},\gamma^{(t)}\!,T_1^{(t)},T_2^{(t)}\right)\!$ denotes an optimal solution of Problem~\ref{Prob:adj_step_size_GP}.
\end{Prob}
The constraint function in \eqref{eq:adj_step_size_epi_cons_C_approx},
constructed based on the arithmetic-geometric mean inequality,
is an approximation of the constraint function in \eqref{eq:adj_step_size_epi_cons_C} at $\mathbf K^{(t-1)}$ and is a posynomial.
Consequently, Problem~\ref{Prob:adj_step_size_GP} is a standard GP and can be readily transformed into a convex problem and solved by using standard convex optimization techniques.
If an interior-point method is applied, the computational complexity for solving Problem~\ref{Prob:adj_step_size_GP} is $\mathcal O(N^{3.5})$ \cite{CVX}.
The details are summarized in Algorithm~\ref{Alg:adj_step_size}.
Analogously, we have the following result.

\begin{Thm}[Convergence of Algorithm~\ref{Alg:adj_step_size}]\label{Thm:adj_step_size_convergence}
$\left(\mathbf K^{(t)},B^{(t)},\gamma^{(t)},T_1^{(t)},T_2^{(t)}\right)$ obtained by Algorithm~\ref{Alg:adj_step_size} converges to a KKT point of Problem~\ref{Prob:adj_step_size_epi}, as $t\rightarrow\infty$.
\end{Thm}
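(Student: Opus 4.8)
The plan is to show that Algorithm~\ref{Alg:adj_step_size} is an instance of the general inner approximation framework whose convergence to a KKT point is guaranteed by \cite[Proposition 3]{CGP}, mirroring the argument behind Theorem~\ref{Thm:constant_convergence}. First I would observe that, in Problem~\ref{Prob:adj_step_size_epi}, the objective $E(\mathbf K,B)$ and the constraint function in \eqref{eq:fix_epi_cons_T} are posynomials, the constraint functions in \eqref{eq:fix_epi_cons1} and \eqref{eq:fix_epi_cons2} are monomials, and \eqref{eq:adj_step_size_epi_cons_gamma} reduces to the monomial constraint $L\gamma\le1$ together with the positivity $\gamma>0$ inherent to GP. Hence the sole obstruction to Problem~\ref{Prob:adj_step_size_epi} being a GP is the constraint \eqref{eq:adj_step_size_epi_cons_C}: its first and fourth terms are ratios of a posynomial numerator to the posynomial denominator $\sum_{n\in\mathcal N}K_n$, making \eqref{eq:adj_step_size_epi_cons_C} a complementary GP constraint. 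Introducing $\gamma$ as a variable in place of the fixed $\gamma_C$ of Problem~\ref{Prob:constant_eq} does not change this structure, since $\gamma$ enters the numerators and the monomial term $c_2\gamma^2T_2^2$ with positive exponents only.

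Next I would condense the offending denominator by the arithmetic--geometric mean inequality: for any weights summing to one, $\sum_{n\in\mathcal N}K_n\ge\prod_{n\in\mathcal N}(K_n/\beta_n)^{\beta_n}$, and taking $\beta_n^{(t-1)}=K_n^{(t-1)}/\sum_{n\in\mathcal N}K_n^{(t-1)}$ makes the bound tight at $\mathbf K^{(t-1)}$. Replacing $\sum_{n\in\mathcal N}K_n$ by this monomial in the two fractional terms enlarges them, so the left-hand side of \eqref{eq:adj_step_size_epi_cons_C_approx} is a posynomial that upper-bounds the left-hand side of \eqref{eq:adj_step_size_epi_cons_C}, while the remaining terms $c_2\gamma^2T_2^2$ and $c_3\gamma/B$ are carried over verbatim; this renders Problem~\ref{Prob:adj_step_size_GP} a standard GP. I would then verify the three inner-approximation conditions required by \cite[Proposition 3]{CGP} at $\mathbf K^{(t-1)}$: (i) the approximate constraint function globally upper-bounds the original, so the feasible region of Problem~\ref{Prob:adj_step_size_GP} is contained in that of Problem~\ref{Prob:adj_step_size_epi}, which keeps each iterate feasible; (ii) the two functions coincide at $\mathbf K^{(t-1)}$ by the equality case of the AM--GM step; and (iii) their gradients coincide at $\mathbf K^{(t-1)}$, since the condensing monomial matches both the value and the gradient of the denominator posynomial there, and the numerators are unchanged.

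With these three properties established, \cite[Proposition 3]{CGP} yields that the objective values are non-increasing across iterations and that every limit point of $\{(\mathbf K^{(t)},B^{(t)},\gamma^{(t)},T_1^{(t)},T_2^{(t)})\}$ is a KKT point of Problem~\ref{Prob:adj_step_size_epi}, which is the claim. I expect the main obstacle to be the rigorous passage from the KKT systems of the approximate GPs to the KKT system of Problem~\ref{Prob:adj_step_size_epi}: one must track the Lagrange multiplier of \eqref{eq:adj_step_size_epi_cons_C_approx} along the sequence, use conditions (ii) and (iii) to match the stationarity and complementary-slackness conditions in the limit, and confirm that a constraint qualification holds so that the limiting multipliers certify a genuine KKT point. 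Because Problem~\ref{Prob:adj_step_size_epi} is structurally identical to Problem~\ref{Prob:constant_eq} apart from the extra positive variable $\gamma$, the cleanest route is to invoke the same argument as in the proof of Theorem~\ref{Thm:constant_convergence} and merely check that the additional $\gamma$-dependence leaves all three conditions intact.
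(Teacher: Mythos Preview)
Your proposal is correct and takes essentially the same approach as the paper: the paper's proof simply states that Theorem~\ref{Thm:adj_step_size_convergence} follows directly from the proof of Theorem~\ref{Thm:constant_convergence}, and you have spelled out precisely that reduction, verifying that the additional variable $\gamma$ leaves the posynomial/monomial structure and the three GIA conditions intact. Your write-up is in fact more detailed than the paper's one-line proof, but the underlying argument is identical.
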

\begin{IEEEproof}
We can show Theorem~\ref{Thm:adj_step_size_convergence} following the proof of Theorem~\ref{Thm:constant_convergence} directly.
\end{IEEEproof}

\section{Numerical Results}

\begin{figure}[t!]
\centering
\begin{minipage}[c]{305pt}
	\begin{center}
		\subfigure[\scriptsize{Training loss.}\label{fig:Cost}] {\resizebox{150pt}{!}{\includegraphics{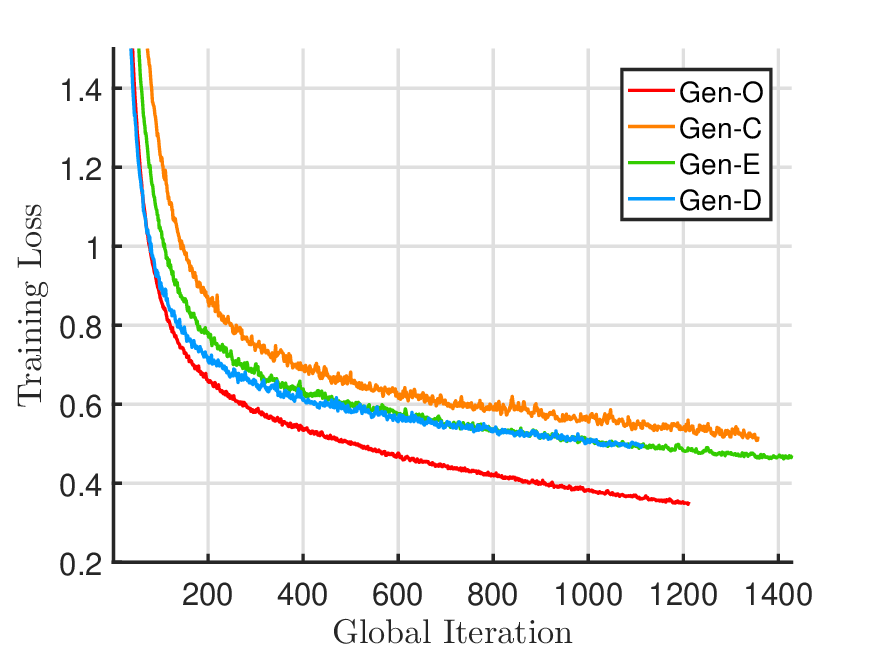}}}
		\subfigure[\scriptsize{Test accuracy.}\label{fig:Acc}]
{\resizebox{150pt}{!}{\includegraphics{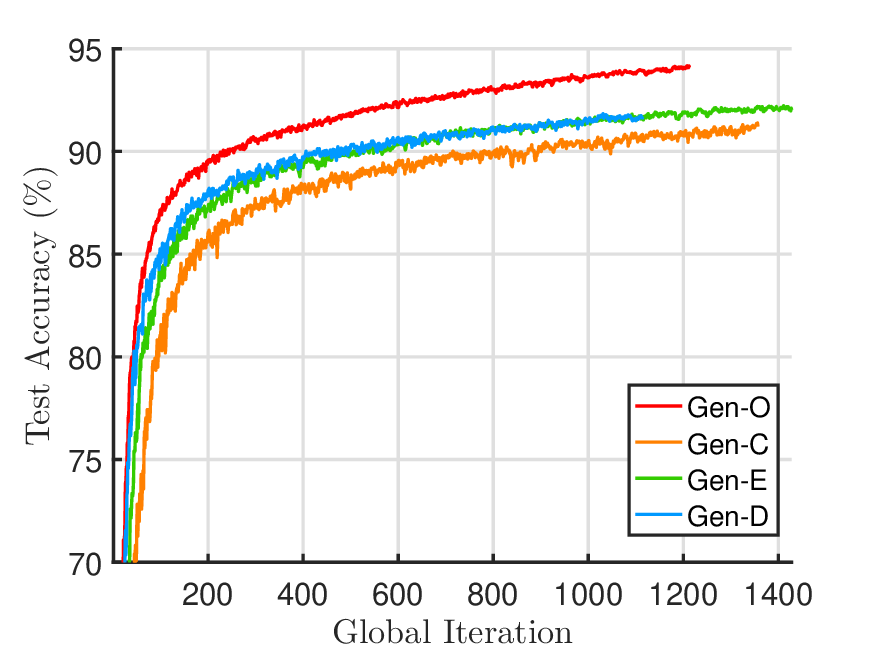}}}
	\end{center}\vspace{-20pt}
	\caption{\small{The training loss and test accuracy of GenQSGD versus global iterations at $C_{\max}=0.25$ and $T_{\max}=100000$.}}%\vspace{-5pt}
	\label{fig:Convergence}
\end{minipage}
\begin{minipage}[c]{150pt}
\begin{center}
{\resizebox{150pt}{!}{\includegraphics{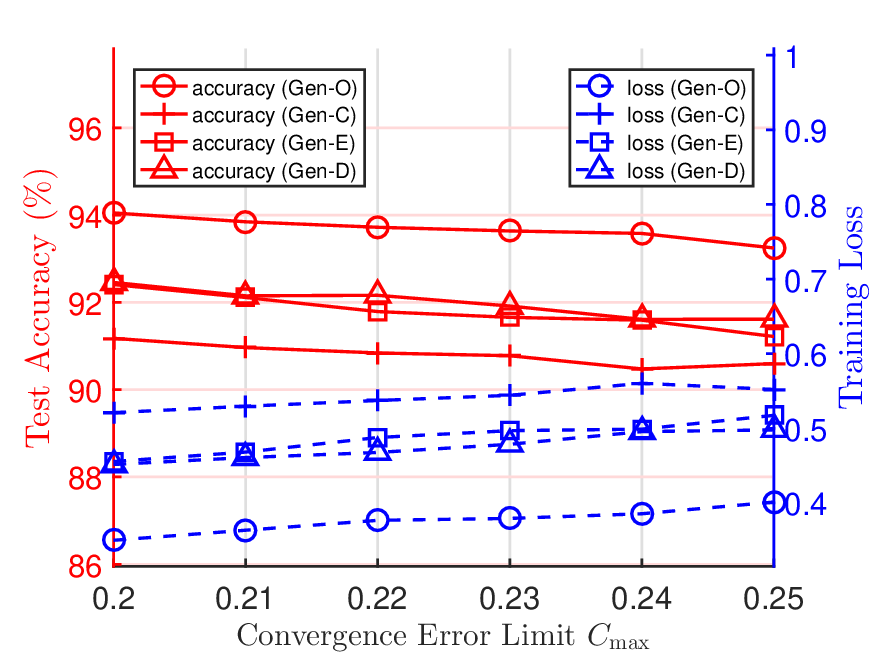}}}
\end{center}\vspace{-15pt}
\caption{\small{The training loss and test accuracy of GenQSGD versus $C_{\max}$ at $T_{\max}=100000$.}}
\label{Fig:Validity}
\end{minipage}\hspace{5pt}
\vspace{-5pt}
%\end{figure}
%\setlength{\textfloatsep}{10pt}
%\begin{figure}[t!]
\begin{center}
\subfigure[\small{Energy cost versus $C_{\max}$ at $T_{\max}=10000$.}]
{\resizebox{154pt}{!}{\includegraphics{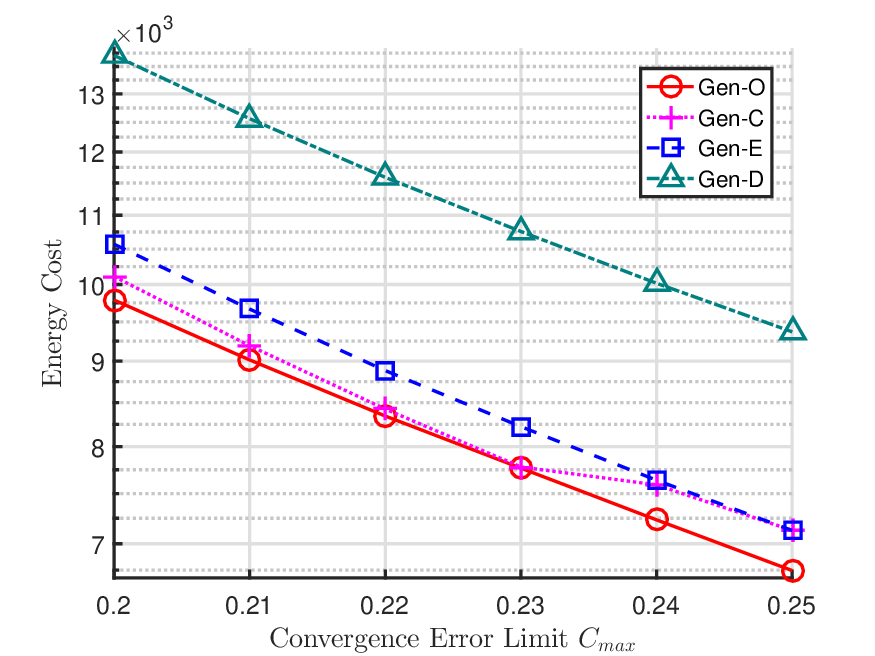}}\label{Fig:Energy_Conv}}\quad
\subfigure[\small{Energy cost versus $T_{\max}$ at $C_{\max}=0.2$.}]
{\resizebox{154pt}{!}{\includegraphics{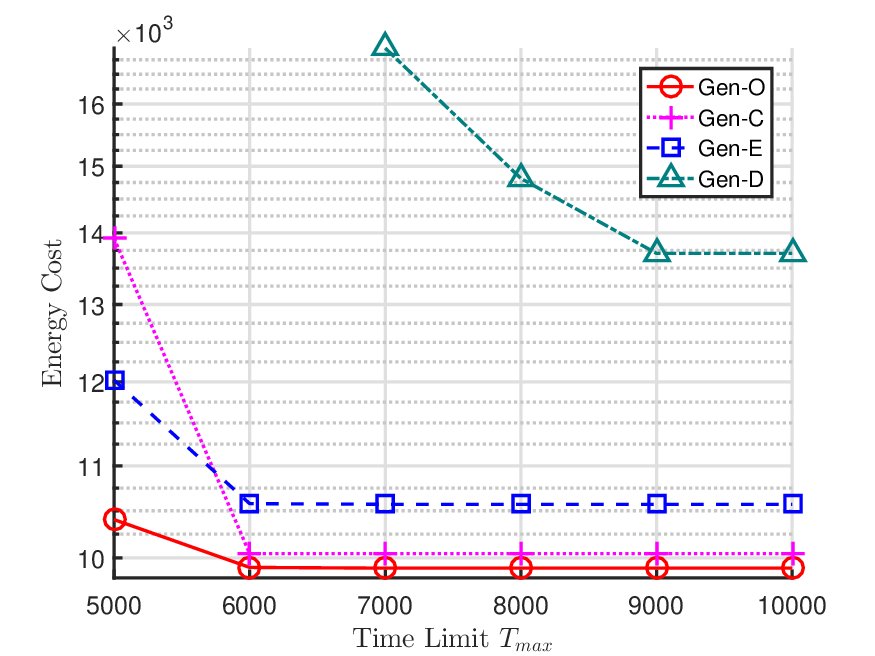}}\label{Fig:Energy_Time}}
\end{center}
\vspace{-20pt}
\caption{\small{The energy cost for implementing GenQSGD with the constant, exponential, and diminishing step size rules versus $C_{\max}$ and $T_{\max}$.}}
\label{Fig:Constraints}
\vspace{-10pt}
\end{figure}

In this section, we evaluate the performances of the proposed optimization-based GenQSGD for given and optimized step size sequences.
We consider a ten-class classification problem with the MNIST dataset ($I=6\times10^4$) and partition it into $N=10$ subsets, each of which is stored on one worker.
We consider a neural network with three layers (i.e., two-layer neural network), including an input layer composed of $784$ cells, a hidden layer composed of $128$ cells, and an output layer composed of $10$ cells.
We use the sigmoid activation function for the hidden layer and the softmax activation function for the output layer.
We also use the cross entropy loss function to measure the classification performance of the considered neural network.
For the ML problem, we set $L=0.084$, $\sigma=33.18$, and $G=33.63$, which are obtained by pre-training.
We consider the three step size sequences given by \eqref{eq:constant_rule}, \eqref{eq:exp_rule}, and \eqref{eq:dim_rule_sp} under the constant (C), exponential (E), and diminishing (D) step size rules introduced in Sec.~\ref{SubSec:Conv}, with the sequence parameters $\gamma_C=0.01$, $\gamma_E=0.02$, $\gamma_D=0.02$, $\rho_E=0.9995$ and $\rho_D=600$.
For each step size sequence, we consider PM-SGD with $K_n=1,n\in\mathcal N$ \cite{PMSGD} (PM),
FedAvg with $K_n=l\frac{I_n}{B},l\in\mathbb Z_+,n\in\mathcal N$ \cite{FedAvg} (FA),
and PR-SGD with $B=1$ \cite{YuHao} (PR), with the remaining algorithm parameters being optimized using the proposed method (opt) and fixed (fix), respectively.
The corresponding baseline FL algorithms are denoted by PM-C/E/D-opt/fix, FA-C/E/D-opt-fix, and PR-C/E/D-opt-fix, respectively.
The proposed optimization-based GenQSGD under the given three step size sequences are denoted by Gen-C/E/D, and the proposed optimization-based GenQSGD with optimized step size sequence is denoted by Gen-O.
We adopt the convergence criterion $\|\mathbf x^{(t)}-\mathbf x^{(t-1)}\|_2\leq0.01$ for Algorithms~\ref{Alg:constant}--\ref{Alg:adj_step_size}, where $\{\mathbf x^{(t)}\}$ represents the generated sequence.
For simplicity, we divide the set of workers, i.e., $\mathcal N$, into two classes, $\mathcal N_1=\{1,2,3,4,5\}$ and $\mathcal N_2=\{6,7,8,9,10\}$.
We set $F_n=F^{(1)},s_n=s^{(1)}\in\mathbb Z_+$ for all $n\in\mathcal N_1$ and $F_n=F^{(2)},s_n=s^{(2)}\in\mathbb Z_+$ for all $n\in\mathcal N_2$
such that $\frac{F^{(1)}+F^{(2)}}{2}=1\times10^9 \text{(cycles/s)}$ and $\frac{s^{(1)}+s^{(2)}}{2}=2^{14}$, respectively.
We use $\frac{F^{(1)}}{F^{(2)}}$ and $\frac{s^{(1)}}{s^{(2)}}$ to characterize the differences in computation capabilities and quantization parameters between the two classes of workers.
We choose $\frac{F^{(1)}}{F^{(2)}}=10$ and $\frac{s^{(1)}}{s^{(2)}}=1$, if not specified otherwise.
Besides, we choose
$\alpha_n\!\!=\!2\times10^{-28},n\!\!\in\!\!\setNbar$,
$F_0\!=\!3\!\times\!10^9$ (cycles/s),
$C_0\!\!=\!\!100$ (cycles),
$p_0\!\!=\!\!20$ (W),
$r_0\!=\!7.5\!\times\!10^7$ (b/s),
%$F_n=2\times10^9$ (cycles/s), $n\in\mathcal N$,
$C_n\!=\!1\!\times\!10^8$ (cycles), $n\!\in\!\mathcal N$,
$p_n\!=\!1.5$ (W), $n\!\in\!\mathcal N$,
and $r_n\!=\!5\!\times\!10^6$ (b/s), $n\!\in\!\mathcal N$.

\begin{figure}[t]
\begin{center}
\subfigure[\small{The constant step size rule.}]
{\resizebox{154pt}{!}{\includegraphics{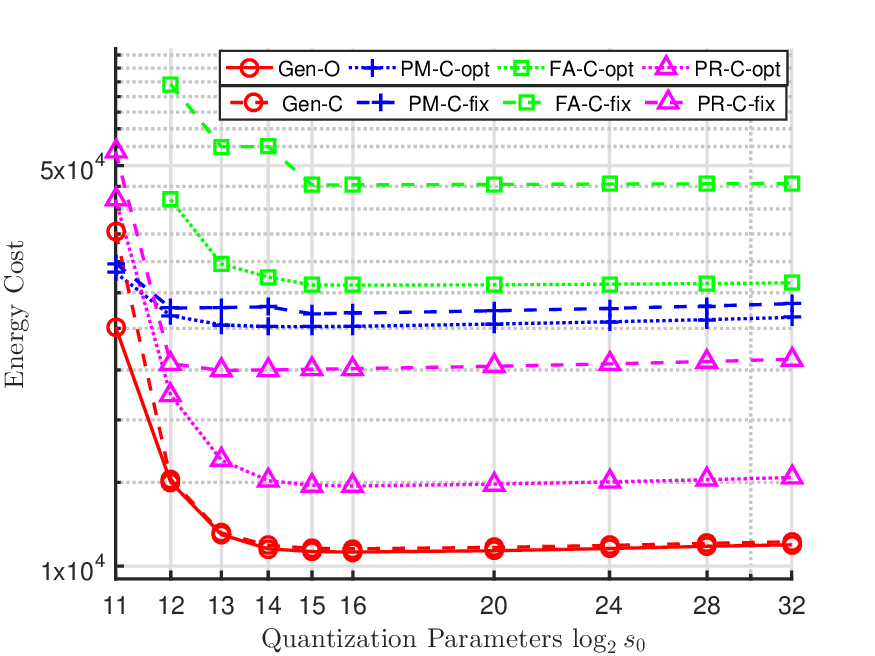}}\label{Fig:QS_cons}}
\subfigure[\small{The exponential step size rule.}]
{\resizebox{154pt}{!}{\includegraphics{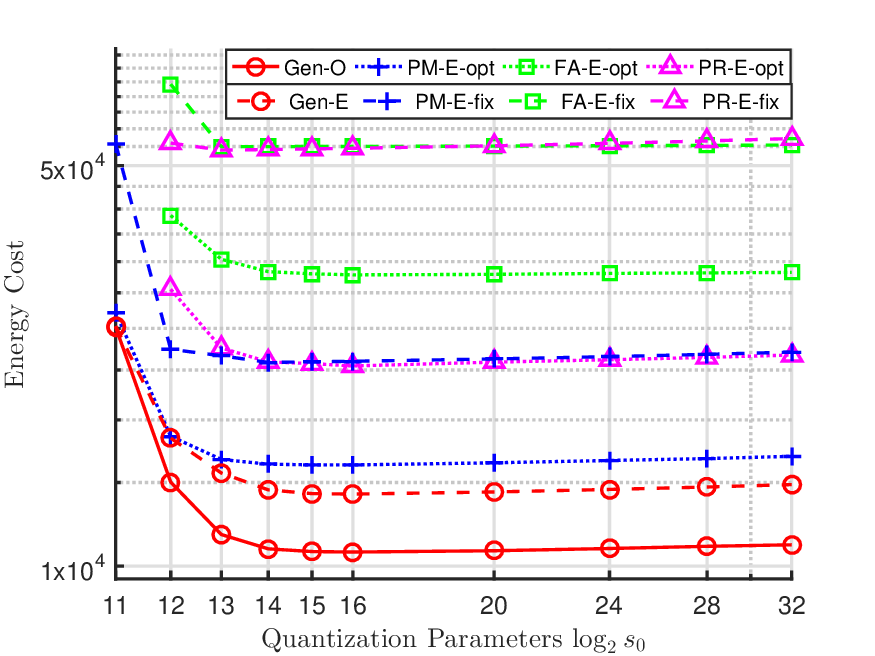}}\label{Fig:QS_exp}}
\subfigure[\small{The diminishing step size rule.}]
{\resizebox{154pt}{!}{\includegraphics{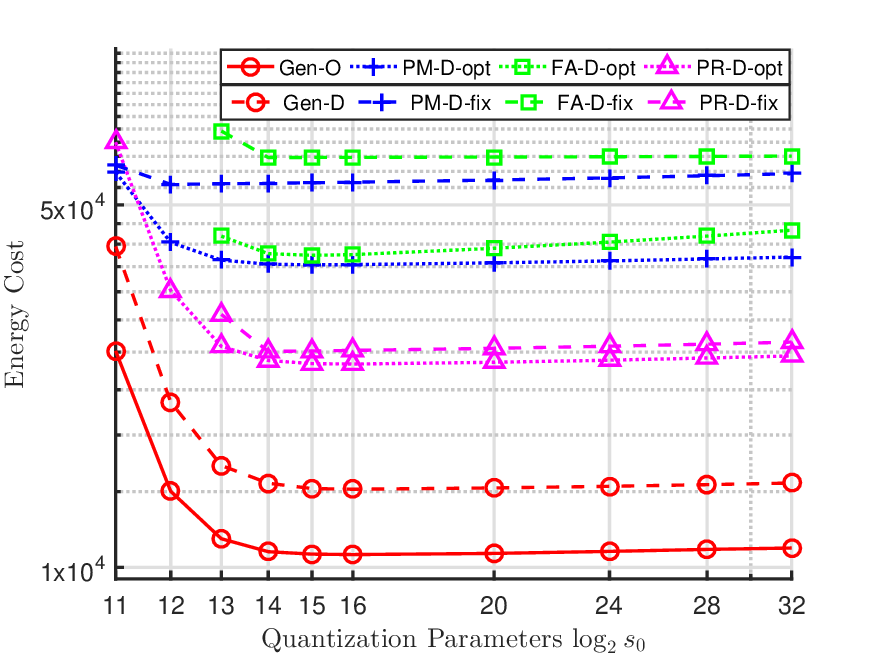}}\label{Fig:QS_dim}}
\end{center}
\vspace{-20pt}
\caption{\small{The energy cost under the constant, exponential, and diminishing step size rules versus $\log_2{s_0}$ at $C_{\max}=0.25$ and $T_{\max}=100000$.}}
\label{Fig:QS}
\vspace{-10pt}
%\end{figure}
%\setlength{\textfloatsep}{10pt}
%\begin{figure}[t]
\begin{center}
\subfigure[\small{The constant step size rule.}]
{\resizebox{154pt}{!}{\includegraphics{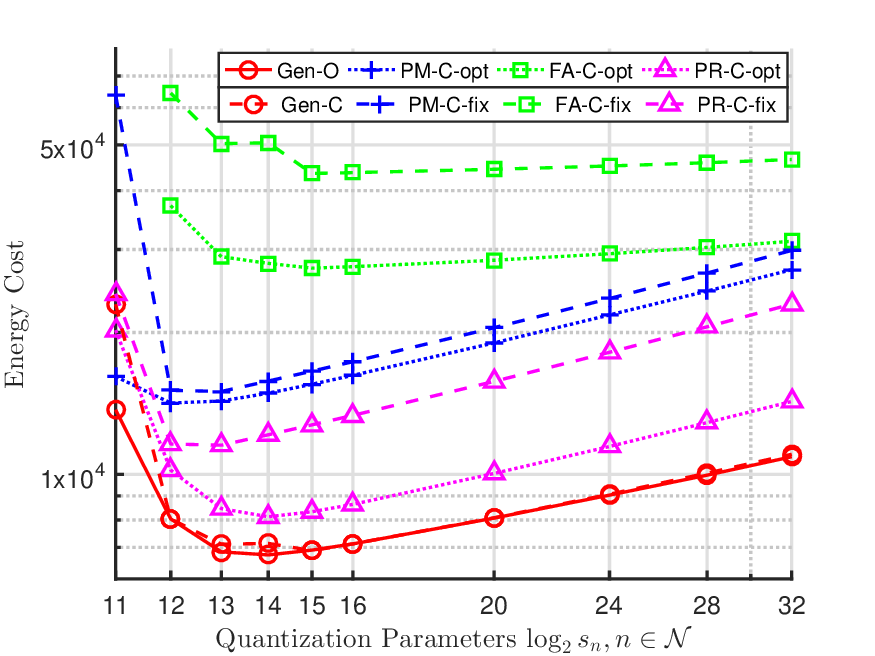}}\label{Fig:QW_cons}}
\subfigure[\small{The exponential step size rule.}]
{\resizebox{154pt}{!}{\includegraphics{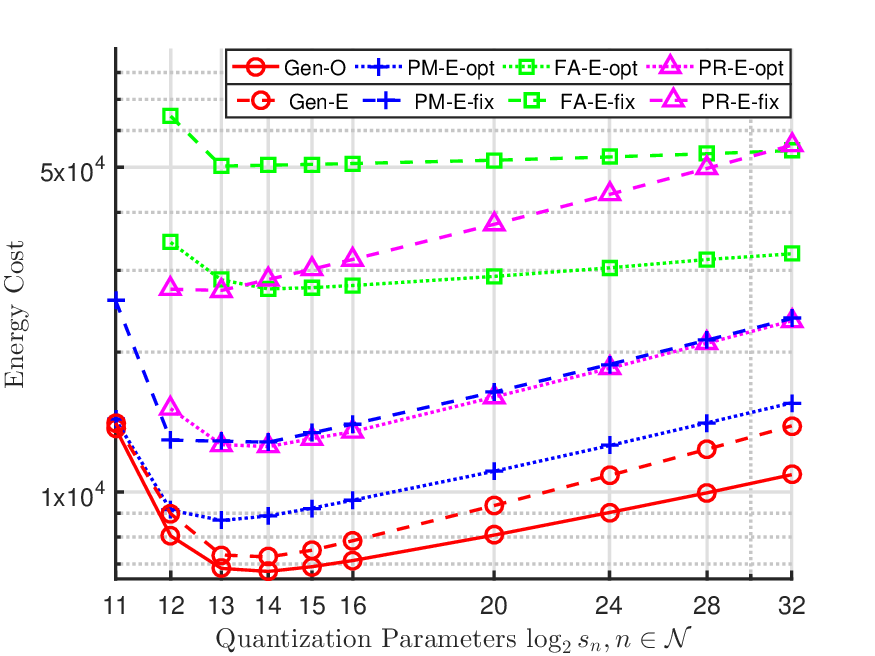}}\label{Fig:QW_exp}}
\subfigure[\small{The diminishing step size rule.}]
{\resizebox{154pt}{!}{\includegraphics{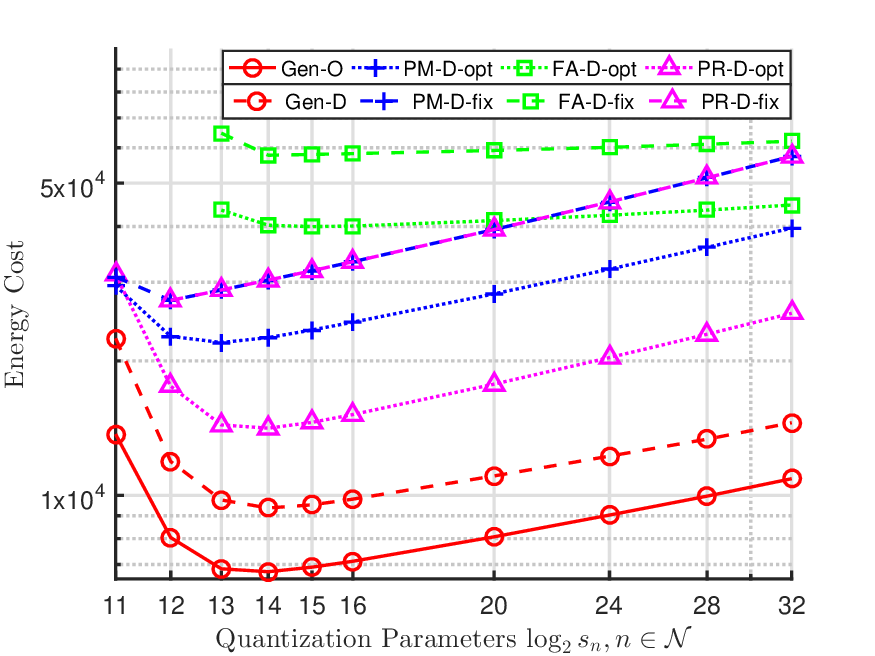}}\label{Fig:QW_dim}}
\end{center}
\vspace{-20pt}
\caption{\small{The energy cost under the constant, exponential, and diminishing step size rules versus $\log_2{s_n},n\in\mathcal N$ at $C_{\max}=0.25$ and $T_{\max}=100000$.}}
\label{Fig:QW}
\vspace{-10pt}
\end{figure}

Fig.~\ref{fig:Convergence} plots the training loss and test accuracy of the proposed optimization-based GenQSGD versus the global iteration index, which shows the convergence performance.
From Fig.~\ref{fig:Convergence}, we can see that Gen-O converges faster than Gen-C, Gen-E, and Gen-D, indicating the additional advantage of optimizing the step size sequence.
Fig.~\ref{Fig:Validity} shows the training loss and test accuracy of the proposed optimization-based GenQSGD versus the convergence error limit $C_{\max}$.
Fig.~\ref{Fig:Validity} indicates that by imposing the convergence error constraint in \eqref{eq:Cons_conv_arb}, we can effectively control the training loss and test accuracy.
Fig.~\ref{Fig:Energy_Conv} and Fig.~\ref{Fig:Energy_Time} illustrate the energy cost of the proposed optimization-based GenQSGD versus $C_{\max}$ and $T_{\max}$, respectively.
From Fig.~\ref{Fig:Constraints}, we can see that Gen-O outperforms Gen-C, Gen-E, and Gen-D, indicating the importance of optimally choosing the step size sequence for implementing GenQSGD.
Besides, Fig.~\ref{Fig:Constraints} indicates that the proposed optimization framework can achieve a trade-off among the time cost, energy cost, and convergence error.

\begin{figure}[t]
\begin{center}
\subfigure[\small{The constant step size rule.}]
{\resizebox{154pt}{!}{\includegraphics{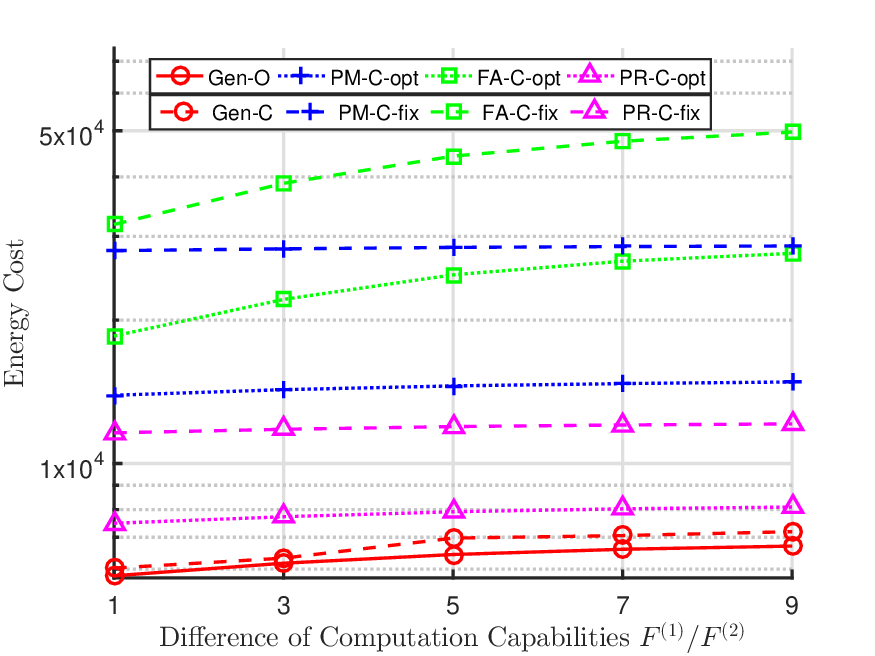}}\label{Fig:F_Hetero_cons}}
\subfigure[\small{The exponential step size rule.}]
{\resizebox{154pt}{!}{\includegraphics{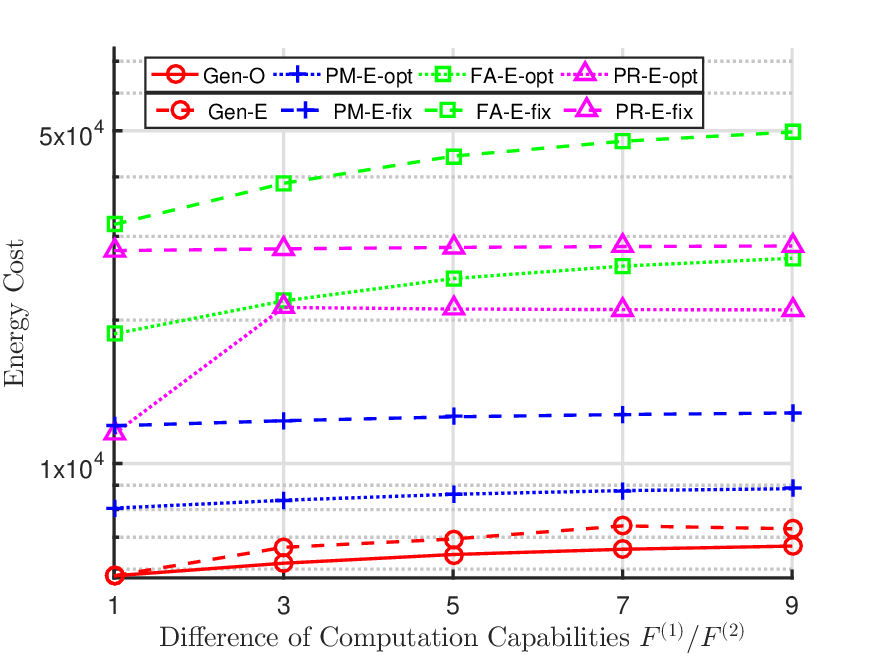}}\label{Fig:F_Hetero_exp}}
\subfigure[\small{The diminishing step size rule.}]
{\resizebox{154pt}{!}{\includegraphics{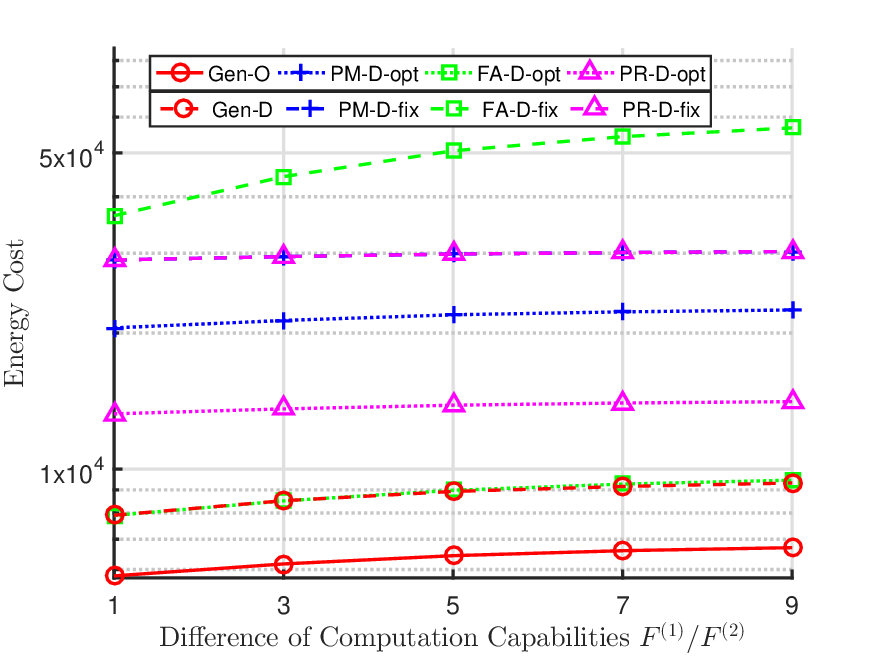}}\label{Fig:F_Hetero_dim}}
\end{center}
\vspace{-20pt}
\caption{\small{The energy cost under the constant, exponential, and diminishing step size rules versus $F^{(1)}/F^{(2)}$ at $C_{\max}=0.25$ and $T_{\max}=100000$.}}
\label{Fig:F_Hetero}
\vspace{-10pt}
%\end{figure}
%\setlength{\floatsep}{10pt}
%\begin{figure}[t]
\begin{center}
\subfigure[\small{The constant step size rule.}]
{\resizebox{154pt}{!}{\includegraphics{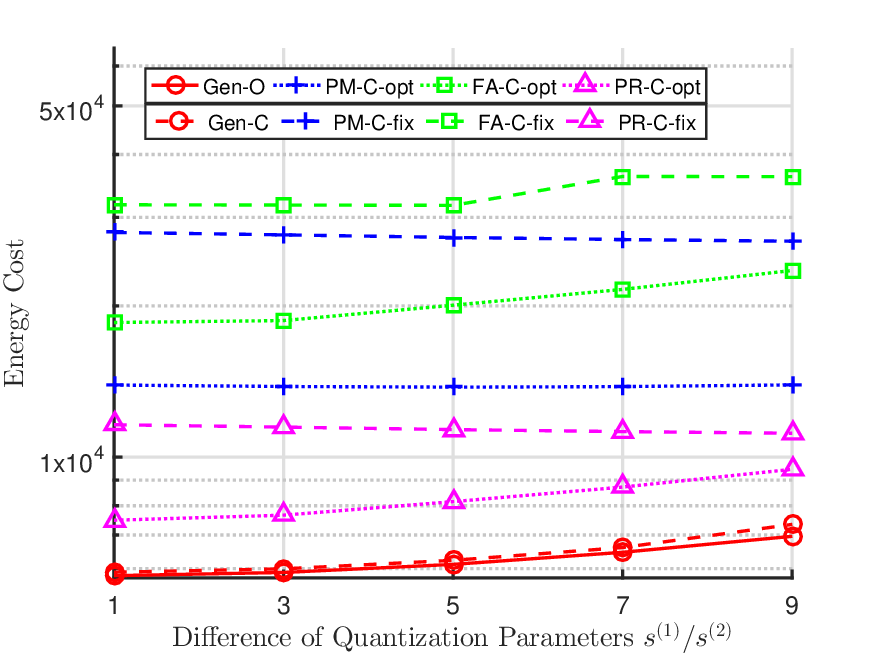}}\label{Fig:s_Hetero_cons}}
\subfigure[\small{The exponential step size rule.}]
{\resizebox{154pt}{!}{\includegraphics{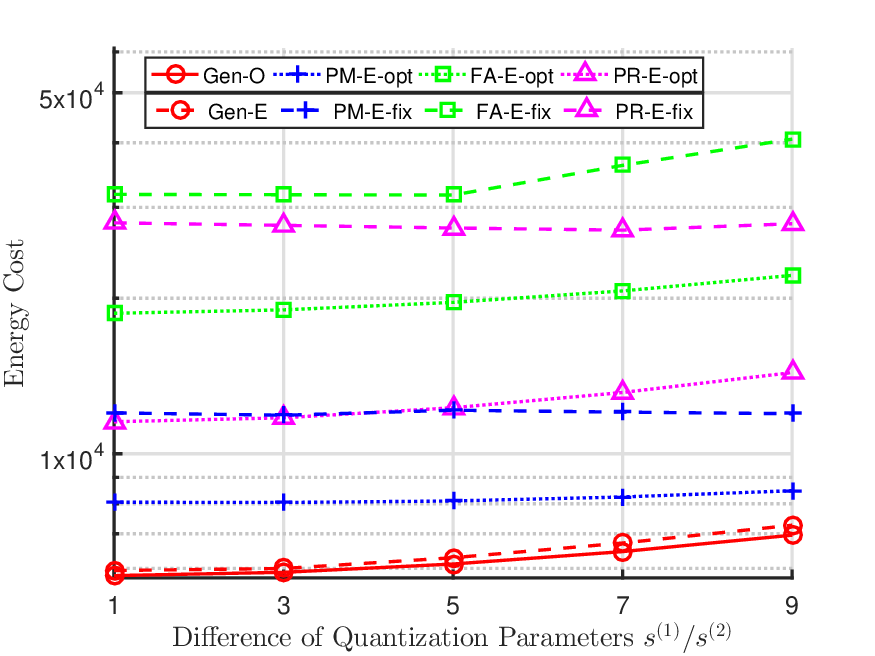}}\label{Fig:s_Hetero_exp}}
\subfigure[\small{The diminishing step size rule.}]
{\resizebox{154pt}{!}{\includegraphics{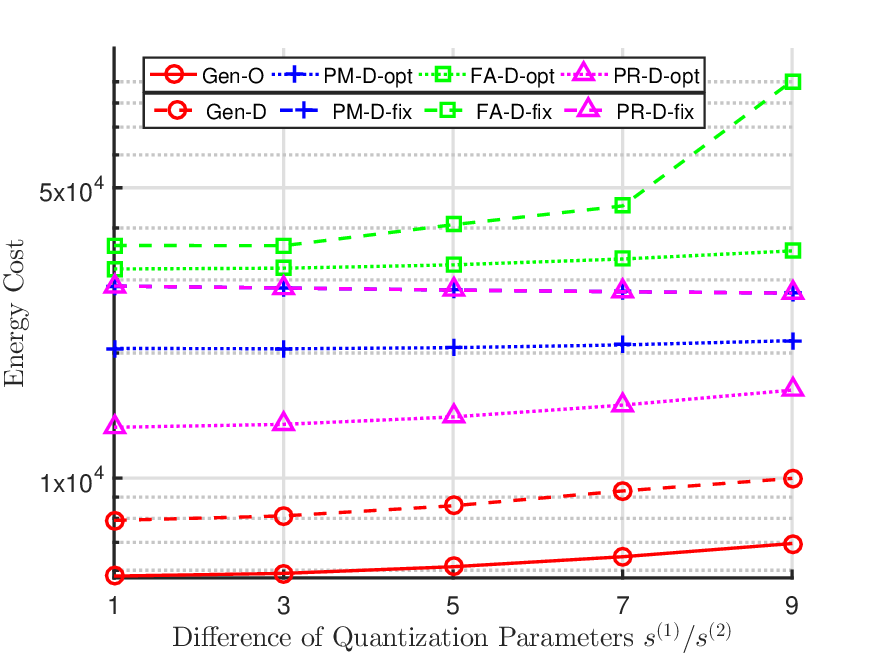}}\label{Fig:s_Hetero_dim}}
\end{center}
\vspace{-20pt}
\caption{\small{The energy cost under the constant, exponential, and diminishing step size rules versus $s^{(1)}/s^{(2)}$ at $C_{\max}=0.25$ and $T_{\max}=100000$.}}
\label{Fig:s_Hetero}
\end{figure}
\setlength{\textfloatsep}{15pt}

Fig.~\ref{Fig:QS} and Fig.~\ref{Fig:QW} illustrate the energy costs of all FL algorithms versus $\log_2s_0$ and $\log_2s_n,n\in\mathcal N$, respectively.
From Fig.~\ref{Fig:QS} and Fig.~\ref{Fig:QW}, we see that the energy cost decreases with the quantization accuracy when the quantization accuracy is small, since increasing quantization accuracy can reduce the number of global iterations to achieve a specific convergence error; and the energy cost increases with the quantization accuracy when the quantization accuracy is large, since a higher quantization accuracy leads to a larger number of transmitted bits in each global iteration.
%the energy costs first decrease and then increase with quantization accuracies, implying that adopting quantizations with lower accuracies can reduce the number of transmitted bits in each global iteration at a cost of potentially increased number of global iterations.}
Fig.~\ref{Fig:F_Hetero} and Fig.~\ref{Fig:s_Hetero} illustrate the energy costs of all FL algorithms versus $F^{(1)}/F^{(2)}$ and $s^{(1)}/s^{(2)}$, respectively.
From Fig.~\ref{Fig:F_Hetero} and Fig.~\ref{Fig:s_Hetero}, we see that the energy costs increase with the differences in computation capabilities and quantization parameters between the two classes of workers.
Finally, from Fig.~\ref{Fig:QS}--Fig.~\ref{Fig:s_Hetero}, we can see that for each given step size sequence, the proposed optimization-based GenQSGD significantly outperforms all the baseline FL algorithms as it allows more algorithm parameters to be optimized;
and Gen-O exhibits extra gain over Gen-C, Gen-E, and Gen-D, which is derived by the proposed optimization framework.
These observations indicate the importances of designing a general FL algorithm and optimally adapting its algorithm parameters to the underlying edge computing system and ML problem.

\section{Conclusion}
This paper studied the optimal design of FL algorithms for solving general ML problems in practical edge computing systems with quantized message passing.
We presented a general FL algorithm, namely GenQSGD, parameterized by the numbers of global and local iterations, mini-batch size, and step size sequence, and analyzed its convergence for a general (not necessarily convex) ML problem.
Moreover, we proposed a powerful optimization framework for optimizing the algorithm parameters or any subset of them to optimally balance among the time cost, energy cost, and convergence error.
Finally, numerical results reveal several important design insights for FL.
Firstly, the derived convergence error of GenQSGD effectively characterizes its training loss and test accuracy and hence plays a crucial role in the optimal design of FL.
Secondly, a trade-off among the time cost, energy cost, and convergence error can be achieved by properly choosing the algorithm parameters of FL.
Last but not least, the more algorithm parameters can be optimized, the better trade-off can be archived.
Therefore, the proposed optimization framework for FL design lays the foundation for optimizing the overall performance of federated edge learning.

\section*{Appendix A: Proof of Theorem~\ref{Thm:Convergence}}\label{Prf:Convergence_Q_n}
To prove Theorem~\ref{Thm:Convergence}, we first show the following three lemmas.
For ease of exposition, we define $K_{\max}\triangleq\max_{n\in\mathcal N}K_n$ in the following.
\begin{Lem}\label{Lem:Smoothness}
Suppose that Assumption~\ref{Asump:Quantization} and Assumption~\ref{Asump:Smoothness} are satisfied and the step size $\gamma^{(k_0)}\in\left(0,\frac{1}{L}\right]$ for all $k_0\in\mathcal K_0$.
Then, for all $\mathbf K\in\mathbb Z_+^{N+1}$ and $B\in\mathbb Z_+$,
$\left\{\hat{\mathbf x}^{(k_0)}:k_0\in\mathcal K_0\right\}$ and $\left\{\bar{\mathbf x}^{(k_0, K_{\max})}:\right.\\\left.k_0\in\mathcal K_0\right\}$ generated by GenQSGD satisfy:
\begin{small}\begin{align}
&\mathbb E\left[f\left(\hat{\mathbf x}^{(k_0+1)}\right)\right]
\leq\mathbb E\left[f\left(\bar{\mathbf x}^{(k_0, K_{\max})}\right)\right]
+\frac{L}{2}\mathbb E\left[\left\|\hat{\mathbf x}^{(k_0+1)}-\bar{\mathbf x}^{(k_0, K_{\max})}\right\|_2^2\right].\nonumber
\end{align}\end{small}
\end{Lem}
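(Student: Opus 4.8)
The plan is to derive the bound from the $L$-smoothness of $f$ together with the unbiasedness of the random quantizer. First I would observe that, since each $f_n$ has an $L$-Lipschitz gradient by Assumption~\ref{Asump:Smoothness} and $f=\frac1N\sum_{n\in\mathcal N}f_n$, the aggregate $f$ also has an $L$-Lipschitz gradient. The descent lemma (the quadratic upper bound associated with an $L$-smooth function) then yields, for every realization,
\begin{align}
f\left(\hat{\mathbf x}^{(k_0+1)}\right)
\leq f\left(\bar{\mathbf x}^{(k_0,K_{\max})}\right)
+\left\langle\nabla f\left(\bar{\mathbf x}^{(k_0,K_{\max})}\right),\hat{\mathbf x}^{(k_0+1)}-\bar{\mathbf x}^{(k_0,K_{\max})}\right\rangle
+\frac{L}{2}\left\|\hat{\mathbf x}^{(k_0+1)}-\bar{\mathbf x}^{(k_0,K_{\max})}\right\|_2^2.\nonumber
\end{align}
Taking expectations, the asserted inequality reduces to showing that the expected cross term vanishes, i.e., $\mathbb E\left[\left\langle\nabla f(\bar{\mathbf x}^{(k_0,K_{\max})}),\hat{\mathbf x}^{(k_0+1)}-\bar{\mathbf x}^{(k_0,K_{\max})}\right\rangle\right]=0$.

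To establish this, I would condition on the $\sigma$-algebra $\mathcal F$ generated by all randomness up to and including the local models $\mathbf x_n^{(k_0,K_n)}$, $n\in\mathcal N$, and the recovered model $\hat{\mathbf x}^{(k_0)}$, but excluding the worker- and server-side quantizations carried out in the $k_0$-th global iteration. Note that $\bar{\mathbf x}^{(k_0,K_{\max})}$ is $\mathcal F$-measurable; moreover, by \eqref{eq:local_update_redefine} and \eqref{eq:x_avg} the virtual updates leave the local models unchanged, so $\bar{\mathbf x}^{(k_0,K_{\max})}=\frac1N\sum_{n\in\mathcal N}\mathbf x_n^{(k_0,K_n)}$. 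Using the tower property, I would first condition additionally on $\Delta\hat{\mathbf x}^{(k_0)}$ and apply part~(i) of Assumption~\ref{Asump:Quantization} to the server quantizer to get $\mathbb E[\mathbf Q(\Delta\hat{\mathbf x}^{(k_0)};s_0)\mid\mathcal F]=\mathbb E[\Delta\hat{\mathbf x}^{(k_0)}\mid\mathcal F]$; then, using \eqref{eq:GlobalUpdate} and the unbiasedness of each worker quantizer (whose argument $\mathbf x_n^{(k_0,K_n)}-\hat{\mathbf x}^{(k_0)}$ is $\mathcal F$-measurable), I would obtain $\mathbb E[\Delta\hat{\mathbf x}^{(k_0)}\mid\mathcal F]=\frac1N\sum_{n\in\mathcal N}(\mathbf x_n^{(k_0,K_n)}-\hat{\mathbf x}^{(k_0)})=\bar{\mathbf x}^{(k_0,K_{\max})}-\hat{\mathbf x}^{(k_0)}$. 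Combining with the recursion \eqref{eq:RecoveredLocalModel} gives $\mathbb E[\hat{\mathbf x}^{(k_0+1)}\mid\mathcal F]=\hat{\mathbf x}^{(k_0)}+(\bar{\mathbf x}^{(k_0,K_{\max})}-\hat{\mathbf x}^{(k_0)})=\bar{\mathbf x}^{(k_0,K_{\max})}$, hence $\mathbb E[\hat{\mathbf x}^{(k_0+1)}-\bar{\mathbf x}^{(k_0,K_{\max})}\mid\mathcal F]=\mathbf 0$. Since $\nabla f(\bar{\mathbf x}^{(k_0,K_{\max})})$ is $\mathcal F$-measurable, the conditional expectation of the cross term is zero, and the tower property removes the conditioning.

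The main obstacle is not any single estimate but the careful bookkeeping of the conditioning: I must fix $\mathcal F$ so that $\bar{\mathbf x}^{(k_0,K_{\max})}$ and $\hat{\mathbf x}^{(k_0)}$ are measurable while the fresh quantization noise stays independent, and then apply unbiasedness in the correct nested order (server quantizer outermost, worker quantizers inside). Once $\mathbb E[\hat{\mathbf x}^{(k_0+1)}\mid\mathcal F]=\bar{\mathbf x}^{(k_0,K_{\max})}$ is in hand, the cross term drops and taking the total expectation of the descent inequality yields the claim. I note that the hypothesis $\gamma^{(k_0)}\in(0,1/L]$ is not actually used in this particular lemma, as the descent bound and the unbiasedness argument hold for any local models; it is stated here only for consistency with Theorem~\ref{Thm:Convergence} and will be invoked when this lemma is combined with the remaining estimates in Appendix~A.
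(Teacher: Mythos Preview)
Your proposal is correct and follows essentially the same approach as the paper: establish $L$-smoothness of $f$ from that of the $f_n$'s, apply the descent lemma, and use the unbiasedness of both quantizers to kill the cross term in expectation. Your treatment is in fact slightly more careful than the paper's: the paper writes $\mathbb E[\hat{\mathbf x}^{(k_0+1)}]=\bar{\mathbf x}^{(k_0,K_{\max})}$ directly (which reads as if the right-hand side were deterministic), whereas you correctly phrase this as a conditional statement $\mathbb E[\hat{\mathbf x}^{(k_0+1)}\mid\mathcal F]=\bar{\mathbf x}^{(k_0,K_{\max})}$ and invoke the tower property; your observation that the step size hypothesis is not used in this lemma is also accurate.
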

\begin{IEEEproof}
For all $\mathbf x,\mathbf y\in\mathbb R^D$, we have:
\begin{small}\begin{align}\label{eq:Lem_Smoothness_1}
\left\|\nabla{f({\mathbf x})}-\nabla{f({\mathbf y})}\right\|_2
{\overset{(a)}{=}}&\left\|{\frac{1}{N}\sum_{n\in\mathcal N}\left(\nabla f_n({\mathbf x})-\nabla f_n({\mathbf y})\right)}\right\|_2
{\overset{(b)}{\leq}}\frac{1}{N}\sum_{n\in\mathcal N}\left\|{\nabla f_n({\mathbf x})-\nabla f_n({\mathbf y})}\right\|_2
{\overset{(c)}{\leq}}L\left\|\mathbf x-\mathbf y\right\|_2,
\end{align}\end{small}where (a) follows from \eqref{eq:expected_loss},
(b) follows from the triangle inequality,
and (c) follows from Assumption~\ref{Asump:Smoothness}.
Note that \eqref{eq:Lem_Smoothness_1} indicates that
%$f(\cdot)$ is $L$-smooth, i.e.,
$\nabla f(\cdot)$ is Lipschitz continuous.
Thus, we have:
\begin{small}\begin{align}\label{eq:Lem_Smoothness_2}
f\!\left(\!\hat{\mathbf x}^{(k_0\!+\!1)}\!\right)
\!\leq\!f\!\left(\!\bar{\mathbf x}^{(k_0, K_{\max})}\!\right)
\!+\!\nabla f\!\left(\!\bar{\mathbf x}^{(k_0, K_{\max})}\!\right)^T
\!\!\left(\!\hat{\mathbf x}^{(k_0\!+\!1)}-\bar{\mathbf x}^{(k_0, K_{\max})}\!\right)
\!+\!\frac{L}{2}\!\left\|\hat{\mathbf x}^{(k_0\!+\!1)}-\bar{\mathbf x}^{(k_0, K_{\max})}\right\|_2^2\!,k_0\!\in\!\mathcal K_0.
\end{align}\end{small}Besides, we have:
\begin{small}\begin{align}\label{eq:Lem_Smoothness_3}
\mathbb E\left[\hat{\mathbf x}^{(k_0+1)}\right]
{\overset{(d)}{=}}&\hat{\mathbf x}^{(k_0)}+\mathbb E\left[\mathbf Q(\Delta\hat{\mathbf x}^{(k_0)};s_0)\right]
{\overset{(e)}{=}}\hat{\mathbf x}^{(k_0)}+\mathbb E\left[\frac{1}{N}\sum_{n\in\mathcal N}\mathbf Q\left({\bf x}_n^{(k_0,K_n)}-\hat{\mathbf x}^{(k_0)};s_n\right)\right]\nonumber\\[-5pt]
{\overset{(f)}{=}}&\hat{\mathbf x}^{(k_0)}+\frac{1}{N}\sum_{n\in\mathcal N}\left({\bf x}_n^{(k_0,K_n)}-\hat{\mathbf x}^{(k_0)}\right)
{\overset{(g)}{=}}\bar{\mathbf x}^{(k_0,K_{\max})},\ k_0\in\mathcal K_0,
\end{align}\end{small}where (d) is due to \eqref{eq:RecoveredLocalModel},
(e) is due to \eqref{eq:GlobalUpdate} and Assumption~\ref{Asump:Quantization} (i),
(f) is due to Assumption~\ref{Asump:Quantization} (i),
and (g) is due to \eqref{eq:local_update_redefine} and \eqref{eq:x_avg}.
By taking expectation of both sides of \eqref{eq:Lem_Smoothness_2}, we have:
\begin{small}\begin{align}
&\mathbb E\left[f\left(\hat{\mathbf x}^{(k_0+1)}\right)\right]
\leq\mathbb E\left[f\left(\bar{\mathbf x}^{(k_0, K_{\max})}\right)\right]
+\mathbb E\left[\nabla f\left(\bar{\mathbf x}^{(k_0, K_{\max})}\right)^T\left(\hat{\mathbf x}^{(k_0+1)}-\bar{\mathbf x}^{(k_0, K_{\max})}\right)\right]\nonumber\\[-5pt]
&{\quad}+\frac{L}{2}\mathbb E\left[\left\|\hat{\mathbf x}^{(k_0+1)}-\bar{\mathbf x}^{(k_0, K_{\max})}\right\|_2^2\right]
{\overset{(h)}{=}}\mathbb E\left[f\left(\bar{\mathbf x}^{(k_0, K_{\max})}\right)\right]
+\frac{L}{2}\mathbb E\left[\left\|\hat{\mathbf x}^{(k_0+1)}-\bar{\mathbf x}^{(k_0, K_{\max})}\right\|_2^2\right],\ k_0\in\mathcal K_0,\nonumber
\end{align}\end{small}where (h) is due to \eqref{eq:Lem_Smoothness_3}.
Therefore, we can show Lemma~\ref{Lem:Smoothness}.\end{IEEEproof}
\begin{Lem}\label{Lem:TermBound1}
Suppose that Assumptions~\ref{Asump:IID}, \ref{Asump:Smoothness}, \ref{Asump:BoundedVariances}, and \ref{Asump:BoundedSecondMoments} are satisfied and the step size $\gamma^{(k_0)}\in\left(0,\frac{1}{L}\right]$ for all $k_0\in\mathcal K_0$.
Then, for all $\mathbf K\in\mathbb Z_+^{N+1}$ and $B\in\mathbb Z_+$,
$\left\{\hat{\mathbf x}^{(k_0)}:k_0\in\mathcal K_0\right\}$ and
$\left\{\bar{\mathbf x}^{(k_0,k)}:k_0\in\mathcal K_0,\right.\\\left.k\in\mathcal K_{\max}\right\}$
generated by GenQSGD satisfy:
\begin{small}\begin{align}
\mathbb E\left[f\left(\bar{\mathbf x}^{(k_0, K_{\max})}\right)\right]
\leq&\mathbb E\left[f\left(\hat{\mathbf x}^{(k_0)}\right)\right]-\frac{\gamma^{(k_0)}}{2}\sum_{k\in\mathcal K_{\max}}\frac{N_k}{N}\mathbb E\left[\left\|\nabla f\left(\bar{\mathbf x}^{(k_0, k-1)}\right)\right\|_2^2\right]\nonumber\\[-5pt]
&+\left(2G^2L^2K_{\max}^2\left(\gamma^{(k_0)}\right)^3+\frac{L\sigma^2\left(\gamma^{(k_0)}\right)^2}{2NB}\right)\sum_{k\in\mathcal K_{\max}}\frac{N_k}{N}.\nonumber
\end{align}\end{small}
\end{Lem}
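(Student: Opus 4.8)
The plan is to prove a one-step descent inequality along the \emph{synchronized} averaged iterates $\bar{\mathbf x}^{(k_0,k)}$, $k=1,\dots,K_{\max}$, and then telescope over $k$, using the endpoints $\bar{\mathbf x}^{(k_0,0)}=\hat{\mathbf x}^{(k_0)}$ and $\bar{\mathbf x}^{(k_0,K_{\max})}$. Writing the mini-batch stochastic gradient of worker $n$ as $\mathbf g_n^{(k_0,k)}\triangleq\frac1B\sum_{\xi\in\mathcal B_n^{(k_0,k)}}\nabla F(\mathbf x_n^{(k_0,k-1)};\xi)$, the virtual-update convention in \eqref{eq:local_update_redefine} yields the averaged recursion
\[
\bar{\mathbf x}^{(k_0,k)}-\bar{\mathbf x}^{(k_0,k-1)}=-\frac{\gamma^{(k_0)}}{N}\sum_{n\in\mathcal N:\,k\le K_n}\mathbf g_n^{(k_0,k)},
\]
since only the $N_k$ active workers move. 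Because $\nabla f$ is $L$-Lipschitz (established in the proof of Lemma~\ref{Lem:Smoothness}), I apply the descent inequality for $L$-smooth functions to the pair $\bar{\mathbf x}^{(k_0,k-1)}\to\bar{\mathbf x}^{(k_0,k)}$, producing a first-order inner-product term and a second-order squared-norm term.

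Next I take the expectation conditioned on all randomness up to the $(k-1)$-th synchronized iteration. For the first-order term I use that $\mathbf g_n^{(k_0,k)}$ is conditionally unbiased with mean $\nabla f_n(\mathbf x_n^{(k_0,k-1)})$ and that Assumption~\ref{Asump:IID} forces $f_n=f$ for every $n$, so the mean equals $\nabla f(\mathbf x_n^{(k_0,k-1)})$. I then invoke the polarization identity $\mathbf a^{T}\mathbf b=\tfrac12\|\mathbf a\|_2^2+\tfrac12\|\mathbf b\|_2^2-\tfrac12\|\mathbf a-\mathbf b\|_2^2$ with $\mathbf a=\nabla f(\bar{\mathbf x}^{(k_0,k-1)})$ and $\mathbf b=\nabla f(\mathbf x_n^{(k_0,k-1)})$; this extracts the descent term $-\tfrac{\gamma^{(k_0)}}{2}\tfrac{N_k}{N}\|\nabla f(\bar{\mathbf x}^{(k_0,k-1)})\|_2^2$, a collection of $-\|\nabla f(\mathbf x_n^{(k_0,k-1)})\|_2^2$ terms, and a gradient-mismatch residual $\tfrac{\gamma^{(k_0)}}{2N}\sum_{n:k\le K_n}\|\nabla f(\bar{\mathbf x}^{(k_0,k-1)})-\nabla f(\mathbf x_n^{(k_0,k-1)})\|_2^2$. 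For the second-order term I split the squared norm of the summed stochastic gradients into its conditional mean (bounded via Cauchy--Schwarz by $N_k\sum_n\|\nabla f(\mathbf x_n^{(k_0,k-1)})\|_2^2$) plus the aggregate variance; by independence of the per-worker mini-batches and the size-$B$ variance bound $\sigma^2/B$ (Assumption~\ref{Asump:BoundedVariances}), the variance contributes exactly $\tfrac{L\sigma^2(\gamma^{(k_0)})^2}{2N^2B}N_k$.

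The key cancellation is that, under $\gamma^{(k_0)}\le 1/L$ and $N_k\le N$, one has $\tfrac{L\gamma^{(k_0)}N_k}{N}\le1$, so the mean part of the second-order term is dominated by the negative $\|\nabla f(\mathbf x_n^{(k_0,k-1)})\|_2^2$ terms and the two combine with nonpositive sign; I therefore discard them. It remains to bound the consensus residual. Since each worker starts at $\mathbf x_n^{(k_0,0)}=\hat{\mathbf x}^{(k_0)}$ and takes at most $K_{\max}$ steps, each of expected squared norm at most $(\gamma^{(k_0)})^2G^2$ by Assumption~\ref{Asump:BoundedSecondMoments}, a Cauchy--Schwarz/Jensen argument gives $\mathbb E[\|\mathbf x_n^{(k_0,k-1)}-\bar{\mathbf x}^{(k_0,k-1)}\|_2^2]\le4K_{\max}^2(\gamma^{(k_0)})^2G^2$; combined with the Lipschitz bound $\|\nabla f(\bar{\mathbf x})-\nabla f(\mathbf x_n)\|_2\le L\|\bar{\mathbf x}-\mathbf x_n\|_2$ and the prefactor $\tfrac{\gamma^{(k_0)}}{2N}$, this produces precisely $2G^2L^2K_{\max}^2(\gamma^{(k_0)})^3\tfrac{N_k}{N}$. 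Summing the resulting one-step inequality over $k=1,\dots,K_{\max}$ telescopes the objective values to $\mathbb E[f(\bar{\mathbf x}^{(k_0,K_{\max})})]-\mathbb E[f(\hat{\mathbf x}^{(k_0)})]$ and collects the weights into $\sum_{k\in\mathcal K_{\max}}\tfrac{N_k}{N}$, yielding the claim.

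I expect the first-order term to be the main obstacle: relating the aggregate of local gradients evaluated at the drifted local models $\mathbf x_n^{(k_0,k-1)}$ to $\nabla f$ at the average $\bar{\mathbf x}^{(k_0,k-1)}$ is where the local-drift (consensus) error enters. Obtaining the exact coefficients hinges on choosing the polarization split so that the stray $\|\nabla f(\mathbf x_n^{(k_0,k-1)})\|_2^2$ terms cancel against the mean part of the second-order term (which is where $\gamma^{(k_0)}\le 1/L$ is consumed), and on referencing the drift to $\hat{\mathbf x}^{(k_0)}$ so that the factor $4=2\times2$ in the deviation bound produces the stated constant $2G^2L^2K_{\max}^2$.
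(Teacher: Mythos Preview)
Your proposal is correct and follows essentially the same route as the paper: the paper's own proof simply notes that Assumptions~\ref{Asump:IID}--\ref{Asump:BoundedSecondMoments} imply the hypotheses of \cite{YuHao}, invokes a slight extension of the one-step descent bound (25) in \cite{YuHao} (with the mini-batch variance $\sigma^2/B$ in place of $\sigma^2$) to obtain the per-$k$ inequality, and then telescopes over $k\in\mathcal K_{\max}$ using $\bar{\mathbf x}^{(k_0,0)}=\hat{\mathbf x}^{(k_0)}$. What you wrote is precisely the standard derivation of that one-step bound---$L$-smooth descent, polarization on the inner product, mean/variance split on the quadratic term, cancellation of the $\|\nabla f(\mathbf x_n^{(k_0,k-1)})\|_2^2$ terms via $L\gamma^{(k_0)}N_k/N\le 1$, and the $4K_{\max}^2(\gamma^{(k_0)})^2G^2$ drift estimate---so the two arguments coincide, with yours spelling out the details the paper delegates to \cite{YuHao}.
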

\begin{IEEEproof}
Assumptions~\ref{Asump:IID}, \ref{Asump:Smoothness}, \ref{Asump:BoundedVariances}, and \ref{Asump:BoundedSecondMoments} in this paper indicate that Assumption 1 and Assumption 2 in \cite{YuHao} are satisfied.
By Assumption~\ref{Asump:BoundedVariances}, we have
$\mathbb E\left[\left\|\frac{1}{B}\sum_{\xi_i\in\mathcal B_n^{(k_0,k_n)}}\nabla{F\left({\mathbf x};\xi_i\right)}-\nabla f_n\left({\mathbf x}\right)\right\|_2^2\right]
\leq\frac{\sigma^2}{B},\ n\in\mathcal N,\ k_0\in\mathcal K_0,\ k_n\in\mathcal K_n$
for all $\mathbf x\in\mathbb R^D$ \cite[(5.8)]{LargeScaleML}.
Therefore, by slightly extending the proof for (25) in \cite{YuHao}, we can show:
\begin{small}\begin{align}\label{eq:lem_termbound1_2}
\mathbb E\left[f\left(\bar{\mathbf x}^{(k_0,k)}\right)\right]
\leq&\mathbb \mathbb E\left[f\left(\bar{\mathbf x}^{(k_0,k-1)}\right)\right]-\frac{\gamma^{(k_0)}N_k}{2N}\mathbb E\left[\left\|\nabla f\left(\bar{\mathbf x}^{(k_0, k-1)}\right)\right\|_2^2\right]\nonumber\\[-5pt]
&+\frac{2N_kG^2L^2K_{\max}^2\left(\gamma^{(k_0)}\right)^3}{N}
+\frac{N_kL\sigma^2\left(\gamma^{(k_0)}\right)^2}{2N^2B},\ k_0\in\mathcal K_0,\ k\in\mathcal K_{\max},
\end{align}\end{small}which relies on Assumptions 1 and 2 in \cite{YuHao}, and the constant step size condition.
By noting that $\bar{\mathbf x}^{(k_0, 0)}=\hat{\mathbf x}^{(k_0)}, k_0\in\mathcal K_0$ and summing \eqref{eq:lem_termbound1_2} over $k\in\mathcal K_{\max}$, we can show Lemma~\ref{Lem:TermBound1}.\end{IEEEproof}
\begin{Lem}\label{Lem:TermBound2}
Suppose Assumption~\ref{Asump:Quantization} and Assumption~\ref{Asump:BoundedSecondMoments} are satisfied and the step size $\gamma^{(k_0)}\in\left(0,\frac{1}{L}\right]$ for all $k_0\in\mathcal K_0$.
Then, for all $\mathbf K\in\mathbb Z_+^{N+1}$ and $B\in\mathbb Z_+$,
$\left\{\hat{\mathbf x}^{(k_0)}:k_0\in\mathcal K_0\right\}$
and $\left\{\bar{\mathbf x}^{(k_0,k)}:k_0\in\mathcal K_0,\right.\\\left.k\in\mathcal K_{\max}\right\}$ generated by GenQSGD satisfy:
\begin{small}\begin{align}\label{eq:lem_termbound2}
&\mathbb E\left[\left\|\hat{\mathbf x}^{(k_0+1)}-\bar{\mathbf x}^{(k_0, K_{\max})}\right\|_2^2\right]
\leq\frac{2G^2\left(\gamma^{(k_0)}\right)^2}{N}\sum_{n\in\mathcal N}(q_{s_n}+q_{s_0}+q_{s_0}q_{s_n})K_n^2.
\end{align}\end{small}
\end{Lem}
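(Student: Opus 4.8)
The plan is to express the error vector $\hat{\mathbf x}^{(k_0+1)}-\bar{\mathbf x}^{(k_0,K_{\max})}$ purely in terms of quantization errors and then bound its expected squared norm term by term. First I would substitute the recursions: by \eqref{eq:RecoveredLocalModel}, $\hat{\mathbf x}^{(k_0+1)}-\hat{\mathbf x}^{(k_0)}=\mathbf Q(\Delta\hat{\mathbf x}^{(k_0)};s_0)$, while by \eqref{eq:local_update_redefine} and \eqref{eq:x_avg} together with ${\mathbf x}_n^{(k_0,0)}=\hat{\mathbf x}^{(k_0)}$ we have $\bar{\mathbf x}^{(k_0,K_{\max})}-\hat{\mathbf x}^{(k_0)}=\frac1N\sum_{n\in\mathcal N}({\mathbf x}_n^{(k_0,K_n)}-\hat{\mathbf x}^{(k_0)})$. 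Denoting the local model update $u_n\triangleq{\mathbf x}_n^{(k_0,K_n)}-\hat{\mathbf x}^{(k_0)}$ and recalling $\Delta\hat{\mathbf x}^{(k_0)}=\frac1N\sum_{n}\mathbf Q(u_n;s_n)$ from \eqref{eq:GlobalUpdate}, I would add and subtract $\Delta\hat{\mathbf x}^{(k_0)}$ to obtain the clean decomposition
\begin{align}
\hat{\mathbf x}^{(k_0+1)}-\bar{\mathbf x}^{(k_0,K_{\max})}
=\big(\mathbf Q(\Delta\hat{\mathbf x}^{(k_0)};s_0)-\Delta\hat{\mathbf x}^{(k_0)}\big)
+\frac1N\sum_{n\in\mathcal N}\big(\mathbf Q(u_n;s_n)-u_n\big),\nonumber
\end{align}
i.e. a server-side quantization error plus the average of the worker-side quantization errors.

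Next I would split the squared norm with $\|a+b\|_2^2\le 2\|a\|_2^2+2\|b\|_2^2$ (this loosening is precisely what produces the leading factor $2$ in the claimed bound) and handle the two pieces separately. For the worker-side term I would apply the convexity bound $\|\frac1N\sum_n v_n\|_2^2\le\frac1N\sum_n\|v_n\|_2^2$ and then Assumption~\ref{Asump:Quantization}(ii) conditionally on $\{u_n\}$ to get $\mathbb E[\|\mathbf Q(u_n;s_n)-u_n\|_2^2\mid u_n]\le q_{s_n}\|u_n\|_2^2$, yielding a contribution of $\frac{2}{N}\sum_n q_{s_n}\mathbb E[\|u_n\|_2^2]$. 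For the server-side term I would first use Assumption~\ref{Asump:Quantization}(ii) on the random argument $\Delta\hat{\mathbf x}^{(k_0)}$ via the tower property, $\mathbb E[\|\mathbf Q(\Delta\hat{\mathbf x}^{(k_0)};s_0)-\Delta\hat{\mathbf x}^{(k_0)}\|_2^2]\le q_{s_0}\mathbb E[\|\Delta\hat{\mathbf x}^{(k_0)}\|_2^2]$, and then bound $\mathbb E[\|\Delta\hat{\mathbf x}^{(k_0)}\|_2^2]\le\frac1N\sum_n\mathbb E[\|\mathbf Q(u_n;s_n)\|_2^2]\le\frac1N\sum_n(1+q_{s_n})\mathbb E[\|u_n\|_2^2]$, where the last step uses Assumption~\ref{Asump:Quantization}(i) to kill the cross term in $\mathbb E[\|\mathbf Q(u_n;s_n)\|_2^2\mid u_n]=\|u_n\|_2^2+\mathbb E[\|\mathbf Q(u_n;s_n)-u_n\|_2^2\mid u_n]$.

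The remaining ingredient is a uniform bound on $\mathbb E[\|u_n\|_2^2]$. Unrolling the local iteration \eqref{eq:local_update} gives $u_n=-\gamma^{(k_0)}\sum_{k_n=1}^{K_n}\frac1B\sum_{\xi\in\mathcal B_n^{(k_0,k_n)}}\nabla F({\mathbf x}_n^{(k_0,k_n-1)};\xi)$, so applying $\|\sum_{k_n=1}^{K_n}v_{k_n}\|_2^2\le K_n\sum_{k_n=1}^{K_n}\|v_{k_n}\|_2^2$ and $\|\frac1B\sum_\xi w_\xi\|_2^2\le\frac1B\sum_\xi\|w_\xi\|_2^2$, then taking expectation and invoking Assumption~\ref{Asump:BoundedSecondMoments}, I obtain $\mathbb E[\|u_n\|_2^2]\le(\gamma^{(k_0)})^2G^2K_n^2$. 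Substituting this into the two contributions and combining $q_{s_n}$ (worker term) with $q_{s_0}(1+q_{s_n})=q_{s_0}+q_{s_0}q_{s_n}$ (server term) gives exactly $\frac{2G^2(\gamma^{(k_0)})^2}{N}\sum_n(q_{s_n}+q_{s_0}+q_{s_0}q_{s_n})K_n^2$, which is \eqref{eq:lem_termbound2}. The only genuinely delicate point is the conditioning bookkeeping: the worker quantizers must be treated as conditionally unbiased given the local models, and the server quantizer as conditionally unbiased given $\Delta\hat{\mathbf x}^{(k_0)}$ (which itself depends on the worker quantizations), so the random-argument versions of Assumption~\ref{Asump:Quantization} must be applied through the tower property rather than directly.
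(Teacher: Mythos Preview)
Your proposal is correct and follows essentially the same route as the paper: the same decomposition into server-side and worker-side quantization errors via \eqref{eq:RecoveredLocalModel}, \eqref{eq:GlobalUpdate}, \eqref{eq:local_update_redefine}, \eqref{eq:x_avg}; the same split $\|a+b\|_2^2\le 2\|a\|_2^2+2\|b\|_2^2$; the same use of Assumption~\ref{Asump:Quantization}(ii) together with $\|\tfrac1N\sum_n v_n\|_2^2\le\tfrac1N\sum_n\|v_n\|_2^2$; and the same unrolling of \eqref{eq:local_update} with Assumption~\ref{Asump:BoundedSecondMoments} to get $\mathbb E[\|u_n\|_2^2]\le(\gamma^{(k_0)})^2G^2K_n^2$. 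Your explicit tower-property bookkeeping is a welcome clarification the paper leaves implicit, but it does not change the argument.
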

\begin{IEEEproof}
First, we have:
\begin{small}\begin{align}\label{eq:lem_termbound2_1}
\mathbb E\!\left[\left\|\mathbf Q\left(\mathbf x_n^{(k_0,K_n)}\!-\!\hat{\mathbf x}^{(k_0)};s_n\right)\right\|_2^2\right]
{\overset{(a)}{\leq}}&\mathbb E\!\left[\left\|\mathbf Q\left(\mathbf x_n^{(k_0,K_n)}\!-\!\hat{\mathbf x}^{(k_0)};s_n\right)\!-\!\left(\mathbf x_n^{(k_0,K_n)}\!-\!\hat{\mathbf x}^{(k_0)}\right)\right\|_2^2\right]
\!+\!\mathbb E\!\left[\left\|\mathbf x_n^{(k_0,K_n)}\!-\!\hat{\mathbf x}^{(k_0)}\right\|_2^2\right]\nonumber\\[-5pt]
{\overset{(b)}{\leq}}&\left(q_{s_n}+1\right)\mathbb E\left[\left\|\mathbf x_n^{(k_0,K_n)}-\hat{\mathbf x}^{(k_0)}\right\|_2^2\right],\ n\in\mathcal N,\ k_0\in\mathcal K_0,
\end{align}\end{small}where (a) is deduced from the definition of variance, and (b) follows by Assumption~\ref{Asump:Quantization} (ii).
Besides, we have:
\begin{small}\begin{align}\label{eq:lem_termbound2_2}
&\mathbb E\left[\left\|\hat{\mathbf x}^{(k_0+1)}-\bar{\mathbf x}^{(k_0, K_{\max})}\right\|_2^2\right]
{\overset{(c)}{=}}\mathbb E\left[\left\|\hat{\mathbf x}^{(k_0)}+\mathbf Q\left(\frac{1}{N}\sum_{n\in\mathcal N}\mathbf Q\left({\bf x}_n^{(k_0,K_n)}-\hat{\mathbf x}^{(k_0)};s_n\right);s_0\right)-\frac{1}{N}\sum_{n\in\mathcal N}{\bf x}_n^{(k_0,K_n)}\right\|_2^2\right]\nonumber\\[-5pt]
=&\mathbb E\left[\left\|\mathbf Q\left(\frac{1}{N}\sum_{n\in\mathcal N}\mathbf Q\left(\mathbf x_n^{(k_0,K_n)}-\hat{\mathbf x}^{(k_0)};s_n\right);s_0\right)-\frac{1}{N}\sum_{n\in\mathcal N}\mathbf Q\left(\mathbf x_n^{(k_0,K_n)}-\hat{\mathbf x}^{(k_0)};s_n\right)\right.\right.\nonumber\\[-5pt]
&\left.\left.+\frac{1}{N}\sum_{n\in\mathcal N}\left(\mathbf Q\left(\mathbf x_n^{(k_0,K_n)}-\hat{\mathbf x}^{(k_0)};s_n\right)-\left(\mathbf x_n^{(k_0,K_n)}-\hat{\mathbf x}^{(k_0)}\right)\right)\right\|_2^2\right]\nonumber\\[-5pt]
{\overset{(d)}{\leq}}&2\mathbb E\left[\left\|\mathbf Q\left(\frac{1}{N}\sum_{n\in\mathcal N}\mathbf Q\left(\mathbf x_n^{(k_0,K_n)}-\hat{\mathbf x}^{(k_0)};s_n\right);s_0\right)-\frac{1}{N}\sum_{n\in\mathcal N}\mathbf Q\left(\mathbf x_n^{(k_0,K_n)}-\hat{\mathbf x}^{(k_0)};s_n\right)\right\|_2^2\right]\nonumber\\[-5pt]
&+2\mathbb E\left[\left\|\frac{1}{N}\sum_{n\in\mathcal N}\left(\mathbf Q\left(\mathbf x_n^{(k_0,K_n)}-\hat{\mathbf x}^{(k_0)};s_n\right)-\left(\mathbf x_n^{(k_0,K_n)}-\hat{\mathbf x}^{(k_0)}\right)\right)\right\|_2^2\right]\nonumber\\[-4pt]
{\overset{(e)}{\leq}}&2q_{s_0}\mathbb E\!\left[\left\|\frac{1}{N}\!\sum_{n\in\mathcal N}\!\mathbf Q\left(\mathbf x_n^{(k_0,K_n)}\!-\!\hat{\mathbf x}^{(k_0)};s_n\right)\right\|_2^2\right]
\!+\!2\mathbb E\!\left[\left\|\frac{1}{N}\!\sum_{n\in\mathcal N}\!\left(\mathbf Q\left(\mathbf x_n^{(k_0,K_n)}\!-\!\hat{\mathbf x}^{(k_0)};s_n\right)\!-\!\left(\mathbf x_n^{(k_0,K_n)}\!-\!\hat{\mathbf x}^{(k_0)}\right)\right)\right\|_2^2\right]\nonumber\\[-5pt]
{\overset{(f)}{\leq}}&\frac{2q_{s_0}}{N}\sum_{n\in\mathcal N}\mathbb E\!\left[\left\|\mathbf Q\left(\mathbf x_n^{(k_0,K_n)}\!-\!\hat{\mathbf x}^{(k_0)};s_n\right)\right\|_2^2\right]
+\frac{2}{N}\sum_{n\in\mathcal N}\mathbb E\!\left[\left\|\mathbf Q\left(\mathbf x_n^{(k_0,K_n)}\!-\!\hat{\mathbf x}^{(k_0)};s_n\right)\!-\!\left(\mathbf x_n^{(k_0,K_n)}\!-\!\hat{\mathbf x}^{(k_0)}\right)\right\|_2^2\right]\nonumber\\[-5pt]
{\overset{(g)}{\leq}}&\frac{2q_{s_0}}{N}\sum_{n\in\mathcal N}(q_{s_n}\!+\!1)\mathbb E\left[\left\|\mathbf x_n^{(k_0,K_n)}-\hat{\mathbf x}^{(k_0)}\right\|_2^2\right]
\!+\!\frac{2}{N}\sum_{n\in\mathcal N}q_{s_n}\mathbb E\left[\left\|\mathbf x_n^{(k_0,K_n)}-\hat{\mathbf x}^{(k_0)}\right\|_2^2\right],\ k_0\in\mathcal K_0,
\end{align}\end{small}where (c) follows by \eqref{eq:RecoveredLocalModel}, \eqref{eq:GlobalUpdate}, \eqref{eq:local_update_redefine}, and \eqref{eq:x_avg},
(d) and (f) follow by the inequality $\|\sum_{i=1}^n{\bf z}_i\|^2\leq n\sum_{i=1}^n{\|{\bf z}_i\|^2}$,
(e) follows by Assumption~\ref{Asump:Quantization} (ii),
and (g) is due to Assumption~\ref{Asump:Quantization} (ii) and \eqref{eq:lem_termbound2_1}.
By \eqref{eq:local_update} and ${\mathbf x}_n^{(k_0,0)}=\hat{\mathbf x}^{(k_0)}$, we have:
\begin{small}\begin{align}\label{eq:lem_termbound2_3}
&\mathbb E\left[\left\|\mathbf x_n^{(k_0,K_n)}-\hat{\mathbf x}^{(k_0)}\right\|_2^2\right]
\leq\left(\gamma^{(k_0)}\right)^2\mathbb E\left[\left\|\sum_{k\in\mathcal K_n}\nabla\hat F\left({\bf x}_n^{(k_0,k-1)}\right)\right\|_2^2\right]
{\overset{(h)}{\leq}}\left(\gamma^{(k_0)}\right)^2K_n^2G^2,\ k_0\in\mathcal K_0,
\end{align}\end{small}where (h) follows by Assumption~\ref{Asump:BoundedSecondMoments} and the inequality $\|\sum_{i=1}^n{\bf z}_i\|^2\leq n\sum_{i=1}^n{\|{\bf z}_i\|^2}$.
By Assumption~\ref{Asump:Quantization} (ii), we can substitute \eqref{eq:lem_termbound2_3} into \eqref{eq:lem_termbound2_2} and show Lemma~\ref{Lem:TermBound2}.
\end{IEEEproof}

By Lemma~\ref{Lem:Smoothness}, Lemma~\ref{Lem:TermBound1}, and Lemma~\ref{Lem:TermBound2}, we have:
\begin{small}\begin{align}\label{eq:Thm_Final_1}
\mathbb E\left[f\left(\hat{\mathbf x}^{(k_0+1)}\right)\right]
\leq&\mathbb E\left[f\left(\hat{\mathbf x}^{(k_0)}\right)\right]-\frac{\gamma^{(k_0)}}{2}\sum_{k\in\mathcal K_{\max}}\frac{N_k}{N}\mathbb E\left[\left\|\nabla f\left(\bar{\mathbf x}^{(k_0, k-1)}\right)\right\|_2^2\right]
+\Bigg(2G^2L^2K_{\max}^2\left(\gamma^{(k_0)}\right)^3\nonumber\\[-5pt]
&+\frac{L\sigma^2\left(\gamma^{(k_0)}\right)^2}{2NB}\Bigg)\sum_{k\in\mathcal K_{\max}}\frac{N_k}{N}+\frac{LG^2\left(\gamma^{(k_0)}\right)^2\sum_{n\in\mathcal N}q_{s_0,s_n}K_n^2}{N},\ k_0\in\mathcal K_0.
\end{align}\end{small}Then, by \eqref{eq:Thm_Final_1} and \eqref{eq:N_k}, we have:
\begin{small}\begin{align}\label{eq:Thm_Final_2}
&\frac{\gamma^{(k_0)}}{2}\sum_{k\in\mathcal K_{\max}}\frac{N_k}{N}\mathbb E\left[\left\|\nabla f\left(\bar{\mathbf x}^{(k_0, k-1)}\right)\right\|_2^2\right]
\leq\mathbb E\left[f\left(\hat{\mathbf x}^{(k_0)}\right)\right]-\mathbb E\left[f\left(\hat{\mathbf x}^{(k_0+1)}\right)\right]\nonumber\\[-5pt]
&+\left(2G^2L^2K_{\max}^2\gamma^{(k_0)}\sum_{n\in\mathcal N}{K_n}
+\frac{L\sigma^2\sum_{n\in\mathcal N}{K_n}}{2NB}+LG^2\sum_{n\in\mathcal N}q_{s_0,s_n}K_n^2\right)\frac{\left(\gamma^{(k_0)}\right)^2}{N},\ k_0\in\mathcal K_0.
\end{align}\end{small}Summing both sides of \eqref{eq:Thm_Final_2} over $k_0\in\mathcal K_0$, we readily show Theorem~\ref{Thm:Convergence}.

\section*{Appendix B: Proof of Lemma~\ref{Lem:Convergence_exp}}\label{Prf:Convergence_exp}
We readily show \eqref{eq:Convergence_exp} by substituting $\sum_{k_0\in\mathcal K_0}\gamma^{(k_0)}{\overset{(a)}{=}}\frac{\gamma_E\left(1-\rho_E^{K_0}\right)}{1-\rho_E}$,
$\sum_{k_0\in\mathcal K_0}\left(\gamma^{(k_0)}\right)^2{\overset{(b)}{=}}\frac{\gamma_E^2\left(1-\rho_E^{2K_0}\right)}{1-\rho_E^2}$,
and $\sum_{k_0\in\mathcal K_0}\left(\gamma^{(k_0)}\right)^3{\overset{(c)}{=}}\frac{\gamma_E^3\left(1-\rho_E^{3K_0}\right)}{1-\rho_E^3}$ into \eqref{eq:Convergence_RHS}, where (a), (b), and (c) follow by summing \eqref{eq:exp_rule} over $k_0\in\mathcal K_0$.
Then, the limit of $C_{E}(\mathbf K,B,\mathbf\Gamma)$ can be easily derived.
Note that for any $K_0\in\mathbb Z_+$ and constant $A_E\in(1,+\infty)$,
there exists a $\rho_E\in(0,1)$ such that $\frac{1}{\sum_{k_0\in\mathcal K_0}\rho_E^{k_0}}\leq\frac{A_E}{K_0}$.
By substituting \eqref{eq:exp_rule} into \eqref{eq:Convergence_RHS} and by $\frac{1}{\sum_{k\in\mathcal K_0}\rho_E^{k_0}}\leq\frac{A_E}{K_0}$,
$\frac{\sum_{k_0\in\mathcal K_0}\rho_E^{3k_0}}{\sum_{k_0\in\mathcal K_0}\rho_E^{k_0}}<1$,
and $\frac{\sum_{k_0\in\mathcal K_0}\rho_E^{2k_0}}{\sum_{k_0\in\mathcal K_0}\rho_E^{k_0}}<1$, we have:
\begin{small}\begin{align}\label{eq:Lem2_prf1}
C_A(\mathbf K,B,\mathbf\Gamma)
&<\frac{c_1A_E}{\gamma_EK_0\sum_{n\in\mathcal N}K_n}
+c_2\gamma_E^2\max_{n\in\mathcal N}K_n^2
+\frac{c_3\gamma_E}{B}+\frac{c_4\gamma_E\sum_{n\in\mathcal N}q_{s_0,s_n}K_n^2}{\sum_{n\in\mathcal N}K_n}.
\end{align}\end{small}By substituting $\mathbf\Gamma$ given by \eqref{eq:exp_rule} with $\gamma_C=\frac{\sqrt{N}}{L\sqrt{K_0\bar K}}$ and $K_n\!=\!\bar K$, $q_{s_0,s_n}\!=\!\frac{1}{N\bar K}$, $n\!\in\!\mathcal N$ with $\bar K\leq\frac{\left(K_0\bar K\right)^{1/4}}{N^{3/4}}$ into \eqref{eq:Lem2_prf1},
%By \eqref{eq:Lem2_prf1} and following the proof of Lemma~\ref{Lem:Convergence_cons},
we have $C_A(\mathbf K,B,\mathbf\Gamma)=\mathcal O\left(K_0^{-\frac{1}{2}}\right)$.
Therefore, we can show Lemma~\ref{Lem:Convergence_exp}.
%By \eqref{eq:exp_rule},
%for any given $K_0\in\mathbb Z_+$, we have:
%\begin{small}\begin{align}
%\sum_{k_0\in\mathcal K_0}\gamma^{(k_0)}
%&=\gamma_E\sum_{k_0\in\mathcal K_0}\rho_E^{k_0-1}
%=\frac{\gamma_E\left(1-\rho_E^{K_0}\right)}{1-\rho_E},\label{eq:sum_exp_1}\\[-8pt]
%\sum_{k_0\in\mathcal K_0}\left(\gamma^{(k_0)}\right)^2
%&=\gamma_E^2\sum_{k_0\in\mathcal K_0}\rho_E^{2k_0-2}
%=\frac{\gamma_E^2\left(1-\rho_E^{2K_0}\right)}{1-\rho_E^2},\label{eq:sum_exp_2}\\[-8pt]
%\sum_{k_0\in\mathcal K_0}\left(\gamma^{(k_0)}\right)^3
%&=\gamma_E^3\sum_{k_0\in\mathcal K_0}\rho_E^{3k_0-3}
%=\frac{\gamma_E^3\left(1-\rho_E^{3K_0}\right)}{1-\rho_E^3}.\label{eq:sum_exp_3}
%\end{align}\end{small}
%Thus, we readily show Lemma~\ref{Lem:Convergence_exp} by substituting \eqref{eq:sum_exp_1}, \eqref{eq:sum_exp_2}, and \eqref{eq:sum_exp_3} into \eqref{eq:Convergence_RHS}.

\section*{Appendix C: Proof of Lemma~\ref{Lem:Convergence_dim}}\label{Prf:Convergence_dim}
By summing \eqref{eq:dim_rule_sp} over $k_0\in\mathcal K_0$,
we have $\sum_{k_0\in\mathcal K_0}\!\!\gamma^{(k_0)}
{\overset{(a)}{>}}\rho_D\gamma_D\int_{1}^{K_0\!+\!1}\!\!\!\!\frac{1}{x+\rho_D}dx
\!=\!\rho_D\gamma_D\ln\left(\frac{K_0\!+\!\rho_D\!+\!1}{\rho_D+1}\right)$,
$\sum_{k_0\in\mathcal K_0}\left(\gamma^{(k_0)}\right)^2
{\overset{(b)}{<}}\left(\frac{\rho_D\gamma_D}{\rho_D+1}\right)^2
+\rho_D^2\gamma_D^2\int_{1}^{K_0}\frac{1}{(x+\rho_D)^2}dx
=\frac{\rho_D^2\gamma_D^2}{(\rho_D+1)^2}+\frac{\rho_D^2\gamma_D^2}{\rho_D+1}
-\frac{\rho_D^2\gamma_D^2}{K_0+\rho_D}$,
and $\sum_{k_0\in\mathcal K_0}\left(\gamma^{(k_0)}\right)^3
\\{\overset{(c)}{<}}\left(\frac{\rho_D\gamma_D}{\rho_D+1}\right)^3
+\rho_D^3\gamma_D^3\int_{1}^{K_0}\frac{1}{(x+\rho_D)^3}dx
=\frac{\rho_D^3\gamma_D^3}{(\rho_D+1)^3}+\frac{\rho_D^3\gamma_D^3}{2(\rho_D+1)^2}
-\frac{\rho_D^3\gamma_D^3}{2(K_0+\rho_D)^2}$,
where (a) is due to $\frac{1}{k_0+\rho_D}>\frac{1}{x+\rho_D},\ x\in\left(k_0,k_0+1\right]$ for all $k_0\in\mathcal K_0$, (b) is due to $\frac{1}{\left(k_0+\rho_D\right)^2}<\frac{1}{\left(x+\rho_D\right)^2},\ x\in\left[k_0-1,k_0\right)$ for all $k_0\in\mathcal K_0\backslash\{1\}$,
and (c) is due to $\frac{1}{\left(k_0+\rho_D\right)^3}<\frac{1}{\left(x+\rho_D\right)^3},\ x\in\left[k_0-1,k_0\right)$ for all $k_0\in\mathcal K_0\backslash\{1\}$, respectively.
Thus, we have
$C_A(\mathbf K,B,\mathbf\Gamma)
<\frac{b_1c_1}{\ln\left(\frac{K_0+\rho_D+1}{\rho_D+1}\right)\sum_{n\in\mathcal N}K_n}
+\frac{c_2\left(b_2-\frac{\rho_D^2\gamma_D^2}{2\left(K_0+\rho_D\right)^2}\right)K_{\max}^2}{\ln\left(\frac{K_0+\rho_D+1}{\rho_D+1}\right)}
\\+\frac{b_3-\frac{\rho_D\gamma_D}{K_0+\rho_D}}{\ln\left(\frac{K_0+\rho_D+1}{\rho_D+1}\right)}
\left(\frac{c_3}{B}+\frac{c_4\sum_{n\in\mathcal N}q_{s_0,s_n}K_n^2}{\sum_{n\in\mathcal N}K_n}\right)
<C_D(\mathbf K,B,\mathbf\Gamma)$.
Then, the limit for $\mathbf\Gamma$ satisfying \eqref{eq:dim_rule} can be easily derived.
Note that for any $K_0\in\mathbb Z_+$ and constant $A_D\in(1,+\infty)$,
there exists a $\rho_D\in\mathbb R_+$ such that $\frac{1}{\sum_{k_0\in\mathcal K_0}\frac{\rho_D}{k_0+\rho_D}}\leq\frac{A_D}{K_0}$.
By substituting \eqref{eq:dim_rule_sp} into \eqref{eq:Convergence_RHS} and $\frac{1}{\sum_{k_0\in\mathcal K_0}\frac{\rho_D}{k_0+\rho_D}}\leq\frac{A_D}{K_0}$,
$\frac{\sum_{k_0\in\mathcal K_0}\left(\frac{\rho_D}{k_0+\rho_D}\right)^3}{\sum_{k_0\in\mathcal K_0}\frac{\rho_D}{k_0+\rho_D}}<1$,
and $\frac{\sum_{k_0\in\mathcal K_0}\left(\frac{\rho_D}{k_0+\rho_D}\right)^2}{\sum_{k_0\in\mathcal K_0}\frac{\rho_D}{k_0+\rho_D}}<1$, we have:
\begin{small}\begin{align}\label{eq:Lem3_prf1}
C_A(\mathbf K,B,\mathbf\Gamma)
&<\frac{c_1A_D}{\gamma_DK_0\sum_{n\in\mathcal N}K_n}
+c_2\gamma_D^2\max_{n\in\mathcal N}K_n^2
+\frac{c_3\gamma_D}{B}
+\frac{c_4\gamma_D\sum_{n\in\mathcal N}q_{s_0,s_n}K_n^2}{\sum_{n\in\mathcal N}K_n},
\end{align}\end{small}By substituting $\mathbf\Gamma$ given by \eqref{eq:dim_rule_sp} with $\gamma_C\!=\!\frac{\sqrt{N}}{L\sqrt{K_0\bar K}}$ and $K_n\!=\!\bar K$, $q_{s_0,s_n}=\frac{1}{N\bar K}$, $n\!\in\!\mathcal N$ with $\bar K\leq\frac{\left(K_0\bar K\right)^{1/4}}{N^{3/4}}$ into \eqref{eq:Lem3_prf1},
%By \eqref{eq:Lem3_prf1} and following the proof of Lemma~\ref{Lem:Convergence_cons},
we have $C_A(\mathbf K,B,\mathbf\Gamma)=\mathcal O\left(K_0^{-\frac{1}{2}}\right)$.
%the order of $C_A(\mathbf K,B,\mathbf\Gamma)$ with $\mathbf\Gamma$ given by \eqref{eq:dim_rule_sp} is readily shown following the proof of Lemma~\ref{Lem:Convergence_cons}.}
Therefore, we can show Lemma~\ref{Lem:Convergence_dim}.

%\section*{Appendix D: Proof of Theorem~\ref{Thm:Equivalent_Prob_cons}}\label{Prf:Equivalent_Prob_cons}
%By introducing auxiliary variables $T_1,T_2>0$, replacing $\max_{n\in\mathcal N}\frac{C_n}{F_n}K_n$ in \eqref{eq:Cons_time} and $\max_{n\in\mathcal N}K_n^2$ in \eqref{eq:Cons_conv} with $m=C$ with $T_1$ and $T_2^2$, respectively, and adding the inequality constraints in \eqref{eq:fix_epi_cons1} and \eqref{eq:fix_epi_cons2},
%we transform Problem~\ref{Prob:fixed_step_size} with $m=C$ into Problem~\ref{Prob:constant_eq}.
%Suppose that there exists $\left(\mathbf K^\dag,B^\dag\right)\neq\left(\mathbf K^*,B^*\right)$, where $\mathbf K^\dag\triangleq\left(K_n^\dag\right)_{n\in\setNbar}$, satisfying all constraints of Problem~\ref{Prob:fixed_step_size} with $m=C$ and $E\left(\mathbf K^\dag,B^\dag\right)<E\left(\mathbf K^*,B^*\right)$.
%Construct $\hat{T_1}=\max_{n\in\mathcal N}\frac{C_n}{F_n}K_n^\dag$ and $\hat T_2=\max_{n\in\mathcal N}K_n^\dag$.
%Obviously, $\left(\mathbf K^\dag,B^\dag,\hat{T_1},\hat{T_2}\right)$ satisfies all constraints of Problem~\ref{Prob:constant_eq} and $E\left(\mathbf K^\dag,B^\dag\right)<E\left(\mathbf K^*,B^*\right)$,
%contradicting with the optimality of $\left(\mathbf K^*,B^*,T_1^*,T_2^*\right)$ for Problem~\ref{Prob:constant_eq}.
%Thus, we can show Theorem~\ref{Thm:Equivalent_Prob_cons}.

\section*{Appendix D: Proof of Theorem~\ref{Thm:Equivalent_Prob_exp}}\label{Prf:Equivalent_Prob_exp}
First, we construct the following problem.
\begin{Prob}[Equivalent Problem of Problem~\ref{Prob:fixed_step_size} with $m=E$]\label{Prob:exp_eq_tmp_tmp}
For any given $\gamma_E\in\left(0,\frac{1}{L}\right]$ and $\rho_E\in(0,1)$,
\begin{small}\begin{align}
&\min_{\substack{\mathbf K\succ\mathbf0,B,T_1,T_2>0}}{\quad}E(\mathbf K,B)\nonumber\\[-5pt]
&\mathrm{s.t.}{\quad}\eqref{eq:fix_epi_cons1},\ \eqref{eq:fix_epi_cons2},\ \eqref{eq:fix_epi_cons_T},\nonumber\\[-5pt]
&\frac{a_1c_1}{\left(\!1\!-\!\rho_E^{K_0}\!\right)\sum_{n\in\mathcal N}K_n}
\!+\!\frac{a_2c_2\left(\!1\!-\!\rho_E^{3K_0}\!\right)\max_{n\in\mathcal N}K_n^2}
{\left(\!1\!-\!\rho_E^{K_0}\!\right)}
\!+\!\frac{a_3\left(\!1\!-\!\rho_E^{2K_0}\!\right)}{\left(\!1\!-\!\rho_E^{K_0}\!\right)}
\!\left(\!\frac{c_3}{B}
\!+\!\frac{c_4\sum_{n\in\mathcal N}q_{s_0,s_n}K_n^2}{\sum_{n\in\mathcal N}K_n}\!\right)\!
\!\leq\!C_{\max}.\label{eq:exp_cons_C_tmp}
\end{align}\end{small}
\end{Prob}
Following the proof of Theorem~\ref{Thm:Equivalent_Prob_cons}, by contradiction,
we can show that $\left(\mathbf K^*,B^*,T_1^*,T_2^*\right)$ is optimal for Probelm~\ref{Prob:exp_eq_tmp_tmp}, where $\left(\mathbf K^*,B^*,T_1^*,T_2^*\right)$ is from an optimal point of
Problem~\ref{Prob:exp_eq}.

Then, it remains to show the equivalence between Problem~\ref{Prob:exp_eq_tmp_tmp} and Problem~\ref{Prob:fixed_step_size} with $m=E$.
We introduce an optimization variable $X_0\in\mathbb R_+$, add the constraint $X_0=\rho_E^{K_0}$, and replace $X_0=\rho_E^{K_0}$ with the constraints in \eqref{eq:exp_cons_X0_1}, \eqref{eq:exp_cons_X0_2}, and \eqref{eq:cons_X_0}.
Consequently, Problem~\ref{Prob:exp_eq_tmp_tmp} is equivalent to Problem~\ref{Prob:fixed_step_size} with $m=E$ and hence $\left(\mathbf K^*, B^*\right)$ is an optimal point of Problem~\ref{Prob:fixed_step_size} with $m=E$.
%Problem~\ref{Prob:exp_eq_tmp_tmp} can be equivalently transformed to Problem~\ref{Prob:fixed_step_size} with $m=E$.
Therefore, we can show Theorem~\ref{Thm:Equivalent_Prob_exp}.

\section*{Appendix E: Proof of Theorem~\ref{Thm:Equivalent_Prob_adj}}\label{Prf:Equivalent_Prob_adj}
First, we construct the following problem.
\begin{Prob}[Equivalent Problem of Problem~\ref{Prob:adj_step_size}]\label{Prob:adj_step_size_tmp}
\begin{small}\begin{align}
\min_{\substack{\mathbf K\succ\mathbf 0,B,\gamma>0}}&{\quad}E(\mathbf K,B)\nonumber\\[-5pt]
\mathrm{s.t.}&{\quad}\eqref{eq:Cons_time},\ \eqref{eq:adj_step_size_epi_cons_gamma},\nonumber\\
&\frac{c_1}{\gamma K_0\sum_{n\in\mathcal N}K_n}
+c_2\gamma^2\max_{n\in\mathcal N}K_n^2
+\frac{c_3\gamma}{B}
+\frac{c_4\gamma\sum_{n\in\mathcal N}q_{s_0,s_n}K_n^2}{\sum_{n\in\mathcal N}K_n}\leq C_{\max}.
\end{align}\end{small}
\end{Prob}
Following the proof of Theorem~\ref{Thm:Equivalent_Prob_cons}, by contradiction,
we can show that $\left(\mathbf K^*,B^*,\gamma^*\right)$ is optimal for Problem~\ref{Prob:adj_step_size_tmp}, where $\left(\mathbf K^*,B^*,\gamma^*\right)$ is from an optimal point of Problem~\ref{Prob:adj_step_size}.

Then, it remains to show the equivalence between Problem~\ref{Prob:adj_step_size_tmp} and Problem~\ref{Prob:adj_step_size}.
Suppose that there exists $\left(\mathbf K^\dag,B^\dag,\mathbf\Gamma^\dag\right)\neq\left(\mathbf K^*,B^*,\gamma^*\mathbf1\right)$ satisfying all constraints of Problem~\ref{Prob:adj_step_size} and $E\left(\mathbf K^\dag,B^\dag\right)<E\left(\mathbf K^*,B^*\right)$.
Construct $\hat\gamma=\frac{\mathbf1^{\text{T}}\mathbf\Gamma^\dag}{K_0^\dag}$.
As $\mathbf\Gamma^\dag$ satisfies \eqref{eq:Cons_gamma}, $\hat\gamma$ satisfies the constraint in \eqref{eq:adj_step_size_epi_cons_gamma}.
By Lemma~\ref{Lem:optimal_step_size}, $C_A(\mathbf K^\dag,B^\dag,\hat\gamma\mathbf1)\leq C_A(\mathbf K^\dag,B^\dag,\mathbf\Gamma^\dag)\leq C_{\max}$,
implying that $\left(\mathbf K^\dag,B^\dag,\hat\gamma\right)$ satisfies all constraints of Problem~\ref{Prob:adj_step_size_tmp} and $E\left(\mathbf K^\dag,B^\dag\right)<E\left(\mathbf K^*,B^*\right)$,
which in turn contradicts with the optimality of $\left(\mathbf K^*,B^*,\gamma^*\right)$ for Problem~\ref{Prob:adj_step_size_tmp}.
Thus, by contradiction, we can show that $\left(\mathbf K^*,B^*,\gamma^*\mathbf1\right)$ is optimal for Problem~\ref{Prob:adj_step_size}.
Therefore, we can show Theorem~\ref{Thm:Equivalent_Prob_adj}.

\normalem
\bibliographystyle{IEEEtran}      %IEEEtran 为给定模板格式定义文件名
\bibliography{IEEEabrv,GenQSGD}                        %ref 为.bib 文件名

% Generated by IEEEtran.bst, version: 1.13 (2008/09/30)
\begin{thebibliography}{10}
\providecommand{\url}[1]{#1}
\csname url@samestyle\endcsname
\providecommand{\newblock}{\relax}
\providecommand{\bibinfo}[2]{#2}
\providecommand{\BIBentrySTDinterwordspacing}{\spaceskip=0pt\relax}
\providecommand{\BIBentryALTinterwordstretchfactor}{4}
\providecommand{\BIBentryALTinterwordspacing}{\spaceskip=\fontdimen2\font plus
\BIBentryALTinterwordstretchfactor\fontdimen3\font minus
  \fontdimen4\font\relax}
\providecommand{\BIBforeignlanguage}[2]{{%
\expandafter\ifx\csname l@#1\endcsname\relax
\typeout{** WARNING: IEEEtran.bst: No hyphenation pattern has been}%
\typeout{** loaded for the language `#1'. Using the pattern for}%
\typeout{** the default language instead.}%
\else
\language=\csname l@#1\endcsname
\fi
#2}}
\providecommand{\BIBdecl}{\relax}
\BIBdecl

\bibitem{GenQSGD-GC}
Y.~Li, Y.~Cui, and V.~Lau, ``Optimization-based {GenQSGD} for federated edge
  learning,'' in \emph{IEEE GLOBECOM}, 2021, pp. 1--6.

\bibitem{EdgeFL}
G.~Zhu, D.~Liu, Y.~Du, C.~You, J.~Zhang, and K.~Huang, ``Toward an intelligent
  edge: Wireless communication meets machine learning,'' \emph{IEEE Commun.
  Mag.}, vol.~58, no.~1, pp. 19--25, 2020.

\bibitem{EdgeIntelligence}
Z.~Zhou, X.~Chen, E.~Li, L.~Zeng, K.~Luo, and J.~Zhang, ``Edge intelligence:
  Paving the last mile of artificial intelligence with edge computing,''
  \emph{Proc. IEEE}, vol. 107, no.~8, pp. 1738--1762, 2019.

\bibitem{PMSGD}
O.~Dekel, R.~Gilad-Bachrach, O.~Shamir, and L.~Xiao, ``Optimal distributed
  online prediction using mini-batches,'' \emph{J. Mach. Learn. Res.}, vol.~13,
  p. 165¨C202, Jan. 2012.

\bibitem{FedAvg}
H.~B. {McMahan}, E.~{Moore}, D.~{Ramage}, S.~{Hampson}, and B.~A. y~{Arcas},
  ``Communication-efficient learning of deep networks from decentralized
  data,'' in \emph{Proc. of AISTATS}, Apr. 2017, pp. 1273--1282.

\bibitem{YuHao}
H.~Yu, S.~Yang, and S.~Zhu, ``Parallel restarted {SGD} with faster convergence
  and less communication: demystifying why model averaging works for deep
  learning,'' in \emph{Proc. of AAAI}, Jan. 2019, pp. 5693--5700.

\bibitem{YCC-SSCA}
\BIBentryALTinterwordspacing
Y.~Cui, Y.~Li, and C.~Ye, ``Sample-based and feature-based federated learning
  for unconstrained and constrained nonconvex optimization via mini-batch
  {SSCA},'' \emph{IEEE Trans. Signal Process. (major revision)}, 2021.
  [Online]. Available: \url{https://arxiv.org/abs/2104.06011}
\BIBentrySTDinterwordspacing

\bibitem{FedPAQ}
A.~Reisizadeh, A.~Mokhtari, H.~Hassani, A.~Jadbabaie, and R.~Pedarsani,
  ``\protect{FedPAQ}: A communication-efficient federated learning method with
  periodic averaging and quantization,'' in \emph{Proc. of AISTATS}, Aug. 2020,
  pp. 2021--2031.

\bibitem{UVeQFed}
N.~Shlezinger, M.~Chen, Y.~C. Eldar, H.~V. Poor, and S.~Cui,
  ``\protect{UVeQFed}: Universal vector quantization for federated learning,''
  \emph{{IEEE} Trans. Signal Process.}, vol.~69, pp. 500--514, 2021.

\bibitem{AlgDes1-1}
S.~{Wang}, T.~{Tuor}, T.~{Salonidis}, K.~K. {Leung}, C.~{Makaya}, T.~{He}, and
  K.~{Chan}, ``Adaptive federated learning in resource constrained edge
  computing systems,'' \emph{{IEEE} J. Sel. Areas Commun.}, vol.~37, no.~6, pp.
  1205--1221, Jun. 2019.

\bibitem{ResAlls2-1}
Z.~Yang, M.~Chen, W.~Saad, C.~S. Hong, and M.~Shikh-Bahaei, ``Energy efficient
  federated learning over wireless communication networks,'' \emph{{IEEE}
  Trans. Wireless Commun.}, vol.~20, no.~3, pp. 1935--1949, 2021.

\bibitem{ResAlls3-2}
T.~T. Vu, D.~T. Ngo, N.~H. Tran, H.~Q. Ngo, M.~N. Dao, and R.~H. Middleton,
  ``Cell-free massive mimo for wireless federated learning,'' \emph{{IEEE}
  Trans. Wireless Commun.}, vol.~19, no.~10, pp. 6377--6392, 2020.

\bibitem{ResAlls4-1}
N.~H. Tran, W.~Bao, A.~Zomaya, M.~N.~H. Nguyen, and C.~S. Hong, ``Federated
  learning over wireless networks: Optimization model design and analysis,'' in
  \emph{Proc. IEEE INFOCOM}, 2019, pp. 1387--1395.

\bibitem{ResAlls1-1}
M.~Chen, Z.~Yang, W.~Saad, C.~Yin, H.~V. Poor, and S.~Cui, ``A joint learning
  and communications framework for federated learning over wireless networks,''
  \emph{{IEEE} Trans. Wireless Commun.}, vol.~20, no.~1, pp. 269--283, 2021.

\bibitem{ResAlls2-2}
Q.~Zeng, Y.~Du, K.~Huang, and K.~K. Leung, ``Energy-efficient radio resource
  allocation for federated edge learning,'' in \emph{Proc. IEEE ICC Workshops},
  2020, pp. 1--6.

\bibitem{ResAlls2-3}
J.~Yao and N.~Ansari, ``Enhancing federated learning in fog-aided iot by cpu
  frequency and wireless power control,'' \emph{{IEEE} Internet Things J.},
  vol.~8, no.~5, pp. 3438--3445, 2021.

\bibitem{ResAlls3-1}
M.~{Chen}, H.~V. {Poor}, W.~{Saad}, and S.~{Cui}, ``Convergence time
  minimization of federated learning over wireless networks,'' in \emph{IEEE
  ICC}, Jun. 2020, pp. 1--6.

\bibitem{MIMO}
M.~Huh, D.~Yu, and S.-H. Park, ``Signal processing optimization for federated
  learning over multi-user mimo uplink channel,'' in \emph{ICOIN}, 2021, pp.
  495--498.

\bibitem{SubgradientMethod}
S.~Boyd and A.~Mutapcic, ``Subgradient methods,'' \emph{lecture notes of
  EE392o, Stanford University, Autumn}, Oct. 2003.

\bibitem{NonlinearProgramming}
D.~Bertsekas, ``Nonlinear programming,'' \emph{Athena Scientific}, vol.~48,
  Jan. 1995.

\bibitem{ExpStepSize}
X.~Li, Z.~Zhuang, and F.~Orabona, ``A second look at exponential and cosine
  step sizes: Simplicity, adaptivity, and performance,'' in \emph{Proc. ICML},
  vol. 139, 2021, pp. 6553--6564.

\bibitem{GIA}
B.~R. Marks and G.~P. Wright, ``A general inner approximation algorithm for
  nonconvex mathematical programs,'' \emph{Operations Research}, vol.~26,
  no.~4, pp. 681--683, Aug. 1978.

\bibitem{CGP}
M.~{Chiang}, C.~W. {Tan}, D.~P. {Palomar}, D.~{O'neill}, and D.~{Julian},
  ``Power control by geometric programming,'' \emph{{IEEE} Trans. Wireless
  Commun.}, vol.~6, no.~7, pp. 2640--2651, Jul. 2007.

\bibitem{TensorFlow}
M.~Abadi, P.~Barham, J.~Chen, Z.~Chen, A.~Davis, J.~Dean, M.~Devin,
  S.~Ghemawat, G.~Irving, M.~Isard, M.~Kudlur, J.~Levenberg, R.~Monga,
  S.~Moore, D.~G. Murray, B.~Steiner, P.~Tucker, V.~Vasudevan, P.~Warden,
  M.~Wicke, Y.~Yu, and X.~Zheng, ``Tensorflow: A system for large-scale machine
  learning,'' in \emph{Proc. OSDI}, 2016, pp. 265--283.

\bibitem{PyTorch}
A.~Paszke, S.~Gross, F.~Massa, A.~Lerer, J.~Bradbury, G.~Chanan, T.~Killeen,
  Z.~Lin, N.~Gimelshein, L.~Antiga, A.~Desmaison, A.~Kopf, E.~Yang, Z.~DeVito,
  M.~Raison, A.~Tejani, S.~Chilamkurthy, B.~Steiner, L.~Fang, J.~Bai, and
  S.~Chintala, ``Pytorch: An imperative style, high-performance deep learning
  library,'' in \emph{Proc. NeurIPS}, vol.~32, 2019.

\bibitem{capacitance}
C.~{You}, K.~{Huang}, H.~{Chae}, and B.~{Kim}, ``Energy-efficient resource
  allocation for mobile-edge computation offloading,'' \emph{{IEEE} Trans.
  Wireless Commun.}, vol.~16, no.~3, pp. 1397--1411, 2017.

\bibitem{CVX}
S.~Boyd and L.~Vandenberghe, \emph{Convex Optimization}.\hskip 1em plus 0.5em
  minus 0.4em\relax Cambridge University Press, 2004.

\bibitem{MM}
Y.~Sun, P.~Babu, and D.~P. Palomar, ``Majorization-minimization algorithms in
  signal processing, communications, and machine learning,'' \emph{{IEEE}
  Trans. Signal Process.}, vol.~65, no.~3, pp. 794--816, 2017.

\bibitem{LargeScaleML}
L.~Bottou, F.~E. Curtis, and J.~Nocedal, ``Optimization methods for large-scale
  machine learning,'' \emph{SIAM Review}, vol.~60, no.~2, pp. 223--311, May.
  2018.

\end{thebibliography}

\end{document}